\def\Figref#1{Figure~\ref{#1}}
\def\Secref#1{Section~\ref{#1}}
\def\eqref#1{equation~\ref{#1}}
\def\Eqref#1{Equation~\ref{#1}}
\def\1{\bm{1}}
\def\rvu{{\mathbf{i}}}
\def\rvr{{\mathbf{r}}}
\def\rvu{{\mathbf{u}}}
\def\rvv{{\mathbf{v}}}
\def\rvx{{\mathbf{x}}}
\def\rvy{{\mathbf{y}}}
\DeclareMathAlphabet{\mathsfit}{\encodingdefault}{\sfdefault}{m}{sl}
\SetMathAlphabet{\mathsfit}{bold}{\encodingdefault}{\sfdefault}{bx}{n}
\def\gB{{\mathcal{B}}}
\def\gC{{\mathcal{C}}}
\def\gD{{\mathcal{D}}}
\def\gL{{\mathcal{L}}}
\def\gN{{\mathcal{N}}}
\def\gU{{\mathcal{U}}}
\def\sR{{\mathbb{R}}}
\newcommand{\E}{\mathbb{E}}
\newcommand{\mat}[1]{\mathbf{#1}}
\newcommand{\zmin}{z_{\textrm{min}}}
\newcommand{\zmax}{z_{\textrm{max}}}
\newenvironment{proofs}{%
  \proof}{\endproof}
\def\onedot{$\mathsurround0pt\ldotp$}
\def\eg{\emph{e.g}\onedot, }
\def\ie{\emph{i.e}\onedot, }
\newtheorem{theorem}{Theorem}
\newtheorem{proposition}{Proposition}
\newtheorem{lemma}{Lemma}
\def\Figref#1{Fig.~\ref{#1}}
\def\Secref#1{Section~\ref{#1}}
\def\eqref#1{equation~\ref{#1}}
\def\Eqref#1{Eq.~(\ref{#1})}
\def\1{\bm{1}}
\def\rvu{{\mathbf{i}}}
\def\rvr{{\mathbf{r}}}
\def\rvu{{\mathbf{u}}}
\def\rvv{{\mathbf{v}}}
\def\rvx{{\mathbf{x}}}
\def\rvy{{\mathbf{y}}}
\DeclareMathAlphabet{\mathsfit}{\encodingdefault}{\sfdefault}{m}{sl}
\SetMathAlphabet{\mathsfit}{bold}{\encodingdefault}{\sfdefault}{bx}{n}
\def\gB{{\mathcal{B}}}
\def\gC{{\mathcal{C}}}
\def\gD{{\mathcal{D}}}
\def\gL{{\mathcal{L}}}
\def\gN{{\mathcal{N}}}
\def\gU{{\mathcal{U}}}
\def\sR{{\mathbb{R}}}
\newcommand\numberthis{\addtocounter{equation}{1}\tag{\theequation}}
\newcommand{\cmark}{\ding{51}}%
\newcommand{\xmark}{\ding{55}}%
\def\adl@drawiv#1#2#3{%
        \hskip.5\tabcolsep
        \xleaders#3{#2.5\@tempdimb #1{1}#2.5\@tempdimb}%
                #2\z@ plus1fil minus1fil\relax
        \hskip.5\tabcolsep}
\newcommand{\cdashlinelr}[1]{%
  \noalign{\vskip\aboverulesep
          \global\let\@dashdrawstore\adl@draw
          \global\let\adl@draw\adl@drawiv}
  \cdashline{#1}
  \noalign{\global\let\adl@draw\@dashdrawstore
          \vskip\belowrulesep}}
\title{Poisson Flow Generative Models}
\author{Yilun Xu\thanks{Equal Contribution.} , Ziming Liu\footnotemark[1] , Max Tegmark, Tommi Jaakkola\\
Massachusetts Institute of Technology\\
\texttt{\{ylxu, zmliu, tegmark\}@mit.edu};\quad \texttt{tommi@csail.mit.edu} \\
}
\begin{document}

\maketitle

\begin{abstract}

We propose a new ``Poisson flow" generative model~(PFGM) that maps a uniform distribution on a high-dimensional hemisphere into any data distribution. We interpret the data points as electrical charges on the $z=0$ hyperplane in a space augmented with an additional dimension $z$, generating a high-dimensional electric field (the gradient of the solution to Poisson equation). We prove that if these charges flow upward along electric field lines, their initial distribution in the $z=0$ plane transforms into a distribution on the hemisphere of radius $r$ that becomes {\it uniform} in the $r \to\infty$ limit. To learn the bijective transformation, we estimate the normalized field {in the augmented space}. For sampling, we devise a backward ODE that is anchored by the physically meaningful additional dimension: the samples hit the (unaugmented) data manifold when the $z$ reaches zero. 
{Experimentally, PFGM achieves current state-of-the-art performance among the normalizing flow models on CIFAR-10, with an Inception score of $9.68$ and a FID score of $2.35$. It also performs on par with the state-of-the-art SDE approaches while offering $10\times $ to $20 \times$ acceleration on image generation tasks. Additionally, PFGM appears more tolerant of estimation errors on a weaker network architecture and robust to the step size in the Euler method.} 
The code is available at \url{https://github.com/Newbeeer/poisson_flow}.
\end{abstract}

\section{Introduction}

Deep generative models are a prominent approach for data generation, and have been used to produce high quality samples in image~\citep{Brock2019LargeSG}, text~\citep{Brown2020LanguageMA} and audio~\citep{Oord2016WaveNetAG}, as well as improve semi-supervised learning~\citep{Kingma2014SemisupervisedLW}, domain generalization~\citep{Li2021SemanticSW} and imitation learning~\citep{Ho2016GenerativeAI}. However, current deep generative models also have limitations, such as unstable training objectives (GANs~\citep{Brock2019LargeSG,Gulrajani2017ImprovedTO,Karras2020TrainingGA}) and low sample quality (VAEs~\citep{Kingma2014AutoEncodingVB}, normalizing flows~\citep{Dinh2017DensityEU}). {New techniques~\cite{Gulrajani2017ImprovedTO,Lee2021ViTGANTG} are introduced to stablize the training of CNN-based or ViT-based GAN models.} Although recent advances on diffusion~\citep{Ho2020DenoisingDP} and scored-based models~\citep{Song2021ScoreBasedGM} achieve comparable sample quality to GAN's without adversarial training, these models have a slow stochastic sampling process. \cite{Song2021ScoreBasedGM} proposes backward ODE samplers (normalizing flow) that speed up the sampling process but these methods have not yet performed on par with the SDE counterparts.


\begin{figure*}[htbp]
    \centering
    \subfigure[]{\label{fig:heart-sphere}\includegraphics[width=0.45\textwidth, trim=2cm 6cm 2cm 6cm]{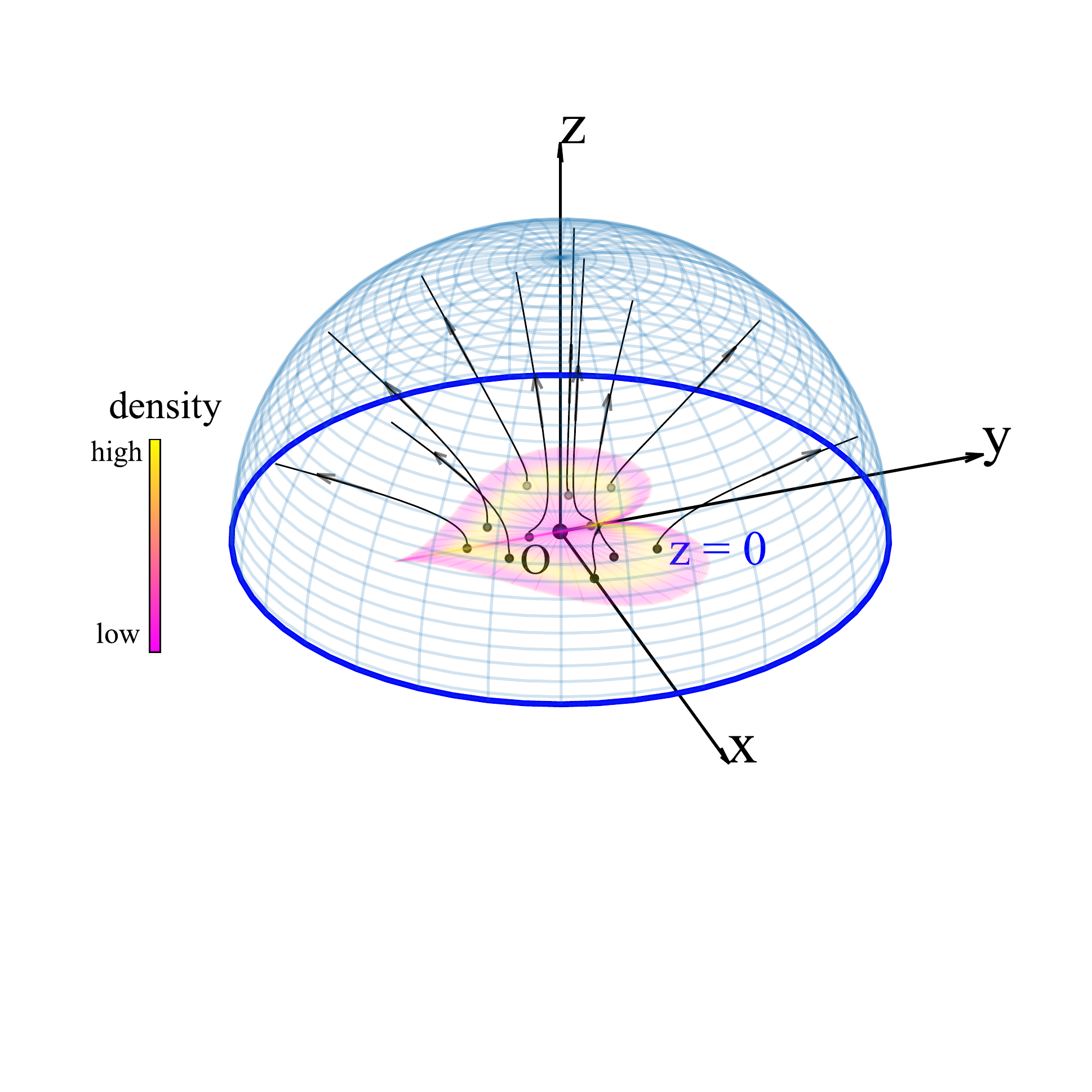}}\hspace{10pt}
    \subfigure[]{\label{fig:heart-ode}\includegraphics[width=0.50\textwidth, trim={0cm 2cm 0cm 2cm}]{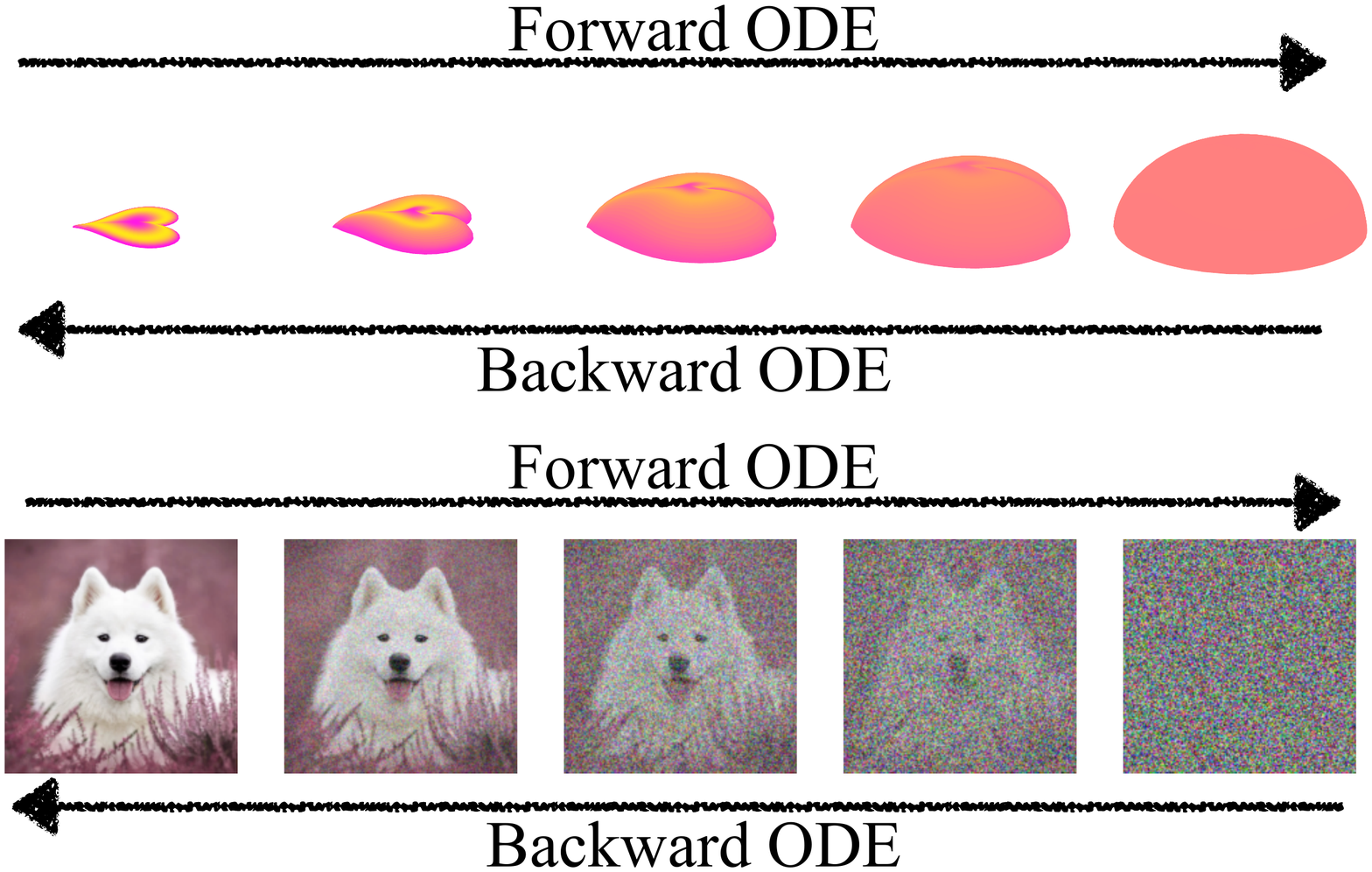}}
    \caption{\textbf{(a)} 3D Poisson field trajectories for a heart-shaped distribution \textbf{(b)} The evolvements of  a distribution~(\textbf{top}) or an (augmented) sample~(\textbf{bottom}) by the forward/backward ODEs pertained to the Poisson field.}
    \label{fig:elect_field}
\end{figure*}

We present a new ``Poisson flow" generative model (\textbf{PFGM}), exploiting a remarkable physics fact that generalizes to $N$ dimensions. As illustrated in \Figref{fig:heart-sphere}, motion in a viscous fluid transforms any planar charge distribution into a uniform angular distribution. Specifically, we interpret $N$-dimensional data points $\rvx$ (images, say) as positive electric charges in the $z=0$ plane of an $N+1$-dimensional space (see \Figref{fig:heart-sphere}) filled with a viscous liquid (say honey). A positive charge with $z>0$ will be repelled by the other charges and move in the direction of their repulsive force, eventually crossing an imaginary hemisphere of radius $r$. We show that, remarkably, if the the original charge distribution is let loose just above $z=0$, this law of motion will cause a {\it uniform} distribution for their hemisphere crossings in the $r\to\infty$ limit.

Our Poisson flow generative process reverses the forward process: we generate a uniform distribution of negative charges on the hemisphere, then track their motion back to the $z=0$ plane, where they will be distributed as the data distribution. {A Poisson flow {can be viewed as} a type of continuous normalizing flows~\cite{Chen2018NeuralOD,Grathwohl2019FFJORDFC,Song2021ScoreBasedGM} in the sense that it continuously maps between an arbitrary distribution and an easily sampled one: in the previous works an $N$-dimensional Gaussian and in PFGM a uniform distribution on an $N$-dimensional hemisphere.}
In practice, we implement the Poisson flow by solving a pair of forward/backward ordinary differential equations~(ODEs) induced by the electric field~(\Figref{fig:heart-ode}) given by the $N$-dimensional version of Coulomb's law (the gradient of the solution to the Poisson's equation with the data as sources). We will interchangeably refer to this gradient as the {\it Poisson field}, since electric fields normally refer to the special case $N=3$.


{The proposed generative model PFGM has a stable training objective and empirically outperforms previously state-of-the-art continuous flow methods~\citep{Song2021DenoisingDI, Song2021ScoreBasedGM}. As a different iterative method, PFGM offers two advantages compared to score-based methods~\citep{Song2020ImprovedTF, Song2021ScoreBasedGM}. First, the ODE process of PFGM achieves faster sampling speeds than the SDE samplers in~\citep{Song2021ScoreBasedGM}. while retaining comparable performance. Second, our backward ODE exhibits better generation performance than the reverse-time ODEs of VE/VP/sub-VP SDEs~\citep{Song2021ScoreBasedGM}, as well as greater stability on a weaker architecture NSCNv2~\cite{Song2020ImprovedTF}.} The rationale for robustness is that the time variables in these ODE baselines are strongly correlated with the sample norms during training time, resulting in a less error-tolerant inference. In contrast, the tie between the anchored variable and the sample norm in PFGM is much weaker.



{Experimentally, we show that PFGM achieves current state-of-the-art performance on CIFAR-10 dataset in the normalizing flow family, with FID/Inception scores of $2.48/9.65$~(w/ DDPM++~\cite{Song2021ScoreBasedGM}) and $2.35/9.68$~(w/ DDPM++ deep~\cite{Song2021ScoreBasedGM}). It performs competitively with current state-of-the-art SDE samplers~\cite{Song2021ScoreBasedGM} and provides $10\times$ to $20\times$ speed up across datasets. Notably, the backward ODE in PFGM is the \textit{only} ODE-based sampler that can produce decent samples on its own on NCSNv2~\cite{Song2020ImprovedTF}, while other ODE baselines fail without corrections. In addition, PFGM demonstrates the robustness to the step size in the Euler method, with a varying number of function evaluations~(NFE) ranging from $10$ to $100$.} We further showcase the utility of the invertible forward/backward ODEs of the Poisson field on likelihood evaluation and image manipulations, and its scalability to higher resolution images on LSUN bedroom $256\times 256$ dataset.

\section{Background and Related works}
\label{section:bg}

{\bf Poisson equation} Let $\mat{x}\in\mathbb{R}^N$ and $\rho(\mat{x}):\mathbb{R}^N\to \mathbb{R}$ be a \textit{source} function. {We assume that the source function has a compact support, $\rho\in \gC^0$ and $N\ge3$}. The Poisson equation is
\begin{equation}\label{eq:poisson}
    \nabla^2\varphi(\mat{x}) = -\rho(\mat{x}),
\end{equation}
where $\varphi(\mat{x}):\mathbb{R}^N\to \mathbb{R}$ is called the \textit{potential function}, and $\nabla^2\equiv \sum_{i=1}^N \frac{\partial^2}{\partial x_i^2}$ is the Laplacian operator. It is usually helpful to define the gradient field $\mat{E}(\mat{x})=-\nabla \varphi(\mat{x})$ and rewrite the Poisson equation as $\nabla\cdot \mat{E}=\rho$, known in physics as Gauss's law~\citep{griffiths2005introduction}. The Poisson equation is widely used in physics, giving rise to Newton's gravitational theory~\citep{goldstein2002classical} and the electrostatic theory~\citep{griffiths2005introduction}, when $\rho(\mat{x})$ is interpreted as mass density or electric charge density, respectively.
$\mat{E}$ is the $N$-dimensional analog of the electric field.
The Poisson equation Eq.~(\ref{eq:poisson}) {(with zero boundary condition at infinity) admits a unique simple integral solution}~\footnote{{Eq.~(\ref{eq:poisson_solution}) is valid for $N\geq 3$. When $N=2$, the Green's function is $G(\mat{x},\mat{y})=-{\rm log}(||\mat{x}-\mat{y}||)/2\pi$. We assume $N\ge 3$ since $N$ is typically large in the relevant applications.
}}:
\begin{equation}\label{eq:poisson_solution}
    \varphi(\mat{x}) = \int\\ G(\mat{x},\mat{y})\rho(\mat{y})d\mat{y},\quad  G(\mat{x},\mat{y}) = \frac{1}{(N-2)S_{N-1}(1)} \frac{1}{||\mat{x}-\mat{y}||^{N-2}},
\end{equation}
{where $S_{N-1}(1)$ is a geometric constant representing the surface area of the unit $(N-1)$-sphere~\footnote{The $N$-sphere with radius $r$ is defined as $\{\mat{x}\in\mathbb{R}^{N+1}, ||\mat{x}||=r\}$}, and $G(\mat{x},\mat{y})$ is the extension of Green's function in $N$-dimensional space~(details in Appendix~\ref{app:green})}. The negative gradient field of $\varphi(\mat{x})$, referred as \textit{Poisson field} of the source $\rho$, is
\begin{equation}\label{eq:E_green}
    \mat{E}(\mat{x}) = -\nabla \varphi(\mat{x}) = -\int\ \nabla_\mat{x}G(\mat{x},\mat{y})\rho(\mat{y})d\mat{y},\quad \nabla_\mat{x} G(\mat{x},\mat{y}) = -\frac{1}{S_{N-1}(1)} \frac{\mat{x}-\mat{y}}{||\mat{x}-\mat{y}||^{N}}.
\end{equation}
Qualitatively, the Poisson field $\mat{E}(\mat{x})$ points away from sources, or equivalently $-\mat{E}(\mat{x})$ points towards sources, as illustrated in~\Figref{fig:elect_field}. It is straightforward to check that when $\rho(\mat{x})\to\delta(\mat{x}-\mat{y})$, we get $\varphi(\mat{x})\to G(\mat{x},\mat{y})$ and $\mat{E}(\mat{x})\to -\nabla_\mat{x} G(\mat{x},\mat{y})$. This implies that $G(\mat{x},\mat{y})$ and $-\nabla_\mat{x} G(\mat{x},\mat{y})$ can be interpreted as the potential function and the gradient field generated by a unit point source, \eg a point charge, located at $\mat{y}$. When $\rho(\mat{x})$ takes general forms but has bounded support, simple asymptotics exist for $||\mat{x}||\gg||\mat{y}||$. To the lowest order, $\mat{E}(\mat{x})=\nabla_\mat{x} G(\mat{x},\mat{y})|_{\mat{y}=\mat{0}}\sim \mat{x}/||\mat{x}||^{N}$ behaves as if it were generated by a unit point source at $\mat{y}=0$. In physics, the power law decay is considered to be long-range  (compared to exponential decay)~\citep{griffiths2005introduction}.

{\bf Particle dynamics in a Poisson field}
{The Poisson field immediately defines a flow model, where the probability distribution evolves according to the gradient flow ${\partial p_t(\mat{x})}/{\partial t}=-\nabla\cdot(p_t(\mat{x})\mat{E}(\mat{x}))$. The gradient flow is a special case of the Fokker-Planck equation~\cite{Risken1984FokkerPlanckE}, where the diffusion coefficient is zero. Intuitively we can think of $p_t(\mat{x})$ as represented by a population of particles. The corresponding (non-diffusion) case of the Itô process is the forward ODE $\frac{d\mat{x}}{dt} = \mat{E}(\mat{x})$. We can interpret the trajectories of the ODE as particles moving according to the Poisson field $\mat{E}(x)$, with initial states drawn from $p_0$.}
{The physical picture of the forward ODE is a charged particle under the influence of electric fields in the overdamped limit (details in Appendix \ref{app:honey}).}

The dynamics is also \textit{rescalable} in the sense that the particle trajectory remains the same for $\frac{d\mat{x}}{dt}=\pm f(\mat{x})\mat{E}(\mat{x})$ for {$f(\mat{x})>0, f(\mat{x})\in \gC^1$}, because the time rescaling $dt\to f(\mat{x}(t))dt$ recovers $\frac{d\mat{x}}{dt}=\pm\mat{E}(\mat{x})$. Note that the dynamics is stiff due to the power law factor in the denominator in \Eqref{eq:E_green}, posing computational challenges. Luckily the rescalablility allows us to rescale $\mat{E}(\mat{x})$ properly to get new ODEs~(formally defined later in Section~\ref{sec:sampling}) that are more amenable for sampling.

{\bf Generative Modeling via ODE} Generative modeling can be done by transforming a base distribution to a data distribution via mappings defined by ODEs. The ODE-based samplers allow for adaptive sampling, exact likelihood evaluation and modeling of continuous-time dynamics~\citep{Chen2018NeuralOD, Song2021ScoreBasedGM}. Previous works broadly fall into two lines. \citep{Chen2018NeuralOD, Chen2019ResidualFF} introduce a continuous-time normalizing flow model that can be trained with maximum likelihood by the instantaneous change-of-variables formula~\cite{Chen2018NeuralOD}. For sampling, they directly integrate the learned invertiable mapping over time. Another work~\citep{Song2021ScoreBasedGM} unifies the scored-based model~\citep{Song2019GenerativeMB, Song2020ImprovedTF} and diffusion model~\cite{Ho2020DenoisingDP} into a general diffusion process, and uses the reverse-time ODE of the diffusion process for sampling. They show that the reverse-time ODE produces high quality samples with improved architecture.

\section{Poisson Flow Generative Models}

In this section, we start with the properties of the Poisson flow in the augmented space and show how to draw samples from the data distribution by following the backward ODE of the Poisson flow~(Section~\ref{sec:augment}). We then discuss how to actually learn a normalized Poisson field from data samples through simulations of the forward ODE~(Section~\ref{sec:learning}) and present an equivalent backward ODE that allows for exponentially decay on $z$~(Section~\ref{sec:sampling}).

\subsection{Augment the data with additional dimension}
\label{sec:augment}

We wish to generate samples $\mat{x}\in\mathbb{R}^N$ from a distribution $p(\mat{x})$ supported on a bounded region. We may set the source $\rho(\mat{x})=p(\mat{x}) \in \gC^0$~\footnote{{A probability distribution $p(\mat{x})$ is a special case of ``charge density" $\rho(x)$ because $p(\mat{x})$ need to be non-negative and integrates to unity. Here we focus on applications to probability distribution of data, which is the objective to be modeled in generative modeling.}} and compute the resulting gradient field $\mat{E}(\mat{x})$ from Eq.~(\ref{eq:E_green}). Since $-\mat{E}(\mat{x})$ points towards sources, the backward ODE ${d\mat{x}}/{dt}=-\mat{E}(\mat{x})$ will take samples close to the sources. One may naively hope that the backward ODE is a generative model that recovers $p(\mat{x})$. Unfortunately, the backward ODE has the problem of mode collapse. 
We illustrate this phenomenon with a 2D uniform disk. The reverse Poisson field $-\mat{E}(\mat{x})$ on the 2D $(x,y)$-plane points towards the center of the disk $O$ (\Figref{fig:augmentation} left), so all particle trajectories (blue lines) will eventually hit $O$. If we instead add an additional dimension $z$ (\Figref{fig:augmentation} right), particles can hit different points on the disk and faithfully recover the data distribution.

\begin{figure}
    \centering
    \subfigure[]{    \label{fig:augmentation}\includegraphics[width=0.49\textwidth, trim=1cm 3cm 0.5cm 3cm]{./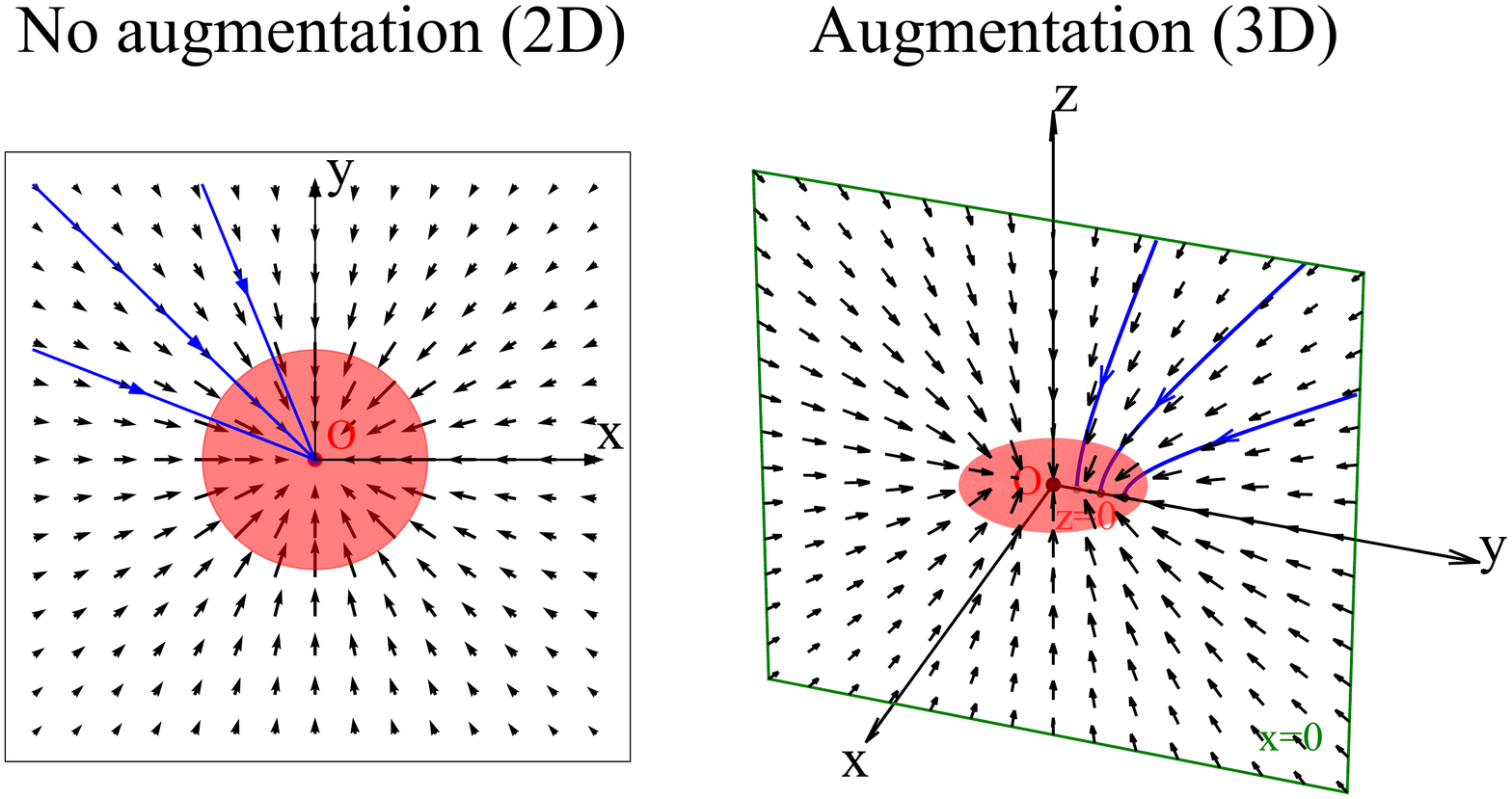}}
    \subfigure[]{\label{fig:theorem1}\includegraphics[width=0.49\textwidth, trim=0cm 3cm 0cm 3cm]{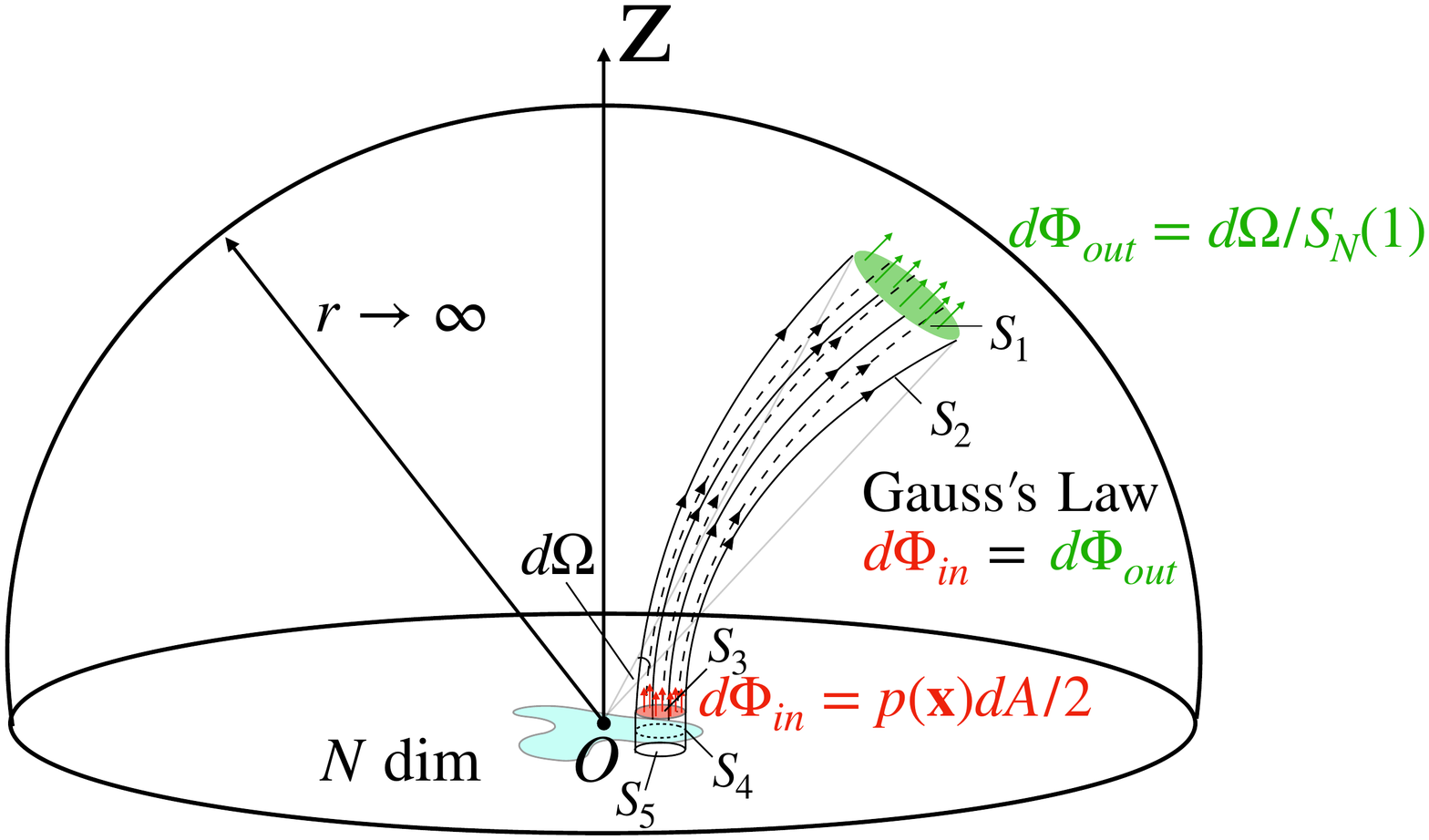}}
        \vspace{-4pt}
    \caption{\textbf{(a)} Poisson field (black arrows) and particle trajectories (blue lines) of a 2D uniform disk (red). \textbf{Left} (no augmentation, 2D): all particles collapse to the disk center. \textbf{Right} (augmentation, 3D): particles hit different points on the disk. \textbf{(b)} Proof idea of Theorem~\ref{thm:sphere}. By Gauss's Law, the outflow flux $d\Phi_{out}$ equals the inflow flux $d\Phi_{in}$. The factor of two in $p(\rvx)dA/2$ is due to the symmetry of Poisson fields in $z<0$ and $z>0$.
    }
    \vspace{-6pt}
\end{figure}

Consequently, instead of solving the Poisson equation $\nabla^2\varphi(\mat{x})=-p(\mat{x})$ in the original data space, we solve the Poisson equation in an augmented space $\tilde{\mat{x}}=(\mat{x},z) \in \sR^{N+1}$ with an additional variable $z\in\mathbb{R}$. We augment the training data $\tilde{\mat{x}}$ in the new space by setting $z=0$ such that $\tilde{\rvx}=(\rvx,0)$. As a consequence, the data distribution in the augmented space is $\tilde{p}(\tilde{\mat{x}})=p(\mat{x})\delta(z)$, where $\delta$ is the Dirac delta function. By \Eqref{eq:E_green}, the Poisson field by solving the new Poisson equation $\nabla^2\varphi(\tilde{\rvx}) = -\tilde{p}(\tilde{\rvx})$ has an analytical form:
\begin{align*}
    \forall \tilde{\mat{x}} \in \sR^{N+1}, \mat{E}(\tilde{\mat{x}}) = -\nabla \varphi(\tilde{\mat{x}}) = \frac{1}{S_{N}(1)}\int \frac{\tilde{\mat{x}}-\tilde{\rvy}}{||\tilde{\mat{x}}-\tilde{\rvy}||^{N+1}}\tilde{p}(\tilde{\rvy}) d \tilde{\rvy}\numberthis \label{eq:gradient-field}
\end{align*}
The associated forward/backward ODEs of the Poisson field are
$
     {d\tilde{\mat{x}}}/{dt} = \mat{E}(\tilde{\mat{x}}),  {d\tilde{\mat{x}}}/{dt} = -\mat{E}(\tilde{\mat{x}})
 $. Intuitively, theses ODEs uniquely define trajectories of particles between the $z=0$ hyperplane and an enclosing hemisphere (cf. \Figref{fig:heart-sphere}). In the following theorem, we show that the backward ODE defines a transformation between the uniform distribution on an infinite hemisphere and the data distribution $\tilde{p}(\tilde{\mat{x}})$ in the $z=0$ plane. We present the formal proof to Appendix~\ref{app:proofs}, illustrated by \Figref{fig:theorem1}. The proof is based on the idea that when the radius of hemisphere $r\to\infty$, the data distribution $\tilde{p}(\tilde{\mat{x}})$ can be effectively viewed as a delta distribution at origin. Consequently, the Poisson field points in the radial direction at $r\to\infty$, perpendicular to $S_N^+(r)$ (Green arrows in \Figref{fig:theorem1}).

\begin{restatable}{theorem}{sphere}
\label{thm:sphere} Suppose particles are sampled from a uniform distribution on the upper ($z>0$) half of the sphere of radius $r$ and evolved by the backward ODE $\frac{d\mat{\tilde{x}}}{dt}=-\mat{E}(\mat{\tilde{x}})$ until they reach the $z=0$ hyperplane, where the Poisson field $\mat{E}(\mat{\tilde{x}})$ is generated by the source $\tilde{p}(\tilde{\mat{x}})$. In the $r\to \infty$ limit, {under some mild conditions detailed in Appendix~\ref{app:proofs}}, this process generates a particle distribution $\tilde{p}(\tilde{\mat{x}})$, 
i.e., a distribution $p(\mat{x})$ in the $z=0$ hyperplane. 
\vspace{-5pt}
\end{restatable}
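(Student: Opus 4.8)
The plan is to reduce the statement to the \emph{forward} ODE plus a flux computation. Since the backward flow $\tilde{\mat{x}}\mapsto-\mat{E}(\tilde{\mat{x}})$ is exactly the time-reversal of the forward flow $\tilde{\mat{x}}\mapsto\mat{E}(\tilde{\mat{x}})$, the trajectory that the backward ODE runs from a point $\mat{q}\in S_N^+(r)$ down to $z=0$ is the reverse of the forward trajectory that starts just above $z=0$ and crosses $S_N^+(r)$ at $\mat{q}$. Under mild regularity of $\mat{E}$ (trajectories escape to infinity and cross each large hemisphere once; the precise conditions are deferred to Appendix~\ref{app:proofs}), this defines a bijection $\Psi_r$ from the $z=0$ hyperplane onto $S_N^+(r)$, and the backward ODE realizes $\Psi_r^{-1}$. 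Because pushforward by a bijection is an isometry for total variation, and since $(\Psi_r)_*p$ is by construction the forward image of the data distribution with $(\Psi_r^{-1})_*(\Psi_r)_*p=p$, we get $\|(\Psi_r^{-1})_*\,\mathrm{Unif}(S_N^+(r))-p\|_{\mathrm{TV}}=\|\mathrm{Unif}(S_N^+(r))-(\Psi_r)_*p\|_{\mathrm{TV}}$. So it suffices to show the forward ODE transports $\tilde{p}$ to a distribution on $S_N^+(r)$ whose total variation distance to the uniform distribution vanishes as $r\to\infty$.

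The second step computes the surface density of $(\Psi_r)_*p$ on $S_N^+(r)$ via Gauss's law, as in \Figref{fig:theorem1}. Fix a patch $A\subseteq S_N^+(r)$ and let $V$ be the region bounded above by $A$, on the sides by the field lines of $\mat{E}$ through $\partial A$, and below by the footprint $R=\Psi_r^{-1}(A)$ in the plane $z=0^+$. Since $\nabla\cdot\mat{E}=\tilde{p}$ vanishes in the open upper half-space and the side walls carry no flux (they are tangent to $\mat{E}$), Gauss's law gives $\int_A\mat{E}\cdot\hat{\mat{n}}\,dA=\int_R E_z(\mat{x},0^+)\,d\mat{x}$. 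The jump condition for the surface charge $p(\mat{x})\delta(z)$ together with the $z\mapsto-z$ symmetry of $\tilde{p}$ forces $E_z(\mat{x},0^+)=p(\mat{x})/2$, hence $\int_A\mat{E}\cdot\hat{\mat{n}}\,dA=\tfrac12\int_R p(\mat{x})\,d\mat{x}=\tfrac12((\Psi_r)_*p)(A)$. Therefore $(\Psi_r)_*p$ has density $2\,\mat{E}\cdot\hat{\mat{n}}$ with respect to the surface measure on $S_N^+(r)$ (a genuine probability density, as the total upward flux is half the unit charge).

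It remains to show this density converges uniformly to that of the uniform distribution, $2/(S_N(1)r^N)$, i.e. that $\mat{E}(\tilde{\mat{x}})\cdot\hat{\mat{n}}(\tilde{\mat{x}})=\tfrac{1}{S_N(1)}\,r^{-N}(1+o(1))$ uniformly over $\tilde{\mat{x}}\in S_N^+(r)$. This is the far-field estimate: since $p$ has compact support and unit mass, expanding the kernel of \Eqref{eq:gradient-field} for $\|\tilde{\mat{x}}\|\gg\|\tilde{\mat{y}}\|$ yields $\mat{E}(\tilde{\mat{x}})=\tfrac{1}{S_N(1)}\tilde{\mat{x}}/\|\tilde{\mat{x}}\|^{N+1}+O(\|\tilde{\mat{x}}\|^{-(N+1)})$, and on $S_N^+(r)$ the outward normal is radial, so the leading term contracts with $\hat{\mat{n}}$ to $\tfrac{1}{S_N(1)r^N}$ with relative error $O(1/r)$ uniformly. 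Integrating the pointwise density gap over $S_N^+(r)$ (whose area is $\tfrac12 S_N(1)r^N$) then bounds the total variation distance by $\sup_{S_N^+(r)}|o(1)|\to0$, which with the first paragraph proves the claim. I expect the main obstacle to be the geometric input of the first two steps — verifying that the field lines of $\mat{E}$ actually foliate the slab between $z=0$ and $S_N^+(r)$, so that $\Psi_r$ is a well-defined bijection and the flux-tube region $V$ is legitimate — which is precisely where the ``mild conditions'' of Appendix~\ref{app:proofs} are needed; the far-field estimate is routine once compact support is invoked.
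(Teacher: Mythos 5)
Your proof is correct and follows essentially the same route as the paper's: a flux-tube argument sealed by Gauss's law on the sides, a pillbox/jump-condition with the $z\mapsto -z$ symmetry giving $E_z(\mat{x},0^+)=p(\mat{x})/2$, and the far-field (monopole) expansion showing the surface density on $S_N^+(r)$ tends to uniform. The paper phrases this in infinitesimal $d\Omega/dA$ language and concludes by change of variables, whereas you make the same computation quantitative by identifying the pushforward density as $2\,\mat{E}\cdot\hat{\mat{n}}$ and bounding the total-variation gap; this is a cleaner packaging of the identical physics rather than a different method.
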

\begin{proofs} Suppose the flux of the backward ODE connects a solid angle $d\Omega$ (on $S_N^+(r)$) with an area $dA$ (on ${\rm supp}(\tilde{p}(\tilde{\mat{x}})$). According to Gauss's law, the outflow flux $d\Phi_{out}=d\Omega/S_N(1)$ on the hemisphere~(Green arrows in \Figref{fig:theorem1}) equals the inflow flux $d\Phi_{in}=p(\mat{x})dA/2$ on ${\rm supp}(\tilde{p}(\tilde{\mat{x}}))$~(Red arrows in \Figref{fig:theorem1}). $d\Phi_{in}=d\Phi_{out}$ gives $d\Omega/dA=p(\rvx)S_N(1)/2\propto p(\rvx)$. Together, by change-of-variable, we conclude that the final distribution in the $z=0$ hyperplane is $ p(\rvx)$.
\end{proofs}
The theorem states that starting from an infinite hemisphere, one can recover the data distribution $\tilde{p}$ by following the inverse Poisson field $-\mat{E}(\mat{\tilde{x}})$. {We defer the formal proof and technical assumptions of the theorem to Appendix~\ref{app:proofs}.} The property allows generative modeling by following the Poisson flow of $\nabla^2\varphi(\tilde{\rvx}) = -\tilde{p}(\tilde{\rvx})$.
\subsection{Learning the normalized Poisson Field}
\label{sec:learning}
Given a set of training data $\gD = \{\mat{x}_i\}_{i=1}^n$ i.i.d sampled from the data distribution $p(\mat{x})$, we define the empirical version of the Poisson field~(\Eqref{eq:gradient-field}) as follows:
$$
    \hat{\mat{E}}(\tilde{\mat{x}}) = c(\tilde{\mat{x}})\sum_{i=1}^n \frac{\tilde{\mat{x}}-\tilde{\mat{x}}_i}{||\tilde{\mat{x}}-\tilde{\mat{x}}_i||^{N+1}}
$$
where the gradient field is calculated on $n$ \textit{augmented} datapoints $\{\tilde{\mat{x}}_i=(\rvx_i,0)\}_{i=1}^n$, and $c(\tilde{\mat{x}})=1/\sum_{i=1}^n \frac{1}{||\tilde{\mat{x}}-\tilde{\mat{x}}_i||^{N+1}} $ is the multiplier for numerical stability. 
We further normalize the field to resolve the variations in the magnitude of the norm $\parallel  \hat{\mat{E}}(\tilde{\mat{x}}) \parallel_2$, and fit the neural network to the more amenable negative normalized field $\rvv(\tilde{\rvx}) = -\sqrt{N}{\hat{\mat{E}}(\tilde{\mat{x}})}/{\parallel \hat{\mat{E}}(\tilde{\mat{x}}) \parallel_2}$. The Poisson field is rescalable~(cf. Section~\ref{section:bg}) and thus trajectories of its forward/backward ODEs are invariant under normalization. We denote the empirical field calculated on batch data $\gB$ by $\hat{\mat{E}}_\gB$ and the negative normalized field as $\rvv_\gB(\tilde{\rvx}) = -\sqrt{N}{\hat{\mat{E}}_\gB(\tilde{\mat{x}})}/{\parallel \hat{\mat{E}}_\gB(\tilde{\mat{x}})\parallel_2} $. 
    
{Similar to the scored-based models, we sample points inside the hemisphere by perturbing the augmented training data. Given a training point $\rvx \in \gD$, we add noise to its augmented version $\{\tilde{\mat{x}}_i=(\rvx_i,0)\}_{i=1}^n$ to construct the perturbed point $(\rvy,z)$:
\begin{align*}
    \rvy = \rvx+\parallel \epsilon_\rvx \parallel(1+\tau)^m\rvu,\quad z=|\epsilon_z| (1+\tau)^m \numberthis \label{eq:geo-ode}
\end{align*}
where $\epsilon = (\epsilon_\rvx, \epsilon_z) \sim \gN(0, \sigma^2I_{N+1\times N+1})$, $\rvu\sim \gU(S_N(1))$ and $m\sim \gU[0,M]$. The upper limit $M$, standard deviation $\sigma$ and $\tau$ are hyper-parameters. With fixed $\epsilon$ and $\rvu$, the added noise increases exponentially with $m$. The rationale behind the design is that points farther away from the data support play a less important role in generative modeling, sharing a similar spirit with the choice of noisy scales in score-based models~\citep{ Song2020ImprovedTF,Song2021ScoreBasedGM}.}

In practice, we sample the points by perturbing a mini-batch data $\gB=\{\rvx_i\}_{i=1}^{|\gB|}$ in each iteration. We uniformly sample the power $m$ in $[0,M]$ for each datapoint. We select a large $M$ (typically around $300$) to ensure the perturbed points can reach a large enough hemisphere. We use a larger batch $\gB_L$ for the estimation of normalized field since the empirical normalized field is biased, which empirically gives better results. Denoting the set of perturbed points as $\{\tilde{\rvy}_i\}_{i=1}^{|\gB|}$, we train the neural network $f_\theta$ on these points to estimate the negative normalized field by minimizing the following loss:
\begin{align*}
    \gL(\theta) = \frac{1}{|\gB|}\sum_{i=1}^{|\gB|} \parallel f_\theta(\tilde{\rvy}_i) - \rvv_{\gB_L}(\tilde{\rvy}_i) \parallel_2^2
\end{align*}
We summarize the training process in Algorithm~\ref{alg:pf}. In practice, we add a small constant $\gamma$ to the denominator of the normalized field to overcome the numerical issue when $\exists i, ||\tilde{\mat{x}}-\tilde{\mat{x}}_i||\approx 0$.
\begin{algorithm}[htb]
  \caption{Learning the normalized Poisson Field}
  \label{alg:pf}
\begin{algorithmic}
  \STATE {\bfseries Input:} Training iteration $T$, Initial model $f_\theta$, dataset $\gD$, constant $\gamma$, learning rate $\eta$.
  \FOR{$t=1 \dots T$}
  \STATE Sample a large batch $\gB_L$ from $\gD$ and subsample a batch of datapoints $\gB = \{\rvx_i\}_{i=1}^{|\gB|}$ from $\gB_L$ 
      \STATE Simulate the ODE: $\{\tilde{\rvy}_i= $\textcolor{orange}{ \hspace{1.5pt}perturb$(\rvx_i)$} $\}_{i=1}^{|\gB|}$ 
      \STATE Calculate the normalized field by $\gB_L$: $\rvv_{\gB_L}(\tilde{\rvy}_i) = -\sqrt{N}\hat{\mat{E}}_{\gB_L}(\tilde{\rvy}_i)/(\parallel \hat{\mat{E}}_{\gB_L}(\tilde{\rvy}_i) \parallel_2+\gamma), \forall i$ 
      \STATE Calculate the loss: $\gL(\theta) = \frac{1}{|\gB|}\sum_{i=1}^{|\gB|} \parallel f_\theta(\tilde{\rvy}_i) - \rvv_{\gB_L}(\tilde{\rvy}_i) \parallel_2^2$
      \STATE Update the model parameter: $\theta = \theta - \eta \nabla \gL(\theta)$
  \ENDFOR
  \RETURN $f_\theta$
\end{algorithmic}
\end{algorithm}
\begin{algorithm}[htb]
  \caption{\textcolor{orange}{perturb$(\rvx)$}}
  \label{alg:ode}
\begin{algorithmic}
  \STATE Sample the power $m \sim \gU[0,M]$
  \STATE Sample the initial noise $(\epsilon_\rvx, \epsilon_z) \sim \gN(0, \sigma^2I_{(N+1)\times (N+1)})$
  \STATE Uniformly sample the vector from the unit ball $\rvu\sim \gU(S_N(1))$
    \STATE  Construct training point $\rvy = \rvx + \parallel \epsilon_\rvx \parallel(1+\tau)^m\rvu$, $z=|\epsilon_z| (1+\tau)^m$
  \RETURN $\tilde{\rvy}=(\rvy, z)$
\end{algorithmic}
\end{algorithm}

\subsection{Backward ODE anchored by the additional dimension}
\label{sec:sampling}
After estimating the normalized field $\rvv$, we can sample from the data distribution by the backward ODE $d \tilde{\rvx} =  \rvv(\tilde{\rvx}) dt$. Nevertheless, the boundary condition of the above ODE is unclear: the starting and terminal time $t$ of the ODE are both unknown. To remedy the issue, we propose an equivalent backward ODE in which $\rvx$ evolves with the augmented variable $z$:
\begin{align*}
    d(\rvx,z) = (\frac{d \rvx}{dt}\frac{d t}{dz}dz,dz) = (\rvv(\tilde{\rvx})_\rvx\rvv(\tilde{\rvx})_z^{-1}, 1) dz
\end{align*}
where $\rvv(\tilde{\rvx})_\rvx,\rvv(\tilde{\rvx})_z$ are the corresponding components of $\rvx,z$ in vector $\rvv(\tilde{\rvx})$. In the new ODE, we replace the time variable $t$ with the physically meaningful variable $z$, permitting explicit starting and terminal conditions: when $z=0$, we arrive at the data distribution and we can freely choose a large $\zmax$ as the starting point in the backward ODE. The backward ODE is compatible with general-purpose ODE solvers, \eg RK45 method~\citep{Lawrence1986SomePR} and forward Euler method. The popular black-box ODE solvers, such as the one in Scipy library~\citep{Virtanen2020SciPy1F}, typically use a common starting time for the same batch of samples. Since the distribution on the $z=\zmax$ hyperplane is no longer uniform, we derive the prior distribution by radially projecting uniform distribution on the hemisphere with radius $r=\zmax$ to the $z=\zmax$ hyperplane:
\begin{align*}
    p_{\textrm{prior}}(\rvx) =\frac{2\zmax^{N+1}}{S_N(\zmax)(\parallel \rvx \parallel_2^2 + \zmax^2)^{\frac{N+1}{2}}}=\frac{2\zmax}{S_N(1)(\parallel \rvx \parallel_2^2 + \zmax^2)^{\frac{N+1}{2}}}
\end{align*}
where $S_N(r)$ is the surface area of $N$-sphere with radius $r$. The reason behind the radial projection is that the Poisson field points in the radial direction at $r\to \infty$. The new backward ODE also defines a bijective transformation between $p_{\textrm{prior}}(\rvx)$ on the infinite hyperplane~($\zmax \to \infty$) and the data distribution $\tilde{p}(\tilde{\mat{x}})$, analogous to Theorem~\ref{thm:sphere}. In order to sample from $p_{\textrm{prior}}(\rvx)$, it is suffice to sample the norm~(radius) from the distribution:
$
    p_{\textrm{radius}}(\parallel \rvx\parallel_2) \propto {\parallel \rvx \parallel_2^{N-1}}/{(\parallel \rvx \parallel_2^2+\zmax^2)^{\frac{N+1}{2}}}
$
and then uniformly sample its angle. We provide detailed derivations and practical sampling procedure in Appendix \ref{app:prior_distribution}. We further achieve exponentially decay on the $z$ dimension by introducing a new variable $t'$:
\begin{align*}
{\rm[Backward\ ODE]} \quad {d(\rvx,z)} &= (\rvv(\tilde{\rvx})_\rvx\rvv(\tilde{\rvx})_z^{-1}z, z){dt'} \numberthis \label{eq:backode}
\end{align*}
The $z$ component in the backward ODE, \ie $dz = z dt'$, can be solved by $z=e^{t'}$. Since $z$ reaches zero as $t' \to -\infty$, we instead choose a tiny positive number $\zmin$ as the terminal condition. The corresponding starting/terminal time of the variable $t'$ are $\log \zmax/\log \zmin$ respectively. Empirically, this simple change of variable leads to $2\times$ faster sampling with almost no harm to the sample quality. In addition, we substitue the predicted $\rvv(\tilde{\rvx})_z$ with a more accurate one when $z$ is small~(Appendix~\ref{app:sub}). We defer more details of the simulation of backward ODE to Appendix~\ref{app:sampling}.

\section{Generative Modeling via the Backward ODE}

{In this section, we demonstrate the effectiveness of the backward ODE associated with PFGM on image generation tasks. In Section~\ref{sec:fids}, we show that PFGM achieves currently best in class performance in the normalizing flow family. In comparison to the existing state-of-the-art SDE or MCMC approaches, PFGM exhibits $10 \times$ or $20 \times$ acceleration while maintaining competitive or higher generation quality. Meanwhile, unlike existing ODE baselines that heavily rely on corrector to generate decent samples on weaker architectures, PFGM exhibits greater stability against error~(Section~\ref{sec:robust}).} Finally, we show that PFGM is robust to the step size in the Euler method~(Section~\ref{sec:adapt}), and its associated ODE allows for likelihood evaluation and image manipulation by editing the latent space~(Section~\ref{sec:utility}).

\subsection{Efficient image generation by PFGM}

\label{sec:fids}
\begin{table*}[]
    \small
    \centering
    \caption{CIFAR-10 sample quality~(FID, Inception) and number of function evaluation~(NFE).}
    \begin{tabular}{l c c c c}
    \toprule
        & Invertible? & Inception $\uparrow$  &FID $\downarrow$ & NFE $\downarrow$\\
         \midrule
        PixelCNN~\cite{Oord2016ConditionalIG} &\textcolor{red}{\xmark} & 4.60 &65.9 & 1024\\
        IGEBM~\citep{Du2019ImplicitGA}&\textcolor{red}{\xmark} & 6.02 &40.6 & 60\\
        ViTGAN~\cite{Lee2021ViTGANTG} & \textcolor{red}{\xmark} & $9.30$ &$6.66$& $1$\\
        StyleGAN2-ADA~\citep{Karras2020TrainingGA} & \textcolor{red}{\xmark} & $9.83$ & $2.92$ & $1$\\
      StyleGAN2-ADA~(cond.)~\citep{Karras2020TrainingGA}& \textcolor{red}{\xmark} & $10.14$ & $2.42$ & $1$\\
        NCSN~\citep{Song2019GenerativeMB}& \textcolor{red}{\xmark} & ${8.87 }$ & $25.32$ & $1001$\\
        NCSNv2~\citep{Song2020ImprovedTF}& \textcolor{red}{\xmark} & ${8.40 }$ & $10.87$ & $1161$\\
        DDPM~\citep{Ho2020DenoisingDP}& \textcolor{red}{\xmark}& $9.46$ & $3.17$&$1000$\\
        NCSN++ VE-SDE~\citep{Song2021ScoreBasedGM}& \textcolor{red}{\xmark}&  $9.83$&$2.38$ &$2000$\\
        NCSN++ deep VE-SDE~\citep{Song2021ScoreBasedGM}& \textcolor{red}{\xmark}&  $9.89$&$2.20$ &$2000$\\
        Glow~\cite{Kingma2018GlowGF}& \textcolor{green}{\cmark}&3.92&$48.9$& $1$\\
         DDIM, T=50~\citep{Song2021DenoisingDI} & \textcolor{green}{\cmark}&-&$4.67$& $50$\\
         DDIM, T=100~\citep{Song2021DenoisingDI} & \textcolor{green}{\cmark}&-&$4.16$& $100$\\
         NCSN++ VE-ODE~\citep{Song2021ScoreBasedGM}& \textcolor{green}{\cmark}&  $9.34$& $5.29$ & $194$\\
         NCSN++ deep VE-ODE~\citep{Song2021ScoreBasedGM}& \textcolor{green}{\cmark}&  $9.17$& $7.66$& $194$\\
        \midrule
        \textit{\textbf{DDPM++ backbone}}\\
        \midrule
        VP-SDE~\citep{Song2021ScoreBasedGM}& \textcolor{red}{\xmark} & $9.58$ &  $2.55$ &  $1000$\\
        sub-VP-SDE~\citep{Song2021ScoreBasedGM}& \textcolor{red}{\xmark} & $9.56$ & $2.61$ & $1000$\\
           \cdashlinelr{1-5}
         VP-ODE~\citep{Song2021ScoreBasedGM} & \textcolor{green}{\cmark}& $9.46$ &  $2.97$ &  $134$\\
         sub-VP-ODE~\citep{Song2021ScoreBasedGM}& \textcolor{green}{\cmark} & $9.30$ & $3.16$ & $146$\\
         PFGM~(ours) & \textcolor{green}{\cmark}& $\bm{9.65}$ & $\bm{2.48}$& $\bm{104}$\\
        \midrule
        \textit{\textbf{DDPM++ deep backbone}}\\
        \midrule
        VP-SDE~\citep{Song2021ScoreBasedGM}& \textcolor{red}{\xmark} & $9.68$ &  $2.41$ &  $1000$\\
        sub-VP-SDE~\citep{Song2021ScoreBasedGM}& \textcolor{red}{\xmark} & $9.57$ & $2.41$ & $1000$\\
        \cdashlinelr{1-5}
         VP-ODE~\citep{Song2021ScoreBasedGM} & \textcolor{green}{\cmark}&$9.47$& $2.86$& $134$\\
         sub-VP-ODE~\citep{Song2021ScoreBasedGM} & \textcolor{green}{\cmark}& $9.40$ & $3.05$&$146$\\
         PFGM~(ours)& \textcolor{green}{\cmark} & $\bm{9.68}$ & $\bm{2.35}$ & $\bm{110}$\\
         \bottomrule
    \end{tabular}
    \label{tab:cifar}
    \vspace{-2.5pt}
\end{table*}

\paragraph{Setup}{For image generation tasks, we consider the CIFAR-10~\citep{cifar}, CelebA $64\times 64$~\citep{Yang2015FromFP} and LSUN bedroom $256\times 256$~\citep{Yu2015LSUNCO}. Following \citep{Song2020ImprovedTF}, we first center-crop the CelebA images and then resize them to $64\times 64$. We choose $M=291~(\textrm{CIFAR-10 and CelebA})/356~(\textrm{LSUN bedroom})$, $\sigma=0.01$ and $\tau=0.03$ for the perturbation Algorithm~\ref{alg:ode}, and $\zmin=1e-3$, $\zmax=40~(\textrm{CIFAR-10})/60~(\textrm{CelebA $64^2$})/100~(\textrm{LSUN bedroom})$ for the backward ODE. We further clip the norms of initial samples into $(0, 3000)$ for CIFAR-10, $(0,6000)$ for CelebA $64^2$ and $(0, 30000)$ for LSUN bedroom. We adopt the DDPM++ and DDPM++ deep architectures~\cite{Song2021ScoreBasedGM} as our backbones. We add the scalar $z$ (resp. predicted direction on $z$) as input (resp. output) to accommodate the additional dimension. We take the same set of hyper-parameters, such as batch size, learning rate and training iterations from \cite{Song2021ScoreBasedGM}. We provide more training details in Appendix~\ref{app:training}, and discuss how to set these hyper-parameters for general datasets in \ref{app:hyper-train} and \ref{app:sample-add}.}

\paragraph{Baselines}We compare PFGM to modern autoregressive model~\citep{Oord2016ConditionalIG}, GAN~\citep{Karras2020TrainingGA,Lee2021ViTGANTG}, normalizing flow~\cite{Kingma2018GlowGF} and EBM~\citep{Du2019ImplicitGA}. We also compare with variants of score-based models such as DDIM~\cite{Song2021DenoisingDI} and current state-of-the-art SDE/ODE methods~\citep{Song2021ScoreBasedGM}. We denote the methods that use forward-time SDEs in \cite{Song2021ScoreBasedGM} such as Variance Exploding~(\textbf{VE}) SDE/Variance Preserving~(\textbf{VP}) SDE/ sub-Variance Preserving~(\textbf{sub-VP}), and the corresponding backward SDE/ODE, as \textbf{A-B}, where A $\in $ \{VE, VP, sub-VP\} and B $\in $ \{SDE, ODE\}. We follow the model selection protocol in \cite{Song2021ScoreBasedGM}, which selects the checkpoint with the smallest FID score over the course of training every 50k iterations.

\paragraph{Numerical Solvers}{The backward ODE~(\Eqref{eq:backode}) is compatible with any general purpose ODE solver. In our experiments, the default solver of ODEs is the black box solver in the Scipy library~\citep{Virtanen2020SciPy1F} with the RK45~\citep{Dormand1980AFO} method~(\textbf{RK45}), unless otherwise specified. For VE/VP/subVP-SDEs, we use the predictor-corrector~(\textbf{PC}) sampler introduced in \cite{Song2021ScoreBasedGM}. For VP/sub-VP-SDEs, we apply the predictor-only sampler, because its performance is on par with the PC sampler while requiring half computation.}

\paragraph{Results}{For quantitative evaluation on CIFAR-10, we report the Inception~\citep{Salimans2016ImprovedTF}~(higher is better) and FID~\citep{Heusel2017GANsTB} scores~(lower is better) in Table~\ref{tab:cifar}. We also include our preliminary experimental results on a weaker architecture NCSNv2~\cite{Song2020ImprovedTF} in Appendix~\ref{app:ncsnv2}. We measure the inference speed by the average NFE (number of function evaluation). We also explicitly indicate which methods belong to the invertible flow family.}

{Our main findings are: \textbf{(1)} \textbf{PFGM achieves the best Inception scores and FID scores among the normalizing flow models.} Specifically, PFGM obtains an Inception score of $9.68$ and a FID score of $2.48$ using the DDPM++ deep architecture. To our best knowledge, these are the highest FID and Inception scores by flow models on CIFAR-10. \textbf{(2)} \textbf{PFGM achieves a $10\times \sim 20 \times$ faster inference speed than the SDE methods using the similar architectures, while retaining comparable sample quality.} As shown in Table~\ref{tab:cifar}, PFGM requires NFEs of 110 whereas the SDE methods typically use $1000\sim 2000$ inference steps. PFGM outperforms all the baselines on DDPM++ in all metrics. In addition, PFGM generally samples faster than other ODE baselines with the same RK45 solver. \textbf{(3)} \textbf{The backward ODE in PFGM is compatible with architectures with varying capacities.} PFGM consistently outperforms other ODE baselines on DDPM++~(Table~\ref{tab:cifar}) or NCSNv2~(Appendix~\ref{app:ncsnv2}) backbones. \textbf{(4)} \textbf{PFGM shows scalability to higher resolution datasets.} In Appendix~\ref{app:exp-lsun}, we show that PFGM are capable of scale-up to LSUN bedroom $256\times 256$. In particular, PFGM has comparable performance with VE-SDE with 15$\times$ fewer NFE.  }

In \Figref{img:vis}, we visualize the uncurated samples from PFGM on CIFAR-10, CelebA $64 \times 64$ and LSUN bedroom $256\times 256$. We provides more samples in Appendix~\ref{app:samples}.
\begin{figure*}[t]
    \centering
    \includegraphics[width=1.0\textwidth]{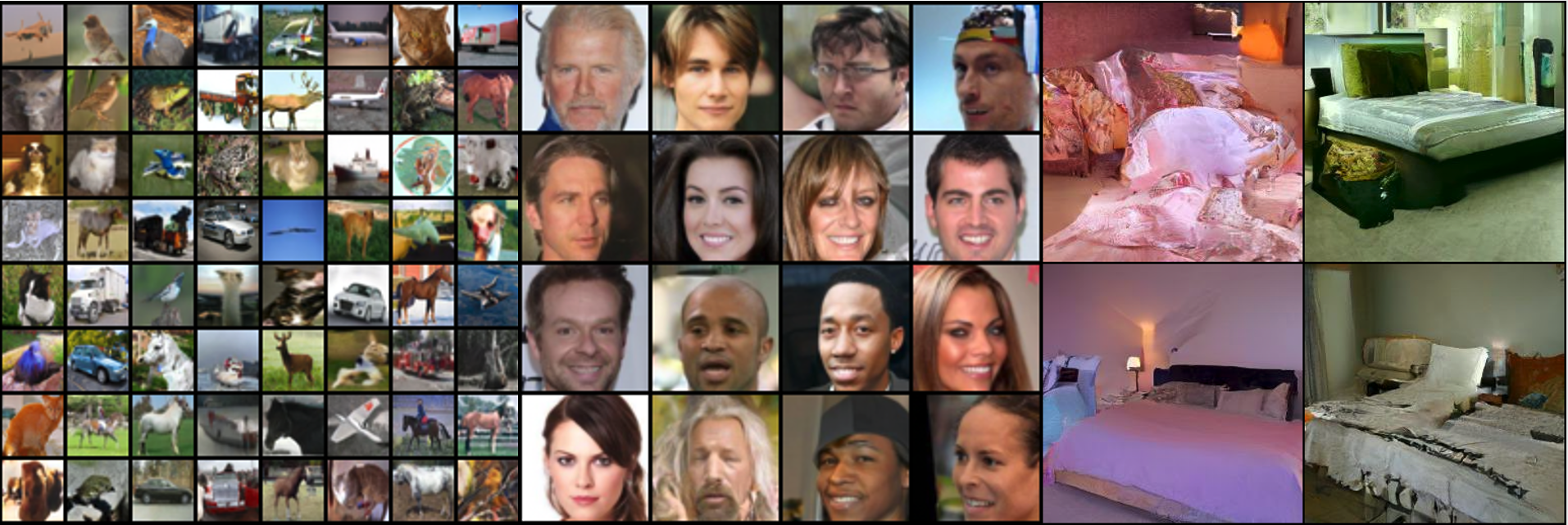}
    \caption{Uncurated samples on datasets of increasing resolution. From left to right: CIFAR-10 $32\times 32$, CelebA $64\times 64$ and LSUN bedroom $256\times 256$.} \label{img:vis}
    \vspace{-5pt}
\end{figure*}
\subsection{Failure of VE/VP-ODEs on NCSNv2 architecture}
\label{sec:robust}
\begin{wrapfigure}{r}{0.5\textwidth} 
\vspace{-20pt}
  \begin{center}
    \includegraphics[width=0.5\textwidth]{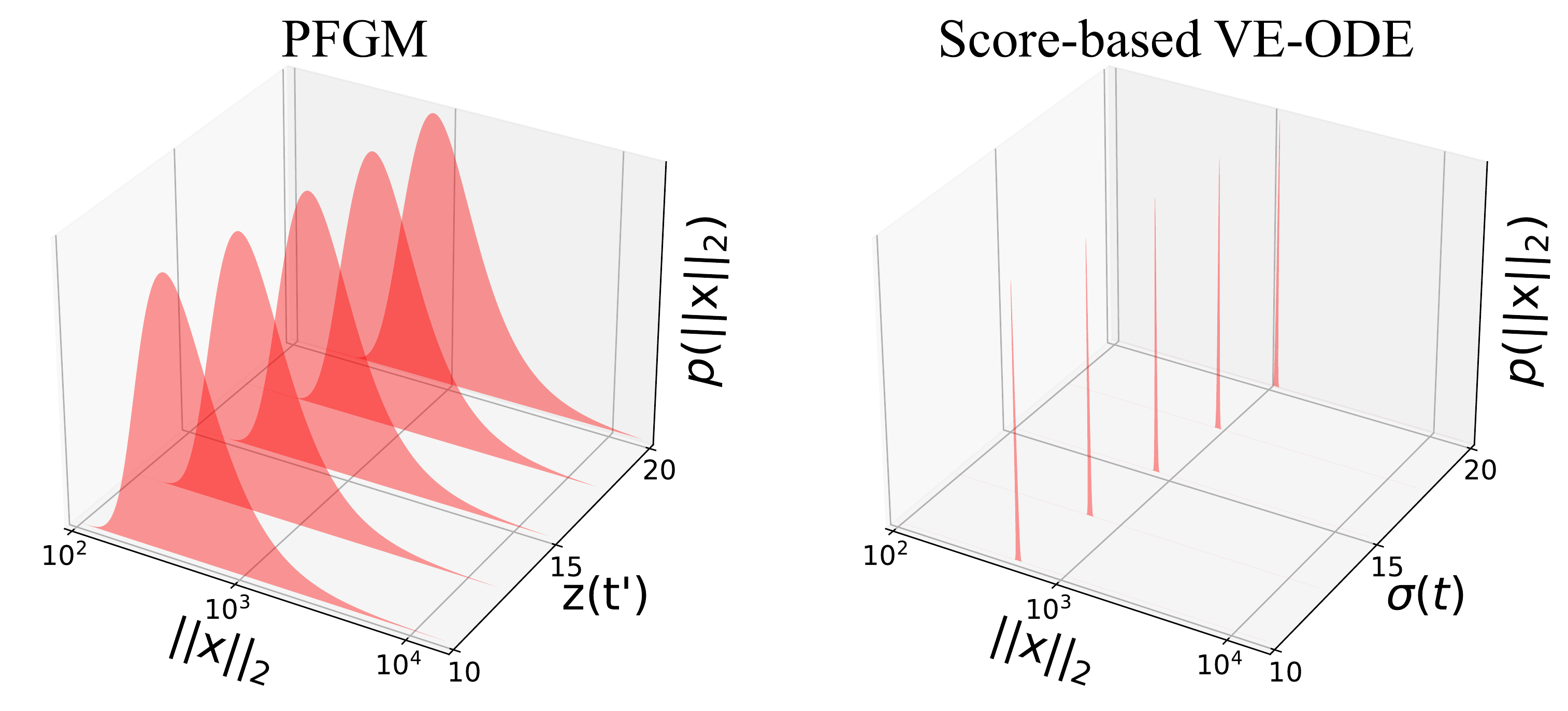}
    \caption{Sample norm distributions with varying time variables~($\sigma$ for VE-ODE and $z$ for PFGM)}
    \label{fig:compare_prior}
  \end{center}
  \vspace{-20pt}
\end{wrapfigure}


 
In our preliminary experiments on NCSNv2 architectures, we empirically observe that the VE/VP-ODEs have FID scores greater than 90 on CIFAR-10. In particular, VE/VP-ODEs can only generate decent samples when applying the Langevin dynamics corrector, and even then, their performances are still inferior to PFGM~(Table~\ref{tab:cifar-ncsnv2}, Table~\ref{tab:celeba}). The poor performance on NCSNv2 stands in striking contrast to their high sample quality on NCSN++/DDPM++ in \cite{Song2021ScoreBasedGM}. \textbf {It indicates that the VE/VP-ODEs are more susceptible to estimation errors than PFGM.} We hypothesize that the strong norm-$\sigma$ correlation seen during the training of score-based models causes the problem.

For score-based models, the $l_2$ norms of perturbed training samples and the standard deviations $\sigma(t)$ of Gaussian noises have strong correlation, \eg $l_2$ norm $\approx \sigma(t)\sqrt{N}$ for large $\sigma(t)$ in VE~\citep{Song2021ScoreBasedGM}. In contrast, as shown in \Figref{fig:compare_prior}, PFGM allocates high mass across a wide spectrum of the training sample norms. During sampling, VE/VP-ODEs could break down when the trajectories of backward ODEs deviate from the norm-$\sigma(t)$ relation to which most training samples pertain. The weaker NCSNv2 backbone incurs larger errors and thus leads to their failure. The PFGM is more resistant to estimate errors because of the greater range of training sample norms.

{To further verify the hypothesis above, we split a batch of VE-ODE samples into cleaner and noisier samples according to visual quality~(\Figref{fig:failure-a}). In \Figref{fig:ve}, we investigate the relation for cleaner and noisier samples during the forward Euler simulation of VE-ODE when $\sigma(t)<15$. We can see that the trajectory of cleaner samples stays close to the norm-$\sigma(t)$ relation~(the red dash line), whereas that of the noisier samples diverges from the relation. The Langevin dynamics corrector changes the trajectory of noisier samples to align with the relation. 
\Figref{fig:pfgm-norm} further shows that the anchored variable $z(t')$ and the norms in the backward ODE of PFGM are not strongly correlated, giving rise to the robustness against the imprecise estimation on NCSNv2. We defer more details to Appendix~\ref{app:failure}.}

\subsection{{Effects of step size in the forward Euler method}}
\label{sec:adapt}
\begin{figure*}[t]
    \centering
        \subfigure[Norm-$\sigma(t)$ in VE-ODE]{ \label{fig:ve}
            \includegraphics[width=0.31\textwidth]{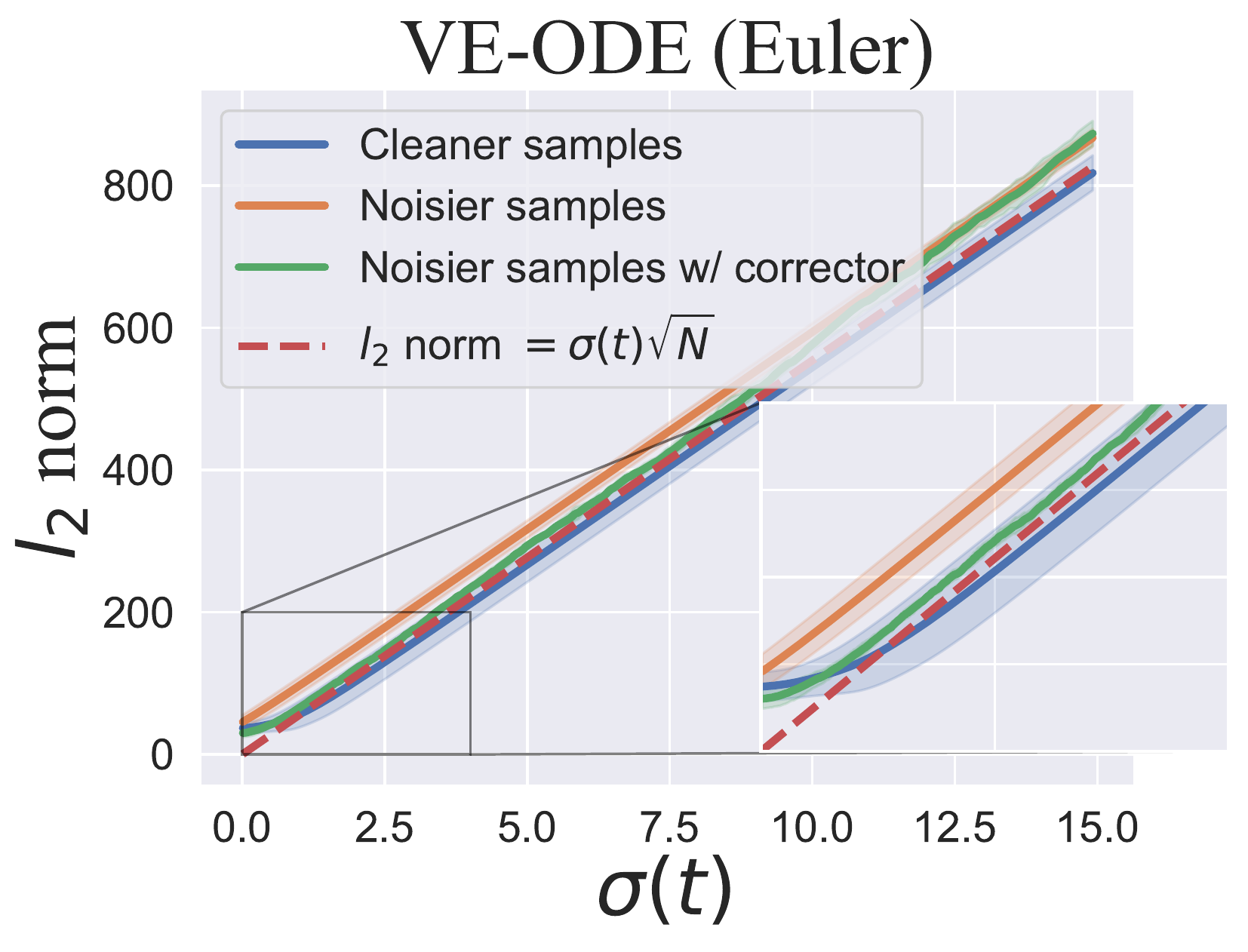}}
        \subfigure[Norm-$z(t')$ in PFGM]{\label{fig:pfgm-norm}
        \includegraphics[width=0.29\textwidth]{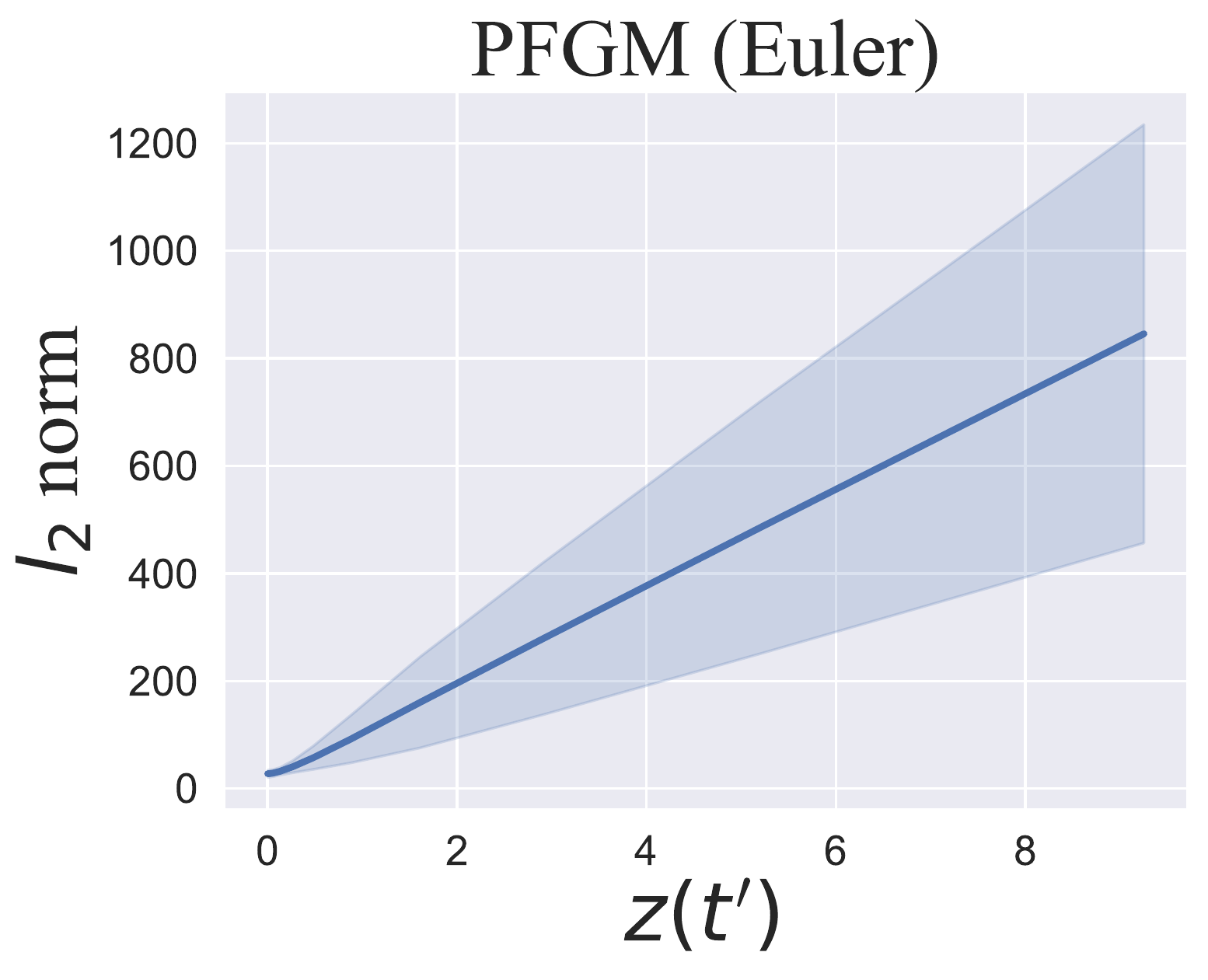}}
      \centering
    \subfigure[FID vs. NFE on CIFAR-10]{\label{fig:adapt}\includegraphics[width=0.31\textwidth]{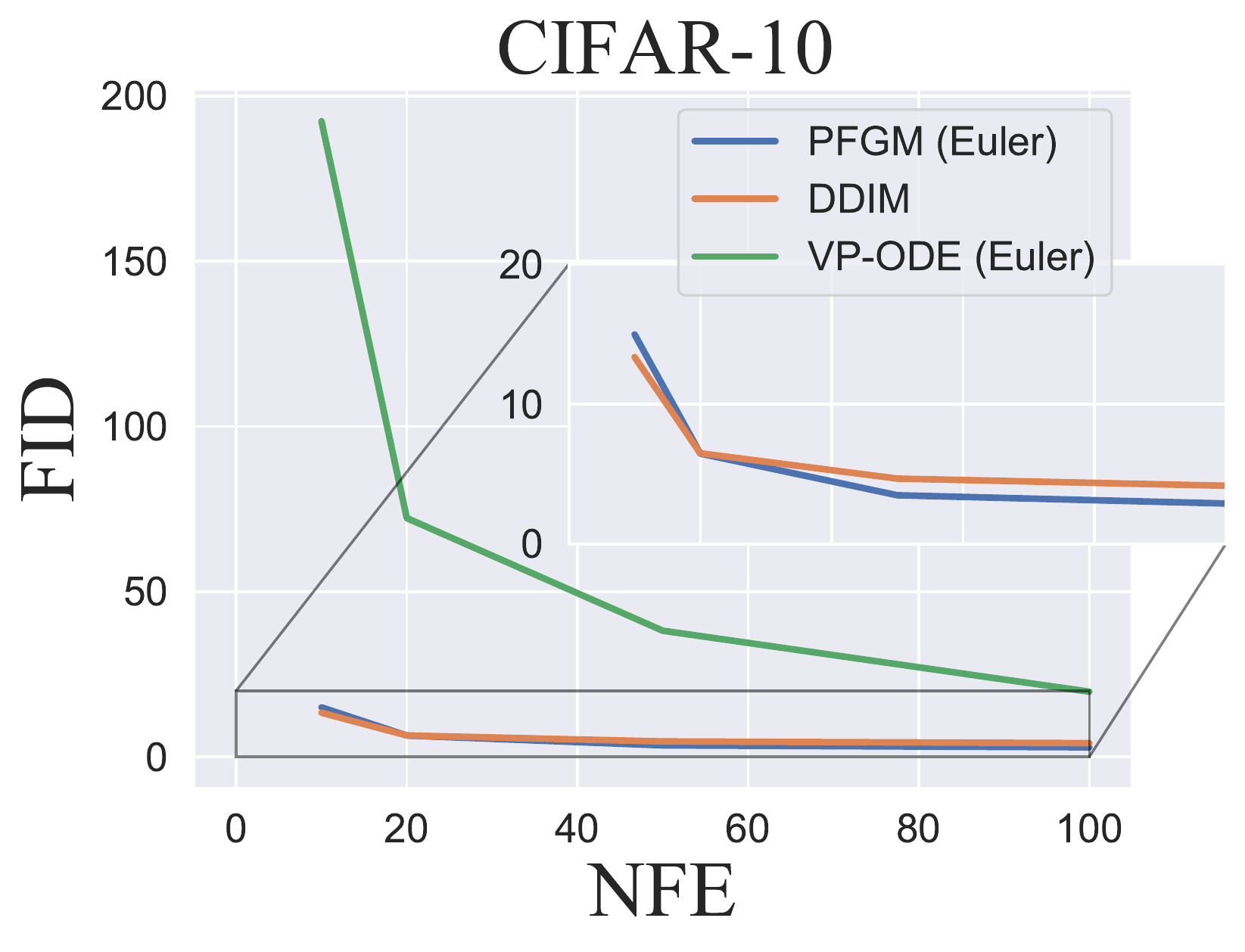}}
    \caption{\textbf{(a)} Norm-$\sigma(t)$ relation during the backward sampling of VE-ODE (Euler). \textbf{(b)} Norm-$z(t')$ relation during the backward sampling of PFGM (Euler). The shaded areas mean the standard deviation of norms. \textbf{(c)} Number of steps versus FID score.}
    \vspace{-5pt}
\end{figure*}
{In order to accelerate the inference speed of ODEs, we can increase the step size~(decrease the NFEs) in numerical solvers such as the forward Euler method. It also enables the trade-off between sample quality and computational efficiency in real-world deployment. We study the effects of increasing step size on PFGM, VP-ODE and DDIM~\cite{Song2021DenoisingDI} using the forward Euler method, with a varying NFE ranging from $10$ to $100$.} 

{In \Figref{fig:adapt}, we report the sample quality measured by FID scores on CIFAR-10. As expected, all the methods have higher FID scores when decreasing the NFE. We observe that the sample quality of PFGM degrades gracefully as we decrease the NFE. Our method shows significantly better robustness to step sizes than the VP-ODE, especially when only taking a few Euler steps. In addition, PFGM obtains better FID scores than DDIM on most NFEs except for $10$ where PFGM is marginally worse. This suggests that the PFGM is a promising method for accommodating instantaneous resource availability, as high-quality samples can be generated in limited steps.}


\subsection{Utilities of ODE: likelihood evaluation and latent representation}
\label{sec:utility}
Similar to the family of discrete normalizing flows~\citep{Dinh2017DensityEU, Kingma2018GlowGF, Ho2019FlowIF} and continuous probability flow~\citep{Song2021ScoreBasedGM}, the forward ODE in PFGM defines an invertible mapping between the data space and latent space with a known prior. Formally, we define the invertible forward $\mathcal{M}$ mapping by integrating the corresponding forward ODE ${d(\rvx,z)} = (\rvv(\tilde{\rvx})_\rvx\rvv(\tilde{\rvx})_z^{-1}z, z){dt'}$ of \Eqref{eq:backode}:
\begin{align*}
        {\mat{x}}(\log{{\zmax}})&=\mathcal{M}({\mat{x}}(\log {\zmin})) \equiv {\mat{x}}(\log {\zmin})+\int_{\log {\zmin}}^{\log{{\zmax}}} \rvv({\rvx}(t'))_\rvx\rvv(\tilde{\rvx}(t'))_z^{-1}e^{t'} dt' 
\end{align*}
where $\log{\zmin}$/$\log{{\zmax}}$ are the starting/terminal time in the forward ODE. The forward mapping transfers the data distribution to the prior distribution $p_{\textrm{prior}}$ on the $z=\zmax$ hyperplane~(cf. Section~\ref{sec:sampling}):
$
    p_{\textrm{prior}}({\mat{x}}(\log{{\zmax}})) = \mathcal{M}(p({\mat{x}}(\log {\zmin})))
$. The invertibility enables likelihood evaluation and creates a meaningful latent space on the $z=\zmax$ hyperplane. In addition, we can adapt to the computational constraints by adjusting the step size or the precision in numerical ODE solvers.

\begin{wrapfigure}{r}{0.4\textwidth} 
\centering
\scriptsize
    \vspace{-12pt}
\captionof{table}{Bits/dim on CIFAR-10}\label{tab:likelihood}
\begin{tabular}{l c }
    \toprule
         & bits/dim $\downarrow$\\
         \midrule
         RealNVP~\citep{Dinh2017DensityEU} & 3.49 \\
    Glow~\citep{Kingma2018GlowGF} & 3.35  \\
    Residual Flow~\citep{Chen2019ResidualFF} & 3.28 \\
    Flow++~\citep{Ho2019FlowIF} & 3.29 \\
    DDPM ($L$)~\citep{Ho2020DenoisingDP} & $\leq$ 3.70\textsuperscript{*} \\
    \midrule
    \textit{DDPM++ backbone}\\
    \midrule
    VP-ODE~\citep{Song2021ScoreBasedGM} & 3.20 \\
    sub-VP-ODE~\citep{Song2021ScoreBasedGM} & \bf{3.02} \\
        PFGM~(ours)& 3.19 \\
         \bottomrule
         \vspace{-15pt}
    \end{tabular}
\end{wrapfigure}\textbf{Likelihood evaluation} We evaluate the data likelihood by the instantaneous change-of-variable formula~\citep{Chen2018NeuralOD, Song2021ScoreBasedGM}. In Table~\ref{tab:likelihood}, we report the bits/dim on the uniformly dequantized CIFAR-10 test set and compare with existing baselines that use the same setup.
We observe that PFGM achieves better likelihoods than discrete normalizing flow models, even without maximum likelihood training. Among the continuous flow models, sub-VP-ODE shows the lowest bits/dim, although its sample quality is worse than VP-ODE and PFGM (Table~\ref{tab:cifar}). The exploration of the seeming trade-off between likelihood and sample quality is left for future works.


\textbf{Latent representation} {Since the samples are uniquely identifiable by their latents via the invertible mapping $\mathcal{M}$, PFGM further supports image manipulation using its latent representation on the $z=\zmax$ hyperplane. We include the results of image interpolation and the temperature scaling~\citep{Dinh2017DensityEU, Kingma2018GlowGF, Song2021ScoreBasedGM} to Appendix~\ref{app:interpolate} and Appendix~\ref{app:temp}. For interpolation, it shows that we can travel along the latent space to obtain perceptually consistent interpolations between CelebA images.}


\section{Conclusion}

\label{sec:conclusion}

We present a new deep generative model by solving the Poisson equation whose source term is the data distribution. We estimate the normalized gradient field of the solution in an augmented space with an additional dimension. For sampling, we devise a backward ODE that exponential decays on the physically meaningful additional dimension. Empirically, our approach has currently best performance over other normalizing flow baselines, and achieving $10\times$ to $20 \times$ acceleration over the stochastic methods. Our backward ODE shows greater stability against errors than popular ODE-based methods, and enables efficient adaptive sampling. We further demonstrate the utilities of the forward ODE on likelihood evaluation and image interpolation. Future directions include improving the {normalization of Poisson fields}. More principled approaches can be used to get around the divergent near-field behavior. For example, we may exploit renormalization, a useful tool in physics, to make the Poisson field well-behaved in near fields. 

\section*{Acknowledgements}

We are grateful to Shangyuan Tong, Timur Garipov and Yang Song for helpful discussion. We would like to thank Octavian Ganea and Wengong Jin for reviewing an early draft of this paper. YX and TJ acknowledge support from MIT-DSTA Singapore collaboration, from NSF Expeditions grant (award 1918839) "Understanding the World Through Code", and from MIT-IBM Grand Challenge project. ZL and MT would like to thank the Center for Brains, Minds, and Machines (CBMM)
for hospitality. ZL and MT are supported by The Casey and Family Foundation, the Foundational Questions Institute, the Rothberg Family Fund for Cognitive Science and IAIFI through NSF grant PHY-2019786.
\clearpage
\bibliography{ref}

\begin{thebibliography}{10}

\bibitem{Brock2019LargeSG}
Andrew Brock, Jeff Donahue, and Karen Simonyan.
\newblock Large scale gan training for high fidelity natural image synthesis.
\newblock {\em ArXiv}, abs/1809.11096, 2019.

\bibitem{Brown2020LanguageMA}
Tom~B. Brown, Benjamin Mann, Nick Ryder, Melanie Subbiah, Jared Kaplan,
  Prafulla Dhariwal, Arvind Neelakantan, Pranav Shyam, Girish Sastry, Amanda
  Askell, Sandhini Agarwal, Ariel Herbert-Voss, Gretchen Krueger, T.~J.
  Henighan, Rewon Child, Aditya Ramesh, Daniel~M. Ziegler, Jeff Wu, Clemens
  Winter, Christopher Hesse, Mark Chen, Eric Sigler, Mateusz Litwin, Scott
  Gray, Benjamin Chess, Jack Clark, Christopher Berner, Sam McCandlish, Alec
  Radford, Ilya Sutskever, and Dario Amodei.
\newblock Language models are few-shot learners.
\newblock {\em ArXiv}, abs/2005.14165, 2020.

\bibitem{Chen2019ResidualFF}
Ricky T.~Q. Chen, Jens Behrmann, David~Kristjanson Duvenaud, and
  J{\"o}rn-Henrik Jacobsen.
\newblock Residual flows for invertible generative modeling.
\newblock {\em ArXiv}, abs/1906.02735, 2019.

\bibitem{Chen2018NeuralOD}
Tian~Qi Chen, Yulia Rubanova, Jesse Bettencourt, and David~Kristjanson
  Duvenaud.
\newblock Neural ordinary differential equations.
\newblock {\em ArXiv}, abs/1806.07366, 2018.

\bibitem{dhariwal2021diffusion}
Prafulla Dhariwal and Alexander Nichol.
\newblock Diffusion models beat gans on image synthesis.
\newblock {\em Advances in Neural Information Processing Systems},
  34:8780--8794, 2021.

\bibitem{Dinh2017DensityEU}
Laurent Dinh, Jascha Sohl-Dickstein, and Samy Bengio.
\newblock Density estimation using real nvp.
\newblock {\em ArXiv}, abs/1605.08803, 2017.

\bibitem{Dormand1980AFO}
J.~R. Dormand and P.~J. Prince.
\newblock A family of embedded runge-kutta formulae.
\newblock {\em Journal of Computational and Applied Mathematics}, 6:19--26,
  1980.

\bibitem{Du2019ImplicitGA}
Yilun Du and Igor Mordatch.
\newblock Implicit generation and generalization in energy-based models.
\newblock {\em ArXiv}, abs/1903.08689, 2019.

\bibitem{goldstein2002classical}
Herbert Goldstein, Charles Poole, and John Safko.
\newblock Classical mechanics, 2002.

\bibitem{Grathwohl2019FFJORDFC}
Will Grathwohl, Ricky T.~Q. Chen, Jesse Bettencourt, Ilya Sutskever, and
  David~Kristjanson Duvenaud.
\newblock Ffjord: Free-form continuous dynamics for scalable reversible
  generative models.
\newblock {\em ArXiv}, abs/1810.01367, 2019.

\bibitem{griffiths2005introduction}
David~J Griffiths.
\newblock Introduction to electrodynamics, 2005.

\bibitem{Gulrajani2017ImprovedTO}
Ishaan Gulrajani, Faruk Ahmed, Mart{\'i}n Arjovsky, Vincent Dumoulin, and
  Aaron~C. Courville.
\newblock Improved training of wasserstein gans.
\newblock In {\em NIPS}, 2017.

\bibitem{Heusel2017GANsTB}
Martin Heusel, Hubert Ramsauer, Thomas Unterthiner, Bernhard Nessler, and Sepp
  Hochreiter.
\newblock Gans trained by a two time-scale update rule converge to a local nash
  equilibrium.
\newblock In {\em NIPS}, 2017.

\bibitem{Ho2019FlowIF}
Jonathan Ho, Xi~Chen, A.~Srinivas, Yan Duan, and P.~Abbeel.
\newblock Flow++: Improving flow-based generative models with variational
  dequantization and architecture design.
\newblock {\em ArXiv}, abs/1902.00275, 2019.

\bibitem{Ho2016GenerativeAI}
Jonathan Ho and Stefano Ermon.
\newblock Generative adversarial imitation learning.
\newblock In {\em NIPS}, 2016.

\bibitem{Ho2020DenoisingDP}
Jonathan Ho, Ajay Jain, and P.~Abbeel.
\newblock Denoising diffusion probabilistic models.
\newblock {\em ArXiv}, abs/2006.11239, 2020.

\bibitem{Karras2020TrainingGA}
Tero Karras, Miika Aittala, Janne Hellsten, Samuli Laine, Jaakko Lehtinen, and
  Timo Aila.
\newblock Training generative adversarial networks with limited data.
\newblock {\em ArXiv}, abs/2006.06676, 2020.

\bibitem{karras2019style}
Tero Karras, Samuli Laine, and Timo Aila.
\newblock A style-based generator architecture for generative adversarial
  networks.
\newblock In {\em Proceedings of the IEEE/CVF conference on computer vision and
  pattern recognition}, pages 4401--4410, 2019.

\bibitem{Kingma2018GlowGF}
Diederik~P. Kingma and Prafulla Dhariwal.
\newblock Glow: Generative flow with invertible 1x1 convolutions.
\newblock In {\em NeurIPS}, 2018.

\bibitem{Kingma2014SemisupervisedLW}
Diederik~P. Kingma, Shakir Mohamed, Danilo~Jimenez Rezende, and Max Welling.
\newblock Semi-supervised learning with deep generative models.
\newblock {\em ArXiv}, abs/1406.5298, 2014.

\bibitem{Kingma2014AutoEncodingVB}
Diederik~P. Kingma and Max Welling.
\newblock Auto-encoding variational bayes.
\newblock {\em CoRR}, abs/1312.6114, 2014.

\bibitem{cifar}
Alex Krizhevsky, Vinod Nair, and Geoffrey Hinton.
\newblock Cifar-10 (canadian institute for advanced research).
\newblock 2009.

\bibitem{Lawrence1986SomePR}
F~Shampine Lawrence.
\newblock Some practical runge-kutta formulas.
\newblock {\em Mathematics of Computation}, 46:135--150, 1986.

\bibitem{Lee2021ViTGANTG}
Kwonjoon Lee, Huiwen Chang, Lu~Jiang, Han Zhang, Zhuowen Tu, and Ce~Liu.
\newblock Vitgan: Training gans with vision transformers.
\newblock {\em ArXiv}, abs/2107.04589, 2021.

\bibitem{Li2021SemanticSW}
Daiqing Li, Junlin Yang, Karsten Kreis, Antonio Torralba, and Sanja Fidler.
\newblock Semantic segmentation with generative models: Semi-supervised
  learning and strong out-of-domain generalization.
\newblock {\em 2021 IEEE/CVF Conference on Computer Vision and Pattern
  Recognition (CVPR)}, pages 8296--8307, 2021.

\bibitem{Miyato2018SpectralNF}
Takeru Miyato, Toshiki Kataoka, Masanori Koyama, and Yuichi Yoshida.
\newblock Spectral normalization for generative adversarial networks.
\newblock {\em ArXiv}, abs/1802.05957, 2018.

\bibitem{Ricardo2002AMI}
Henry Ricardo.
\newblock A modern introduction to differential equations.
\newblock 2002.

\bibitem{Risken1984FokkerPlanckE}
Hannes Risken.
\newblock Fokker-planck equation.
\newblock 1984.

\bibitem{Salimans2016ImprovedTF}
Tim Salimans, Ian~J. Goodfellow, Wojciech Zaremba, Vicki Cheung, Alec Radford,
  and Xi~Chen.
\newblock Improved techniques for training gans.
\newblock {\em ArXiv}, abs/1606.03498, 2016.

\bibitem{Song2021DenoisingDI}
Jiaming Song, Chenlin Meng, and Stefano Ermon.
\newblock Denoising diffusion implicit models.
\newblock {\em ArXiv}, abs/2010.02502, 2021.

\bibitem{Song2019GenerativeMB}
Yang Song and Stefano Ermon.
\newblock Generative modeling by estimating gradients of the data distribution.
\newblock {\em ArXiv}, abs/1907.05600, 2019.

\bibitem{Song2020ImprovedTF}
Yang Song and Stefano Ermon.
\newblock Improved techniques for training score-based generative models.
\newblock {\em ArXiv}, abs/2006.09011, 2020.

\bibitem{Song2021ScoreBasedGM}
Yang Song, Jascha Sohl-Dickstein, Diederik~P. Kingma, Abhishek Kumar, Stefano
  Ermon, and Ben Poole.
\newblock Score-based generative modeling through stochastic differential
  equations.
\newblock {\em ArXiv}, abs/2011.13456, 2021.

\bibitem{Szegedy2016RethinkingTI}
Christian Szegedy, V.~Vanhoucke, S.~Ioffe, Jon Shlens, and Z.~Wojna.
\newblock Rethinking the inception architecture for computer vision.
\newblock {\em 2016 IEEE Conference on Computer Vision and Pattern Recognition
  (CVPR)}, pages 2818--2826, 2016.

\bibitem{Oord2016WaveNetAG}
A{\"a}ron van~den Oord, Sander Dieleman, Heiga Zen, Karen Simonyan, Oriol
  Vinyals, Alex Graves, Nal Kalchbrenner, Andrew~W. Senior, and Koray
  Kavukcuoglu.
\newblock Wavenet: A generative model for raw audio.
\newblock In {\em SSW}, 2016.

\bibitem{Oord2016ConditionalIG}
A{\"a}ron van~den Oord, Nal Kalchbrenner, Lasse Espeholt, Koray Kavukcuoglu,
  Oriol Vinyals, and Alex Graves.
\newblock Conditional image generation with pixelcnn decoders.
\newblock In {\em NIPS}, 2016.

\bibitem{Virtanen2020SciPy1F}
Pauli Virtanen, Ralf Gommers, Travis~E. Oliphant, Matt Haberland, Tyler Reddy,
  David Cournapeau, Evgeni Burovski, Pearu Peterson, Warren Weckesser, Jonathan
  Bright, St{\'e}fan~J. van~der Walt, Matthew Brett, Joshua Wilson, K.~Jarrod
  Millman, Nikolay Mayorov, Andrew R.~J. Nelson, Eric Jones, Robert Kern, Eric
  Larson, C~J Carey, Ilhan Polat, Yu~Feng, Eric~W. Moore, J.~Vanderplas, Denis
  Laxalde, Josef Perktold, Robert Cimrman, Ian~Daniel Henriksen, E.~A.
  Quintero, Charles~R. Harris, Anne~M. Archibald, Ant{\^o}nio~H. Ribeiro,
  Fabian Pedregosa, Paul van Mulbregt, Aditya Alessandro Pietro Alex Andreas
  Andreas Anthony Ant Vijaykumar Bardelli Rothberg Hilboll~Kloeckner Sco,
  Aditya Vijaykumar, Alessandro~Pietro Bardelli, Alex Rothberg, Andreas
  Hilboll, Andre Kloeckner, Anthony~M. Scopatz, Antony Lee, Ariel~S. Rokem,
  C.~Nathan Woods, Chad Fulton, Charles Masson, Christian H{\"a}ggstr{\"o}m,
  Clark Fitzgerald, David~A. Nicholson, David~R. Hagen, Dmitrii~V. Pasechnik,
  Emanuele Olivetti, Eric Martin, Eric Wieser, Fabrice Silva, Felix Lenders,
  Florian Wilhelm, Gert Young, Gavin~A. Price, Gert-Ludwig Ingold, Gregory~E.
  Allen, Gregory~R. Lee, Herv{\'e} Audren, Irvin Probst, Jorg~P. Dietrich,
  Jacob Silterra, James~T. Webber, Janko Slavi, Joel Nothman, Johannes Buchner,
  Johannes Kulick, Johannes~L. Sch{\"o}nberger, Jos{\'e}~Vin{\'i}cius
  de~Miranda~Cardoso, Joscha Reimer, Joseph~E. Harrington, Juan Luis~Cano
  Rodr{\'i}guez, Juan Nunez-Iglesias, Justin Kuczynski, Kevin~Lee Tritz,
  Martin~Dr Thoma, Matt Newville, Matthias K{\"u}mmerer, Maximilian
  Bolingbroke, Michael Tartre, Mikhail Pak, Nathaniel~J. Smith, Nikolai
  Nowaczyk, Nikolay Shebanov, Oleksandr Pavlyk, Per~Andreas Brodtkorb, Perry
  Lee, Robert~T. McGibbon, Roman Feldbauer, Sam Lewis, Sam Tygier, Scott
  Sievert, Sebastiano Vigna, Stefan Peterson, Surhud More, Tadeusz Pudlik, Taku
  Oshima, Thomas~J. Pingel, Thomas~P. Robitaille, Thomas Spura, Thouis~Raymond
  Jones, Tim Cera, Tim Leslie, Tiziano Zito, Tom Krauss, U.~Upadhyay,
  Yaroslav~O. Halchenko, and Y.~V{\'a}zquez-Baeza.
\newblock Scipy 1.0: fundamental algorithms for scientific computing in python.
\newblock {\em Nature Methods}, 17:261 -- 272, 2020.

\bibitem{Yang2015FromFP}
Shuo Yang, Ping Luo, Chen~Change Loy, and Xiaoou Tang.
\newblock From facial parts responses to face detection: A deep learning
  approach.
\newblock {\em 2015 IEEE International Conference on Computer Vision (ICCV)},
  pages 3676--3684, 2015.

\bibitem{Yu2015LSUNCO}
Fisher Yu, Yinda Zhang, Shuran Song, Ari Seff, and Jianxiong Xiao.
\newblock Lsun: Construction of a large-scale image dataset using deep learning
  with humans in the loop.
\newblock {\em ArXiv}, abs/1506.03365, 2015.

\end{thebibliography}
\bibliographystyle{plain}
\clearpage

\appendix
\newpage

{\huge Appendix}

\def\E{{\bf E}}
\def\x{{\bf x}}
\def\r{{\bf r}}
\def\rhat{\hat{r}}

\section{Proofs}
\label{app:proofs}
\subsection{Formal Proof of Theorem 1}

Before proceeding to Theorem 1, we show a technical lemma that guarantees the existence-uniqueness of the solution to the Poisson equation, under some mild conditions. 
\begin{lemma}
\label{lemma:ex-uni}
{Given $\Omega=\mathbb{R}^N, N\ge 3$, assume that the source function $\rho \in \gC^0(\Omega)$, and $\rho$ has a compact support. Then the the Poisson equation $\nabla^2\varphi(\mat{x})=-\rho(\mat{x})$ on $\Omega$ with zero boundary condition at infinity~($\lim_{\parallel \rvx \parallel_2 \to \infty}\varphi(\mat{x})=0$) has a unique solution $\varphi(\mat{x})\in \gC^2(\Omega)$ up to a constant.}
\end{lemma}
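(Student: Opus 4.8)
The plan is to prove existence by writing down the Newtonian-potential representation of $\varphi$, to verify the prescribed behaviour at infinity, and then to get uniqueness from a Liouville-type argument. First I would \emph{construct} the solution explicitly as the convolution of the source with the Green's function of Eq.~(\ref{eq:poisson_solution}), $\varphi(\mat{x}) = \int_{\sR^N} G(\mat{x},\mat{y})\rho(\mat{y})\,d\mat{y}$. To see this integral converges for every $\mat{x}$, I would split the domain into $\{\|\mat{y}-\mat{x}\|\le 1\}$ --- where $\rho$ is bounded (it is continuous with compact support) and the kernel $\|\mat{x}-\mat{y}\|^{-(N-2)}$ is integrable because $N-2<N$ --- and its complement, where the kernel is bounded by $1$ and $\rho\in L^1$. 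Hence $\varphi$ is finite everywhere, and the same splitting shows that the integral defining $\mat{E}=-\nabla\varphi$ in Eq.~(\ref{eq:E_green}) also converges, since $\nabla_\mat{x}G$ has singularity of order $\|\mat{x}-\mat{y}\|^{-(N-1)}$, still locally integrable.

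Next I would check that $\varphi$ actually solves $\nabla^2\varphi=-\rho$ with the required decay. For the boundary condition, pick $R$ with $\mathrm{supp}(\rho)\subset B_R(\mat{0})$; then for $\|\mat{x}\|>2R$ and $\mat{y}\in\mathrm{supp}(\rho)$ one has $\|\mat{x}-\mat{y}\|\ge\|\mat{x}\|-R$, so $|\varphi(\mat{x})|\le C\|\rho\|_{L^1}/(\|\mat{x}\|-R)^{N-2}\to 0$ as $\|\mat{x}\|_2\to\infty$, giving the zero boundary condition at infinity. For the PDE I would use the classical potential-theory computation: differentiate once under the integral sign (justified by local integrability of $\nabla_\mat{x}G$), then excise a small ball $B_\epsilon$ around the singularity, apply the divergence theorem on the excised region, and let $\epsilon\to0$ to obtain $\nabla^2\varphi=-\rho$. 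The regularity $\varphi\in\gC^2(\sR^N)$ is immediate on $\sR^N\setminus\mathrm{supp}(\rho)$, where $\varphi$ is harmonic and hence smooth; on a neighbourhood of the support it follows from interior (Schauder) estimates, after first mollifying $\rho$ and passing to the limit if only continuity is available.

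Finally, for uniqueness I would take two $\gC^2$ solutions $\varphi_1,\varphi_2$ and set $h=\varphi_1-\varphi_2$, so that $\nabla^2 h=0$ on all of $\sR^N$. Any entire harmonic function that is bounded is constant by Liouville's theorem, which already yields uniqueness up to an additive constant; and if moreover both solutions satisfy $\lim_{\|\mat{x}\|_2\to\infty}\varphi_i(\mat{x})=0$, then $h\to 0$ at infinity, and the maximum principle applied on balls of increasing radius centred at the origin (on each ball $\sup|h|$ is attained on the boundary) forces $h\equiv 0$, so the solution is in fact unique.

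I expect the regularity claim to be the only genuine obstacle: differentiating the representation formula twice is not directly valid, because $\nabla^2_\mat{x}G$ fails to be locally integrable, so the argument must be routed through the excised-ball / mean-value computation (or through Schauder estimates), and the honest classical $\gC^2$ conclusion needs a touch more than bare continuity of $\rho$ (local Hölder continuity suffices). The convergence of the defining integral, the far-field decay, and the uniqueness step are all routine.
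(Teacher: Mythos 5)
Your existence half follows the paper's own route almost exactly: both write the Newtonian potential $\varphi=G\ast\rho$, and you merely fill in the convergence, decay, and excised-ball details that the paper states without argument. Your uniqueness half, however, is a genuinely different argument. The paper defines $\tilde\varphi=\varphi_2-\varphi_1$, multiplies by $\tilde\varphi$, applies the identity $\tilde\varphi\,\nabla^2\tilde\varphi = \nabla\cdot(\tilde\varphi\nabla\tilde\varphi)-\|\nabla\tilde\varphi\|^2$, integrates over $\mathbb{R}^N$, and uses the divergence theorem with a vanishing boundary term at infinity to force $\int\|\nabla\tilde\varphi\|^2=0$, hence $\tilde\varphi$ constant. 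You instead observe $h=\varphi_1-\varphi_2$ is entire harmonic and bounded, invoke Liouville to get constancy, and then use the maximum principle on growing balls plus $h\to 0$ at infinity to pin the constant to zero. Your route is, if anything, more airtight: the paper's surface integral $\oiint\tilde\varphi\nabla\tilde\varphi\cdot d\mathbf{S}$ ``$=0$'' is asserted from $\tilde\varphi\to 0$ alone, but the integrand is multiplied by a surface area growing like $r^{N-1}$, so one actually needs a quantitative decay rate of $\tilde\varphi\nabla\tilde\varphi$ rather than a bare limit; the Liouville/maximum-principle argument needs nothing beyond what the hypotheses literally provide, and it correctly yields exact uniqueness (not merely up to a constant) under the stated boundary condition. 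You are also right to flag the regularity claim: the paper's one-line inference ``$\rho\in\gC^0$ and $\nabla^2\varphi=-\rho$, therefore $\varphi\in\gC^2$'' is not valid in general --- continuity of the source does not imply $C^2$ regularity of the Newtonian potential (one needs Dini or H\"older continuity, i.e.\ Schauder theory), and this is a real gap in the paper's proof that you have honestly identified rather than papered over.
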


\begin{proof}

For the existence of the solution, one can verify that the analytical construction using the extension of Green's function in $N\ge 3$ dimensional space~(Lemma~\ref{lemma:green}), \ie $
    \varphi(\mat{x}) = \int G(\mat{x},\mat{y})\rho(\mat{y})d\mat{y},   G(\mat{x},\mat{y}) = \frac{1}{(N-2)S_{N-1}(1)} \frac{1}{||\mat{x}-\mat{y}||^{N-2}}
$, is one possible solution to the Poisson equation $\nabla^2\varphi(\mat{x})=-\rho(\mat{x})$. Since $\rho\in \gC^0(\Omega)$ and $\nabla^2\varphi(\mat{x})=-\rho(\mat{x})$, we conclude that $\varphi(\mat{x})\in \gC^2(\Omega)$.

The proof idea of the uniqueness is similar to the uniqueness theorems in electrostatics. Suppose we have two different solutions $\varphi_1,\varphi_2\in \gC^2$ which satisfy 
\begin{equation}
    \nabla^2\varphi_1(\mat{x}) = -\rho(\mat{x}), \nabla^2\varphi_2(\mat{x}) = -\rho(\mat{x}).
\end{equation}
We define $\tilde{\varphi}(\mat{x})\equiv\varphi_2(\mat{x})-\varphi_1(\mat{x})$. Subtracting the above two equations gives
\begin{equation}\label{eq:unique_1}
    \nabla^2\tilde{\varphi}(\mat{x}) = 0, \forall \rvx \in \Omega.
\end{equation}
By the  vector differential identity we have
\begin{equation}\label{eq:unique_2}
    \tilde{\varphi}(\mat{x})\nabla^2\tilde{\varphi}(\mat{x}) = \nabla\cdot(\tilde{\varphi}(\mat{x})\nabla\tilde{\varphi}(\mat{x}))-\nabla\tilde{\varphi}(\mat{x})\cdot \nabla\tilde{\varphi}(\mat{x}),
\end{equation}
By the divergence theorem we have
\begin{equation}\label{eq:unique_3}
    \int_{\Omega} \nabla\cdot(\tilde{\varphi}(\mat{x})\nabla\tilde{\varphi}(\mat{x})) d^N\mat{x} = \oiint_{\partial\Omega} \tilde{\varphi}(\mat{x})\nabla\tilde{\varphi}(\mat{x})\cdot d^{N-1}\mat{S} = 0,
\end{equation}
where $d^{N-1}\mat{S}$ denotes an $N-1$ dimensional surface element at infinity, and the second equation holds due to zero boundary condition at infinity. Combining Eq.~(\ref{eq:unique_1})(\ref{eq:unique_2})(\ref{eq:unique_3}), we have
\begin{equation}
    \int_{\Omega} \nabla\cdot(\tilde{\varphi}(\mat{x})\nabla\tilde{\varphi}(\mat{x})) d^N\mat{x}=\int_{\Omega} ||\nabla\tilde{\varphi}(\mat{x})||^2 d^N\mat{x}=0,
\end{equation}
since this is an integral of a positive quantity, we must have $\nabla\tilde{\varphi}(\mat{x})=\mat{0}$, or $\tilde{\varphi}(\mat{x})=c$, $\forall\mat{x}\in\Omega$. This means $\varphi_1$ and $\varphi_2$ differ at most by a constant, but a constant does not affect gradients, so $\nabla\varphi_1(\mat{x})=\nabla\varphi_2(\mat{x})$. 
\end{proof}

{In our method section~(Section~\ref{sec:augment}), we augmented the original $N$-dimensional data with an extra dimension. The new data distribution in the augmented space is $\tilde{p}(\tilde{\mat{x}})=p(\mat{x})\delta(z)$, where $\delta$ is the Dirac delta function. The support of the data distribution is in the $z=0$ hyperplane. In the following lemma, we show the existence and uniqueness of the solution to $\nabla^2\varphi(\tilde{\rvx})=-\tilde{p}(\tilde{\rvx})$ outside the data support.}

\begin{lemma}
\label{lemma:data}
{Assume the support of the data distribution in the augmented space~($\textrm{supp}(\tilde{p}(\tilde{\rvx}))$) is a compact set on the $z=0$ hyperplane, $p(\rvx) \in \gC^0$ and  $N\ge 3$. The Poisson equation $\nabla^2\varphi(\tilde{\rvx})=-\tilde{p}(\tilde{\rvx})$ with zero boundary condition at infinity~($\lim_{\parallel \rvx \parallel_2 \to \infty}\varphi(\tilde{\rvx})=0$) has a unique solution $\varphi(\tilde{\rvx}) \in \gC^2$ for $\tilde{x} \in \mathbb{R}^{N+1} \setminus \textrm{supp}(\tilde{p}(\tilde{\rvx}))$, up to a constant.}
\end{lemma}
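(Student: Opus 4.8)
The plan is to run the same program as in Lemma~\ref{lemma:ex-uni}, but now in the augmented space $\mathbb{R}^{N+1}$ (note $N+1\ge 4\ge 3$, so the $(N+1)$-dimensional Green's function of Lemma~\ref{lemma:green} is available) and with the singular source $\tilde{p}(\tilde{\rvx})=p(\rvx)\delta(z)$ in place of a $\gC^0$ source. Because $\tilde p$ is not a continuous function, Lemma~\ref{lemma:ex-uni} cannot be invoked verbatim; instead I would exhibit an explicit solution and then handle uniqueness separately.

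For existence, define the single-layer potential
\[
\varphi(\tilde{\rvx})=\int_{\mathbb{R}^{N+1}}G(\tilde{\rvx},\tilde{\rvy})\,\tilde p(\tilde{\rvy})\,d\tilde{\rvy}=\frac{1}{(N-1)S_N(1)}\int_{\mathbb{R}^N}\frac{p(\rvy)}{\big(\|\rvx-\rvy\|_2^2+z^2\big)^{(N-1)/2}}\,d\rvy .
\]
Writing $K:=\textrm{supp}(\tilde p)$, I would then verify: (i) for $\tilde{\rvx}\notin K$ the distance $\|\tilde{\rvx}-\tilde{\rvy}\|$ has a positive lower bound over $\tilde{\rvy}\in K$, uniformly on a neighborhood of $\tilde{\rvx}$, so the integrand and all its $\tilde{\rvx}$-derivatives are bounded on the compact set $\textrm{supp}(p)$; differentiating under the integral sign then gives $\varphi\in\gC^\infty(\mathbb{R}^{N+1}\setminus K)$; (ii) since $G(\cdot,\tilde{\rvy})$ is harmonic away from $\tilde{\rvy}$, we get $\nabla^2\varphi(\tilde{\rvx})=0=-\tilde p(\tilde{\rvx})$ for $\tilde{\rvx}\notin K$, and more precisely $\nabla^2\varphi=-\tilde p$ holds in the distributional sense on all of $\mathbb{R}^{N+1}$; (iii) with $R=\sup_{\tilde{\rvy}\in K}\|\tilde{\rvy}\|$ one has $\|\tilde{\rvx}-\tilde{\rvy}\|\ge\|\tilde{\rvx}\|-R$, and since $p$ is continuous with compact support the integral is $O(\|\tilde{\rvx}\|^{-(N-1)})\to 0$, giving the zero boundary condition at infinity.

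For uniqueness, take two solutions $\varphi_1,\varphi_2$, set $\tilde\varphi=\varphi_2-\varphi_1$, and note $\nabla^2\tilde\varphi=0$ on $\mathbb{R}^{N+1}\setminus K$ with $\tilde\varphi\to 0$ at infinity. The crucial point is that a solution must satisfy the equation across $K$ too, i.e.\ $\tilde\varphi$ is distributionally harmonic on all of $\mathbb{R}^{N+1}$; Weyl's lemma then upgrades $\tilde\varphi$ to a genuine harmonic function on $\mathbb{R}^{N+1}$, and Liouville's theorem (or the maximum principle) together with decay at infinity forces $\tilde\varphi\equiv 0$, so the solution is unique, the additive constant being pinned to $0$ by the boundary condition. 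Alternatively, one can mirror the energy argument of Lemma~\ref{lemma:ex-uni}: $K$ lies in a hyperplane and hence has $(N+1)$-dimensional Lebesgue measure zero, while $\tilde\varphi$ extends (harmonically) across it, so $\int_{\mathbb{R}^{N+1}}\|\nabla\tilde\varphi\|^2$ reduces to a boundary integral at infinity that vanishes by the decay of $\tilde\varphi$, forcing $\nabla\tilde\varphi\equiv 0$.

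The main obstacle is the singular, codimension-one support of the source: one must (a) establish the regularity and harmonicity of the single-layer potential only on the open set $\mathbb{R}^{N+1}\setminus K$, justifying the differentiation under the integral there, and (b) interpret ``solution'' as a distributional solution on all of $\mathbb{R}^{N+1}$ — without the latter uniqueness genuinely fails, since there exist nonzero functions harmonic on $\mathbb{R}^{N+1}\setminus K$ that vanish at infinity (double-layer potentials supported on $K$). The remaining steps are routine adaptations of Lemma~\ref{lemma:ex-uni}.
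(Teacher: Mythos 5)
Your existence argument is the same single-layer potential the paper uses, but your uniqueness argument takes a genuinely different and more careful route. The paper's uniqueness proof mirrors Lemma~\ref{lemma:ex-uni}: it applies the divergence theorem to $\int_{\mathbb{R}^{N+1}}\nabla\cdot(\tilde\varphi\nabla\tilde\varphi)$ to get only the boundary term at infinity, then discards the support $K$ because it has $(N+1)$-dimensional Lebesgue measure zero. That measure-zero dismissal is where you rightly sense trouble: the divergence theorem needs $\tilde\varphi\nabla\tilde\varphi\in\gC^1$ up to and across $K$, which the hypothesis $\tilde\varphi\in\gC^2(\mathbb{R}^{N+1}\setminus K)$ does not give. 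If one instead applies the divergence theorem honestly on $\mathbb{R}^{N+1}\setminus K$, inner boundary terms appear on both faces of $K$, and they cancel precisely when $\tilde\varphi$ and $\partial_z\tilde\varphi$ have no jump across $K$ — i.e., exactly when the equation is interpreted distributionally across $K$, as you insist. Without that interpretation, your double-layer counterexample shows uniqueness genuinely fails. Your main argument (distributional harmonicity $\Rightarrow$ Weyl's lemma $\Rightarrow$ entire harmonic function $\Rightarrow$ Liouville/maximum principle with decay $\Rightarrow\tilde\varphi\equiv 0$) is a clean and fully rigorous route that both fills this gap and makes the additive constant disappear (it is pinned to zero by the boundary condition, which the paper's phrasing ``up to a constant'' leaves slightly loose). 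Your alternative energy argument is closer in spirit to the paper's but still implicitly relies on the harmonic extension across $K$, so it is really the same idea in different clothes; the Weyl--Liouville version is the cleaner of your two. Net: same existence proof, better uniqueness proof, and you've correctly identified the precise hypothesis (distributional solution on all of $\mathbb{R}^{N+1}$) that the paper's statement and proof leave implicit.
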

\begin{proof}

Similar to the proof in Lemma~\ref{lemma:ex-uni}, one can easily verify that the analytical construction using Green's method, \ie $\varphi(\tilde{\rvx}) = \int G(\tilde{\rvx},\tilde{\rvy})\tilde{p}(\tilde{\rvx})d\tilde{\rvy}, G(\tilde{\rvx},\tilde{\rvy}) = \frac{1}{(N-1)S_{N}(1)} \frac{1}{||\tilde{\rvx}-\tilde{\rvy}||^{N-1}}$, is one possible solution to the Poisson equation $\nabla^2\varphi(\tilde{\rvx})=-\tilde{p}(\tilde{\rvx})$. Since $\tilde{p}(\tilde{\rvx})=0$ for  $\tilde{\rvx} \in \mathbb{R}^{N+1} \setminus \textrm{supp}(\tilde{p}(\tilde{\rvx}))$ and $\nabla^2\varphi(\tilde{\rvx})=-\tilde{p}(\tilde{\rvx})$, we conclude that $\varphi(\tilde{\rvx})\in \gC^2(\mathbb{R}^{N+1} \setminus \textrm{supp}(\tilde{p}(\tilde{\rvx})))$.

For the uniqueness, suppose we have two different solutions $\varphi_1,\varphi_2\in \gC^2(\mathbb{R}^{N+1} \setminus \textrm{supp}(\tilde{p}(\tilde{\rvx})))$ which satisfy 
\begin{equation}
    \nabla^2\varphi_1(\tilde{\rvx}) = -\tilde{p}(\tilde{\rvx}), \nabla^2\varphi_2(\tilde{\rvx}) = -\tilde{p}(\tilde{\rvx}).
\end{equation}
We define $\tilde{\varphi}(\tilde{\rvx})\equiv\varphi_2(\tilde{\rvx})-\varphi_1(\tilde{\rvx})$. Subtracting the above two equations gives
\begin{equation}\label{eq:2unique_1}
    \nabla^2\tilde{\varphi}(\tilde{\rvx}) = 0, \forall \tilde{\rvx} \in \mathbb{R}^{N+1} \setminus \textrm{supp}(\tilde{p}(\tilde{\rvx})).
\end{equation}
By the vector differential identity we have
\begin{equation}\label{eq:2unique_2}
    \tilde{\varphi}(\tilde{\rvx})\nabla^2\tilde{\varphi}(\tilde{\rvx}) = \nabla\cdot(\tilde{\varphi}(\tilde{\rvx})\nabla\tilde{\varphi}(\tilde{\rvx}))-\nabla\tilde{\varphi}(\tilde{\rvx})\cdot \nabla\tilde{\varphi}(\tilde{\rvx}),
\end{equation}
By the divergence theorem we have
\begin{equation}\label{eq:2unique_3}
    \int_{\mathbb{R}^{N+1}} \nabla\cdot(\tilde{\varphi}(\tilde{\rvx})\nabla\tilde{\varphi}(\tilde{\rvx})) d^{N+1}\tilde{\rvx} = \oiint_{\partial\mathbb{R}^{N+1}} \tilde{\varphi}(\tilde{\rvx})\nabla\tilde{\varphi}(\tilde{\rvx})\cdot d^{N}\mat{S} = 0,
\end{equation}
where $d^{N}\mat{S}$ denotes an $N$ dimensional surface element at infinity, and the second equation holds due to zero boundary condition at infinity. Combining Eq.~(\ref{eq:2unique_1})(\ref{eq:2unique_2})(\ref{eq:2unique_3}), we have
\begin{align*}
    \int_{\mathbb{R}^{N+1}} \nabla\cdot(\tilde{\varphi}(\tilde{\rvx})\nabla\tilde{\varphi}(\tilde{\rvx})) d^{N+1}\tilde{\rvx} &= \int_{\mathbb{R}^{N+1} \setminus \textrm{supp}(\tilde{p}(\tilde{\rvx}))} \nabla\cdot(\tilde{\varphi}(\tilde{\rvx})\nabla\tilde{\varphi}(\tilde{\rvx})) d^{N+1}\tilde{\rvx}\\
    &=\int_{\mathbb{R}^{N+1} \setminus \textrm{supp}(\tilde{p}(\tilde{\rvx}))} ||\nabla\tilde{\varphi}(\tilde{\rvx})||^2 d^{N+1}\tilde{\rvx}=0,
\end{align*}
The first equation holds because Lebesgue measure of $\textrm{supp}(\tilde{p}(\tilde{\rvx}))$ is zero. Since $||\nabla\tilde{\varphi}(\tilde{\rvx})||^2$ is an integral of a positive quantity, we must have $\nabla\tilde{\varphi}(\tilde{\rvx})=\mat{0}$, or $\tilde{\varphi}(\tilde{\rvx})=c$, $\forall\tilde{\rvx}\in\mathbb{R}^{N+1}\setminus \textrm{supp}(\tilde{p}(\tilde{\rvx}))$. This means $\varphi_1$ and $\varphi_2$ differ at most by a constant function, but a constant does not affect gradients, so $\nabla\varphi_1(\tilde{\rvx})=\nabla\varphi_2(\tilde{\rvx})$. 
\end{proof}

\begin{figure*}
    \centering
    \includegraphics[width=0.6\textwidth, trim=0cm 3cm 0cm 3cm]{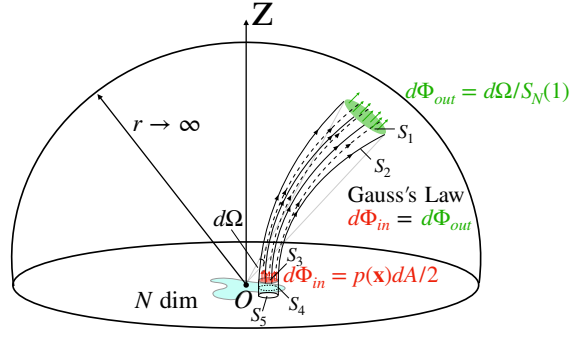}
    \caption{Proof idea of Theorem~\ref{theorem2}. By Gauss's Law, the outflow flux $d\Phi_{out}$ equals the inflow flux $d\Phi_{in}$. The factor of two in $p(\rvx)dA/2$ is due to the symmetry of Poisson fields in $z<0$ and $z>0$.
    }
        \label{fig:theorem2}
\end{figure*}

As illustrated in \Figref{fig:theorem2}, there is a bijective mapping between the upper hemisphere of radius $r$ and the $z=0$ plane, where each pair of corresponding points is connected by an electric field line. 
We will now formally prove that, in the $r\to\infty$ limit, this mapping transforms the arbitrary charge distribution in the source plane (that generated the electric field) into a uniform distribution on the hemisphere. 
\begin{theorem}
\label{theorem2}
 {Suppose particles are sampled from a uniform distribution on the upper ($z>0$) half of the sphere of radius $r$ and evolved by the backward ODE $\frac{d\mat{\tilde{x}}}{dt}=-\mat{E}(\mat{\tilde{x}})$ until they reach the $z=0$ hyperplane, where the Poisson field $\mat{E}(\mat{\tilde{x}})$ is generated by the source $\tilde{p}(\tilde{\mat{x}})$. In the $r\to \infty$ limit, {under the conditions in Lemma~\ref{lemma:data}}, this process generates a particle distribution $\tilde{p}(\tilde{\mat{x}})$, 
i.e., a distribution $p(\mat{x})$ in the $z=0$ hyperplane.} 
\end{theorem}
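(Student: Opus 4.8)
The plan is to turn the Gauss's-law sketch into a rigorous statement by working entirely in the source-free half-space $\{z>0\}$ and exploiting the $z\mapsto -z$ symmetry of $\tilde p(\tilde{\mathbf x})=p(\mathbf x)\delta(z)$ together with the far-field (multipole) expansion of the Poisson field. The backward-ODE map is realized as the field-line correspondence between the hemisphere $S_N^+(r)$ and the $z=0$ plane, and the whole theorem reduces to a change-of-variables identity for its Jacobian, which Gauss's law computes.

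First I would set up the field-line correspondence. By Lemma~\ref{lemma:data} the potential $\varphi$ is $\mathcal{C}^2$ — hence $\mathbf E=-\nabla\varphi$ is locally Lipschitz — off $\mathrm{supp}(\tilde p)$, so through each point of $\{z>0\}$ there is a unique maximal integral curve of $d\tilde{\mathbf x}/dt=\pm\mathbf E$. Along a forward curve $\frac{d}{dt}\varphi=-\|\nabla\varphi\|^2\le 0$; since $\varphi>0$ on $\{z>0\}$ and $\varphi\to 0$ at infinity, every forward curve issuing just above $\mathrm{supp}(p)$ escapes to infinity, while every backward curve from the hemisphere has $\varphi$ increasing and must therefore descend and land on $\mathrm{supp}(p)$ in $\{z=0\}$ (off the support $E_z(\mathbf x,0)=0$ by symmetry, so the horizontal component funnels trajectories inward until they land). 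For $r$ larger than the support radius this produces, off a null set, a Borel bijection $\Phi_r\colon S_N^+(r)\to$ (a neighborhood of $\mathrm{supp}(p)$ in $\{z=0\}$), which is precisely the backward-ODE map of the theorem, and one reads off the generated distribution as $(\Phi_r)_*\mathrm{Unif}(S_N^+(r))$.

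Next I would do the flux bookkeeping. Fix a Borel patch $A\subseteq\mathrm{supp}(p)$, let $T$ be the flux tube swept by the forward curves issuing upward from $A$, and let $\Sigma_r=\Phi_r^{-1}(A)\subseteq S_N^+(r)$ be its upper cap. Since $\nabla\cdot\mathbf E=\tilde p=0$ inside $T$ and its lateral wall is made of field lines (no flux), the divergence theorem gives $\int_{\Sigma_r}\mathbf E\cdot d\mathbf S=\int_A E_z(\mathbf x,0^+)\,d\mathbf x$; the normalization $\nabla\cdot\mathbf E=\tilde p$ together with the $z\mapsto -z$ symmetry forces the surface-charge jump $E_z(\mathbf x,0^+)=-E_z(\mathbf x,0^-)=\frac12 p(\mathbf x)$, so $\int_{\Sigma_r}\mathbf E\cdot d\mathbf S=\frac12\int_A p$. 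On the hemisphere side, compact support and $p\in\mathcal{C}^0$ give the uniform multipole estimate $\mathbf E(\tilde{\mathbf x})=\frac{1}{S_N(1)}\tilde{\mathbf x}/\|\tilde{\mathbf x}\|^{N+1}+O(\|\tilde{\mathbf x}\|^{-(N+1)})$ for $\|\tilde{\mathbf x}\|\ge r$, whence $\int_{\Sigma_r}\mathbf E\cdot d\mathbf S=\frac{\Omega(\Sigma_r)}{S_N(1)}\bigl(1+O(1/r)\bigr)$ with $\Omega(\Sigma_r):=|\Sigma_r|/r^N$ the subtended solid angle. Comparing the two, $\Omega(\Sigma_r)\to\frac{S_N(1)}{2}\int_A p$. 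Finally, $\mathrm{Unif}(S_N^+(r))$ assigns $\Sigma_r$ mass $|\Sigma_r|/|S_N^+(r)|=2\Omega(\Sigma_r)/S_N(1)$, so its pushforward under $\Phi_r$ assigns $A$ mass $2\Omega(\Sigma_r)/S_N(1)\to\int_A p$. Since $A$ ranges over a generating $\pi$-system of Borel subsets of $\mathrm{supp}(p)$ and the candidate limit $p$ is a fixed probability measure, $(\Phi_r)_*\mathrm{Unif}(S_N^+(r))\to p$ as $r\to\infty$, i.e. the limiting distribution in the $z=0$ hyperplane is $\tilde p$.

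I expect the main obstacle to be the first step: pinning down the global geometry of the field lines — showing the backward flow really transports (almost) all of $S_N^+(r)$ onto exactly $\mathrm{supp}(p)$ with no trajectory escaping to infinity, spiraling, or stalling at a zero of $\mathbf E$ or drifting along $\{z=0\}$ outside the support, and that $\Phi_r$ is Borel with enough uniform regularity in $r$ to justify passing to the limit in the pushforward (e.g. via convergence on the $\pi$-system). Once this skeleton is in place, the flux identity and the multipole bound are routine; the ``mild conditions'' alluded to in the statement are exactly those needed to guarantee the skeleton (compact support, $p\in\mathcal{C}^0$, $N\ge 3$, and nondegeneracy of the relevant field lines).
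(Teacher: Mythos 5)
Your argument is essentially the paper's: a flux-tube/Gauss's-law computation comparing the influx $E_z(\mathbf x,0^+)\,dA=\tfrac12 p(\mathbf x)\,dA$ at the base (via the $z\mapsto-z$ symmetry, which the paper packages as a Gaussian pillbox straddling $z=0$) against the far-field monopole outflux $d\Omega/S_N(1)$ on the hemisphere, followed by a change of variables. Your pushforward-over-a-$\pi$-system phrasing is just a tidier way of stating the paper's $d\Omega/dA\propto p(\mathbf x)$ conclusion, and the gap you flag at the end---rigorously establishing that the backward flow gives a bijective, measure-respecting field-line correspondence between $S_N^+(r)$ and $\mathrm{supp}(p)$ with no escaping, spiraling, or stalling trajectories---is one the paper's proof also leaves implicit, so your Lyapunov-style monotonicity-of-$\varphi$ sketch is an addition rather than a shortfall relative to the source.
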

\begin{proof}

{By Lemma~\ref{lemma:data}, we know that with zero boundary at infinity, the Poisson equation $\nabla^2\varphi(\tilde{\rvx})=-\tilde{p}(\tilde{\rvx})$ has a unique solution $\varphi(\tilde{\rvx}) \in \gC^2$ for $\tilde{\rvx} \in \mathbb{R}^{N+1} \setminus \textrm{supp}(\tilde{p}(\tilde{\rvx}))$. Hence $\mat{E}(\mat{\tilde{x}})= -\nabla \varphi(\tilde{\rvx}) \in \gC^1$, guaranteeing the existence-uniqueness of the solution to the ODE $\frac{d\mat{\tilde{x}}}{dt}=-\mat{E}(\mat{\tilde{x}})$ according to Theorem 2.8.1 in \cite{Ricardo2002AMI}.}

Consider the tube in \Figref{fig:theorem2} connecting an area on $dA$ in the $z=\epsilon\to 0^+$ hyperplane ($S_3$) to a solid angle $d\Omega$ on the hemisphere ($S_1$), with $S_2$ as its side. The tube is the space swept by $dA$ following electric field $\mat{E}$, so by definition the electric field is parallel to the tangent space of the tube sides $S_2$. The bottom of the tube $S_3$ is located at $z=\epsilon\to 0^+$, a bit above the $z=0$ plane, so the tube does not enclose any charges.  We note that the divergence of Poisson field is zero in $\mathbb{R}^{N+1} \setminus \textrm{supp}(\tilde{p}(\tilde{\rvx}))$:
\begin{align*}
    \nabla\cdot \mat{E}(\mat{\tilde{x}}) = -\nabla^2\varphi(\tilde{\rvx})=\tilde{p}(\tilde{\rvx})=0, \forall \tilde{\rvx} \in \mathbb{R}^{N+1} \setminus \textrm{supp}(\tilde{p}(\tilde{\rvx}))
\end{align*}
Denote the volume and surface of the tube as $V$ and $\mat{B}$. According to divergence theorem, $ \oiint \mat{E}(\mat{\tilde{x}})\cdot d\mat{B}=\int_V \nabla\cdot \mat{E}(\mat{\tilde{x}}) dV=0$. Hence the net flux leaving the tube is zero:
\begin{equation}\label{eq:tube_zeronetflux}
    \Phi_{S_1} + \Phi_{S_2} + \Phi_{S_3} = 0, \quad
    \Phi_{S_i} \equiv \oiint_{S_i} \mat{E}(\mat{\tilde{x}})\cdot d\mat{B} \quad (i=1,2,3)
\end{equation}

There is no flux through the sides, i.e., $\Phi_{S_2}=0$, since $\mat{E}(\mat{\tilde{x}})$ is orthogonal to the surface element $d\mat{B}$ on the tube sides by definition. As a result, the flux $\Phi_{S_3}$ entering from below must equal the flux $\Phi_{S_1}$ leaving the other end. Denote the $l_2$ norm of the vector $\r$ as $r$. We first calculate the influx $\Phi_{S_3}$. To do so, we study a Gaussian pillbox whose top, side and bottom are $S_3$, $S_4$ and $S_5$. $S_3$ and $S_5$ are located at $z=\epsilon$ and $z=-\epsilon$ ($\epsilon\to 0^+$). Denote the volume and surface of the pillbox as $V'$ and $\mat{B}'$. The pillbox contains charge $p(\mat{x})dA$, so according to Gauss's law $ \oiint \mat{E}(\mat{\tilde{x}})\cdot d\mat{B}'=\int_{V'} \nabla\cdot \mat{E}(\mat{\tilde{x}}) dV'=\int_{V'}\tilde{p}(\tilde{\mat{x}})dV'=p(\mat{x})dA$, i.e.,
\begin{equation}
    \Phi'_{S_3} + \Phi'_{S_4} + \Phi'_{S_5} = p(\mat{x})dA,\quad \Phi'_{S_i} \equiv \oiint_{S_i} \mat{E}(\mat{\tilde{x}})\cdot d\mat{B}' \quad (i=3,4,5)
\end{equation}
The flux on the sides $\Phi'_{S_4}\propto\epsilon\to 0$, and $\Phi_{S_3}'=\Phi_{S_5}'$ due to mirror symmetry of $z=0$. So $\Phi_{S_3}'=\Phi_{S_5}'=p(\mat{x})dA/2$.  Note on the $S_3$ surface, the outflux of the pillbox is exactly the influx of the tube, so we have:
\begin{align*}
    \Phi_{S_3}=-\Phi_{S_3}' = -p(\x)dA/2\numberthis \label{eq:in},
\end{align*}
inserting which and $\Phi_{S_2}=0$ to Eq.~(\ref{eq:tube_zeronetflux}) gives 
\begin{equation}\label{eq:Phi_S1_1}
    \Phi_{S_1} = -\Phi_{S_3} = p(\mat{x})dA/2.
\end{equation}
On the other hand, in the far-field limit $r\to\infty$, since $\textrm{supp}(p(\rvx))$ is bounded, the data distribution can be effectively seen as a point charge (see Appendix~\ref{sec:mul}). By Lemma~\ref{lemma:point}, we have $\lim_{r\to\infty}\E(\r)=- \lim_{r\to\infty}\nabla\varphi(\r) = \frac{{\r}}{S_N(1) r^{N+1}}$. The resulting outflux on the hemisphere is
\begin{align*}
    \Phi_{S_1}=E_r r^N d\Omega=d\Omega/S_N(1) \numberthis \label{eq:out}
\end{align*}
where $E_r\equiv\mat{E}(\rvr) \cdot{\rvr}/{r}$ is the radial component of $\mat{E}$. Comparing \Eqref{eq:Phi_S1_1} and \Eqref{eq:out} yields 
$d\Omega/dA=p(\x)S_N(1)/2\propto p(\x)$.
In other words, the mapping from the $z=0$ hyperplane to the hemisphere dilutes the charge density ${p}({\mat{x}})$ up to a constant factor. Thus by change-of-varible, we conclude that the mapping transforms the data distribution into a uniform distribution on the infinite hemisphere. Since the ODE is reversible, the backward ODE transforms the uniform distributoin on the infinite hemisphere to the distribution $\tilde{p}(\tilde{\rvx})$.
\end{proof}

\subsection{Multipole Expansion}
\label{sec:mul}
We discuss the behaviors of the potential function in Poisson equation~(\Eqref{eq:poisson}) under different scenarios, utilizing the multipole expansion. Suppose we have a unit point charge $q=1$ located at $\mat{x}\in\mathbb{R}^N$. We know that the potential function at another point $\mat{y}\in\mathbb{R}^N$ is $\varphi(\mat{y}-\mat{x})=1/||\mat{y}-\mat{x}||^{N-2}$ (ignoring a constant factor). Now we assume that $\mat{x}$ is close to the origin such that we can Taylor expand around $\mat{x}=0$:
\begin{equation}
    \varphi(\mat{y}-\mat{x})= \varphi(\mat{y}) - \sum_{\alpha=1}^N \mat{x}_\alpha \varphi_\alpha(\mat{y})+\frac{1}{2}\sum_{\alpha=1}^N\sum_{\beta=1}^N \mat{x}_\alpha\mat{x}_\beta \varphi_{\alpha\beta}(\mat{y})-...
\end{equation}
where
\begin{equation}
\begin{aligned}
    &\varphi_\alpha(\mat{y})=\left(\frac{\partial \varphi(\mat{y}-\mat{x})}{\partial \mat{x}_\alpha}\right)_{\mat{x}=0}=(N-2)\frac{\mat{y}_\alpha}{||\mat{y}||^{N}} \\ &\varphi_{\alpha\beta}(\mat{y})=\left(\frac{\partial^2 \varphi(\mat{y}-\mat{x})}{\partial \mat{x}_\alpha\partial \mat{x}_\beta}\right)_{\mat{x}=0}=(N-2)\frac{N\mat{y}_\alpha\mat{y}_\beta-||\mat{y}||^2\delta_{\alpha\beta}}{||\mat{y}||^{N+2}}
\end{aligned}
\end{equation}
In the case where the source is a distribution $p(\mat{x})$, the potential $\varphi(\mat{y})$ can again be Taylor expanded:
\begin{equation}
        \varphi(\mat{y}) = q\varphi(\mat{y}) + \sum_{\alpha=1}^ Nq_\alpha\varphi_\alpha (\mat{y}) + \sum_{\alpha=1}^N\sum_{\beta=1}^N q_{\alpha\beta}\varphi_{\alpha\beta}(\mat{y}) -...
\end{equation}
where 
\begin{equation}
    q = \int p(\mat{x})d\mat{x}, q_\alpha = \int p(\mat{x})\mat{x}_\alpha d\mat{x}, q_{\alpha\beta} = \int p(\mat{x})\mat{x}_\alpha\mat{x}_\beta d\mat{x},
\end{equation}
which are called monopole, dipole and quadrupole in physics, respectively. The gradient field $\mat{E}\mat(y)=\nabla\Phi(\mat{y})$ can be expanded in the same such that
\begin{equation}
    \mat{E}(\mat{y}) = \mat{E}^{(0)}(\mat{y})+\mat{E}^{(1)}(\mat{y})+\mat{E}^{(2)}(\mat{y})+...
\end{equation}
It is easy to check that $||\mat{E}^{(i)}(\mat{y})||$ decays as $1/||\mat{y}||^{N-2+i}$, which means higher-order corrections decay faster than leading terms. So when $||\mat{y}||\to \infty$, only the monopole term $||\mat{E}^{(0)}(\mat{y})||$ matters, which behaves like a point source.

In a more realistic setup, we only have a large but finite $||\mat{y}||$, so the question is: under what condition is the point source approximation valid? We examine $\varphi^{(0)}$, $\varphi^{(1)}$ and $\varphi^{(2)}$ more carefully:
\begin{equation}\label{eq:phi_expansion}
\begin{aligned}
    &\varphi^{(0)} = \frac{1}{||\mat{y}||^{N-2}} \\
    &\varphi^{(1)} = \sum_{\alpha=1}^N (N-2)\frac{\mat{y}_\alpha\mat{x}_\alpha}{||\mat{y}||^N}=(N-2)\frac{\mat{x}^T\mat{y}}{||\mat{y}||^N}\\
    &\varphi^{(2)}=\frac{1}{2}\sum_{\alpha=1}^N\sum_{\beta=1}^N (N-2)\frac{N\mat{y}_\alpha\mat{y}_\beta-||\mat{y}||^2\delta_{\alpha\beta}}{||\mat{y}||^{N+1}}\mat{x}_\alpha\mat{x}_\beta=\frac{N-2}{2}\frac{N(\mat{x}^T\mat{y})^2-||\mat{x}||^2||\mat{y}||^2}{||\mat{y}||^{N+2}}
\end{aligned}
\end{equation}
Since $\varphi^{(1)}$ is an odd function of $\mat{x}$, integrating $\varphi^{(1)}$ over $\mat{x}$ leads to zero (samples are normalized to zero mean). In machine learning applications, $N$ is usually a large number (although in physics $N$ is merely 3). If $\mat{y}$ is a random vector of length $||\mat{y}||$, then $\mat{x}^T\mat{y}\sim (\frac{1}{\sqrt{N}}\pm\frac{1}{N})||\mat{x}||||\mat{y}||$. So Eq.~(\ref{eq:phi_expansion}) can be approximated as 
\begin{equation}
    \varphi^{(0)}\sim \frac{1}{||\mat{y}||^{N-2}}, \varphi^{(2)}\sim \frac{{\sqrt{N}}}{2}\frac{||\mat{x}||^2}{||\mat{y}||^N}
\end{equation}
Requiring $\int \varphi^{(0)} p(\rvx) d\rvx \gg \int \varphi^{(2)} p(\rvx) d \rvx$ gives $||\mat{y}||^2\gg \sqrt{N}\mathop{\mathbb{E}}_{p(x)}||\mat{x}||^2$. So the condition for the point source approximation to be valid is:
\begin{equation}
    \kappa=\frac{2||\mat{y}||^2}{\sqrt{N}\mathop{\mathbb{E}}_{p(x)}||\mat{x}||^2}\gg 1 
\end{equation}
Based on this condition, we can partition space into three zones: (1) the far zone $\kappa\gg1$, where the point source approximation is valid; (2) the intermediate zone $\kappa\sim O(1)$, where the gradient field has moderate curvature; (3) the near zone $\kappa\ll 1$, where the gradient field has high curvature. In practice, the initial value $||\rvy||$ is greater than 1000 (hence $\kappa \gg 1$) with high probability on CIFAR-10 and CelebA datasets, incidating that the initial samples lie in the far zone and gradually move toward the near zone where $||\rvy|| \approx ||\rvx||$ ($\kappa \ll 1$).

We summarize above observations in the following lemma in the $||\rvy|| \to \infty$ limit: 
\begin{lemma}
\label{lemma:point}
Assume the data distribution $p(\rvx) \in \gC^0$ has a compact support in $\mathbb{R}^N$, then the solution $\varphi$ to the Poisson equation $\nabla^2\varphi(\mat{x})=-p(\mat{x})$ with zero boundary condition at infinity satisfies $\lim_{\parallel \rvx \parallel_2 \to \infty} \nabla \varphi(\rvx) = -\frac{1}{S_{N-1}(1)}\frac{\rvx}{\parallel \rvx\parallel^N_2 }$.
\end{lemma}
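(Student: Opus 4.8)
The plan is to combine the explicit Green's representation of the solution with a far-field Taylor expansion, retaining only the leading (monopole) term; this is exactly the content of the multipole expansion around \Eqref{eq:phi_expansion}, specialized to the $\|\rvx\|\to\infty$ regime, so only crude uniform estimates are needed. First I would invoke Lemma~\ref{lemma:ex-uni}: since $p\in\gC^0$ has compact support and $N\ge 3$, the unique solution with zero boundary condition at infinity is $\varphi(\rvx)=\int G(\rvx,\rvy)p(\rvy)\,d\rvy$ with $G(\rvx,\rvy)=\frac{1}{(N-2)S_{N-1}(1)}\|\rvx-\rvy\|^{-(N-2)}$. For $\rvx$ outside $\mathrm{supp}(p)$ the integrand and its $\rvx$-derivatives are continuous and locally bounded uniformly in $\rvy$, so differentiation under the integral sign is legitimate and
\begin{align*}
    \nabla\varphi(\rvx) \;=\; \int \nabla_\rvx G(\rvx,\rvy)\,p(\rvy)\,d\rvy \;=\; -\frac{1}{S_{N-1}(1)}\int \frac{\rvx-\rvy}{\|\rvx-\rvy\|^{N}}\,p(\rvy)\,d\rvy .
\end{align*}

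Next I would fix $R>0$ with $\mathrm{supp}(p)\subseteq\{\|\rvy\|\le R\}$ and expand the kernel $\rvy\mapsto(\rvx-\rvy)/\|\rvx-\rvy\|^{N}$ around $\rvy=\mathbf{0}$ for $\|\rvx\|\ge 2R$, uniformly over $\rvy$ in that ball: using $\|\rvx-\rvy\|^{2}=\|\rvx\|^{2}\big(1-2\rvx^{\top}\rvy/\|\rvx\|^{2}+\|\rvy\|^{2}/\|\rvx\|^{2}\big)$ and a first-order Taylor bound, one gets
\begin{align*}
    \frac{\rvx-\rvy}{\|\rvx-\rvy\|^{N}} \;=\; \frac{\rvx}{\|\rvx\|^{N}} \;+\; O\!\left(\frac{\|\rvy\|}{\|\rvx\|^{N}}\right),
\end{align*}
with implied constant independent of $\rvy\in\{\|\rvy\|\le R\}$. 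Integrating against $p(\rvy)\,d\rvy$ and using that $p$ is a probability density ($\int p\,d\rvy=1$) with bounded first moment ($\int\|\rvy\|\,p(\rvy)\,d\rvy\le R$) yields
\begin{align*}
    \nabla\varphi(\rvx) \;=\; -\frac{1}{S_{N-1}(1)}\left(\frac{\rvx}{\|\rvx\|^{N}}+O\!\left(\frac{1}{\|\rvx\|^{N}}\right)\right) \;=\; -\frac{1}{S_{N-1}(1)}\frac{\rvx}{\|\rvx\|^{N}}\big(1+O(1/\|\rvx\|)\big),
\end{align*}
which is the asserted identity: as $\|\rvx\|\to\infty$, $\nabla\varphi(\rvx)$ agrees to leading order with $\nabla_\rvx G(\rvx,\mathbf{0})=-\tfrac{1}{S_{N-1}(1)}\rvx/\|\rvx\|^{N}$, the potential gradient of a unit point source at the origin (equivalently, $\|\rvx\|^{N-1}\nabla\varphi(\rvx)+\tfrac{1}{S_{N-1}(1)}\rvx/\|\rvx\|\to\mathbf{0}$ along every ray).

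The computation is routine; the only points requiring care are (i) justifying that the Green's representation and its gradient are the objects of interest — handled by Lemma~\ref{lemma:ex-uni} together with the compact support of $p$ — and (ii) making the remainder genuinely of order $o(\|\rvx\|^{-(N-1)})$, i.e.\ strictly lower order than the monopole term, rather than merely $o(1)$; this is bookkeeping in the expansion above but is the substantive part of the statement, since both sides of the displayed limit tend to $\mathbf{0}$. I would also remark that the sharper, term-by-term version of this argument — separating monopole, dipole and quadrupole contributions and the quantitative ``far zone'' threshold $\kappa\gg1$ — is already carried out around \Eqref{eq:phi_expansion}, so Lemma~\ref{lemma:point} can equivalently be read off from that discussion by discarding all but the monopole term.
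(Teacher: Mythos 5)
Your proof is correct and follows essentially the same route as the paper: invoke Lemma~\ref{lemma:ex-uni} for the Green's representation of $\nabla\varphi$, use compactness of $\mathrm{supp}(p)$, and pass to the far-field limit of the kernel $(\rvx-\rvy)/\|\rvx-\rvy\|^N$. The one improvement is that you justify the interchange of limit and integral with an explicit uniform Taylor bound $O(\|\rvy\|/\|\rvx\|^N)$ over $\|\rvy\|\le R$ (which also makes precise the asymptotic $o(\|\rvx\|^{-(N-1)})$ content of the statement), whereas the paper's proof performs the interchange without comment.
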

\begin{proof}
By Lemma~\ref{lemma:ex-uni}, the gradient of the solution has the following form:
\begin{align*}
    \nabla \varphi(\mat{x}) = \int\ \nabla_\mat{x}G(\mat{x},\mat{y})p(\mat{y})d\mat{y},\quad \nabla_\mat{x} G(\mat{x},\mat{y}) = -\frac{1}{S_{N-1}(1)} \frac{\mat{x}-\mat{y}}{||\mat{x}-\mat{y}||^{N}}.
\end{align*}

Since $p(\rvx)$ has a bounded support, we assume $\max\{\parallel \rvx \parallel_2: p(\rvx) \not = 0\}<B$. On the other hand, we have
\begin{align*}
    \lim_{\parallel \rvx \parallel_2 \to \infty}\nabla_\mat{x} G(\mat{x},\mat{y}) =\lim_{\parallel \rvx \parallel_2 \to \infty} -\frac{1}{S_{N-1}(1)} \frac{\mat{x}-\mat{y}}{||\mat{x}-\mat{y}||^{N}}=\lim_{\parallel \rvx \parallel_2 \to \infty}-\frac{1}{S_{N-1}(1)} \frac{\mat{x}}{||\mat{x}||^{N}}
\end{align*}
for $\forall y$ such that $\parallel y \parallel_2 < B$. Hence,
\begin{align*}
    \lim_{\parallel \rvx \parallel_2 \to \infty} \nabla \varphi(\rvx) =\lim_{\parallel \rvx \parallel_2 \to \infty}\int\ \nabla_\mat{x}G(\mat{x},\mat{y})p(\mat{y})d\mat{y}&= \int\lim_{\parallel \rvx \parallel_2 \to \infty} \nabla_\mat{x}G(\mat{x},\mat{y})p(\mat{y})d\mat{y}\\
    &=-\frac{1}{S_{N-1}(1)}\frac{\rvx}{\parallel \rvx\parallel^N_2 }
\end{align*}

\end{proof}



\subsection{{Extension of Green's Function in $N$-dimensional Space}}
\label{app:green}
In this section, we show that the function $G(\rvx, \rvy)$ defined in \Eqref{eq:poisson_solution} is the $N$-dimensional extension of the Green's function,     $\varphi(\mat{x}) = \int G(\mat{x},\mat{y})\rho(\mat{y})d\mat{y}$ solves the Poisson equation $\nabla^2\varphi(\mat{x}) = -\rho(\mat{x})$.
\begin{lemma}
\label{lemma:green}
Assume the dimension $N\ge 3$, and the source term satisfies $\rho \in \gC^0(\Omega), \int_{\mathbb{R}^N}\rho^2(\rvx)d\rvx<+\infty, \lim_{\parallel \rvx \parallel_2 \to \infty} \rho(\rvx) = 0$. The extension of Green's function $G(\mat{x},\mat{y})=\frac{1}{(N-2)S_{N-1}(1)}\frac{1}{||\mat{x}-\mat{y}||^{N-2}}$ solves the Poisson equation $\nabla^2_{\mat{x}} G(\mat{x},\mat{y})=-\delta(\mat{x}-\mat{y})$. In addition, with zero boundary condition at infinity~($\lim_{\parallel \rvx \parallel_2 \to \infty}\varphi(\mat{x})=0$), $\varphi(\mat{x}) = \int G(\mat{x},\mat{y})\rho(\mat{y})d\mat{y}$ solves the Poisson equation $\nabla^2\varphi(\mat{x}) = -\rho(\mat{x})$.
\end{lemma}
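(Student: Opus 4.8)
$G$ is the classical fundamental solution of the Laplacian on $\mathbb{R}^N$, so the plan is the standard three-step argument: verify that $G$ is harmonic off the diagonal; localize the singularity to a unit Dirac mass by an excision/Green's-identity computation that also pins down the constant; and then pass from a point source to a general source $\rho$ by convolution.

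For the first step, fix $\mat{y}$, set $r=\parallel\mat{x}-\mat{y}\parallel_2$ and $G=\frac{1}{(N-2)S_{N-1}(1)}\,r^{-(N-2)}$. Using the radial Laplacian $\nabla^2 f(r)=f''(r)+\frac{N-1}{r}f'(r)$, a one-line computation gives $\nabla^2_{\mat{x}}G(\mat{x},\mat{y})=0$ for all $\mat{x}\neq\mat{y}$, together with $\partial_r G=-\frac{1}{S_{N-1}(1)}\,r^{-(N-1)}$. This already shows that $-\delta(\mat{x}-\mat{y})$ can be supported only at $\mat{x}=\mat{y}$, and, since $r^{-(N-2)}\,r^{N-1}$ is integrable near $r=0$ for $N\ge 3$, that $G$ is locally integrable.

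For the second step, I would prove the distributional identity $\int_{\mathbb{R}^N}G(\mat{x},\mat{y})\nabla^2\phi(\mat{x})\,d\mat{x}=-\phi(\mat{y})$ for every $\phi\in\gC^\infty_c(\mathbb{R}^N)$. Taking $\mat{y}=\mat{0}$ by translation invariance, excise a ball $B_\epsilon$ about the origin and apply Green's second identity on $\mathbb{R}^N\setminus B_\epsilon$; since $G$ is harmonic there and $\phi$ has compact support, everything collapses to a boundary integral over $\partial B_\epsilon$ with inward radial normal, $-\int_{\mathbb{R}^N\setminus B_\epsilon}G\,\nabla^2\phi\,d\mat{x}=\oint_{\partial B_\epsilon}\big(G\,\partial_r\phi-\phi\,\partial_r G\big)\,dS$. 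On $\partial B_\epsilon$ one has $-\partial_r G=\frac{1}{S_{N-1}(1)}\epsilon^{-(N-1)}$ and $\mathrm{area}(\partial B_\epsilon)=S_{N-1}(1)\epsilon^{N-1}$, so $\oint_{\partial B_\epsilon}(-\phi\,\partial_r G)\,dS\to\phi(\mat{0})$ by continuity of $\phi$, while $\oint_{\partial B_\epsilon}G\,\partial_r\phi\,dS=O(\epsilon^{-(N-2)}\cdot\epsilon^{N-1})=O(\epsilon)\to 0$; dominated convergence handles the left-hand side. This is exactly $\nabla^2_{\mat{x}}G(\mat{x},\mat{y})=-\delta(\mat{x}-\mat{y})$.

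For the third step, write $\varphi(\mat{x})=\int G(\mat{x},\mat{y})\rho(\mat{y})\,d\mat{y}$ and test against $\phi\in\gC^\infty_c$: Fubini — justified by local integrability of $G$ together with the integrability/decay hypotheses on $\rho$, and immediate when $\rho$ has compact support — gives $\int\varphi\,\nabla^2\phi=\int\rho(\mat{y})\big(\int G(\mat{x},\mat{y})\nabla^2\phi(\mat{x})\,d\mat{x}\big)\,d\mat{y}=-\int\rho\,\phi$, that is, $\nabla^2\varphi=-\rho$ in the weak sense (upgradable to a classical $\gC^2$ identity when $\rho$ is additionally locally H\"older, which is the regularity actually invoked later). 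The boundary condition $\lim_{\parallel\mat{x}\parallel_2\to\infty}\varphi(\mat{x})=0$ follows by splitting the $\mat{y}$-integral into a bounded region, on which $G(\mat{x},\mat{y})\to 0$ uniformly, and its complement, controlled by the decay/integrability of $\rho$ (again immediate if $\rho$ has compact support). The only genuinely non-routine point is the $\epsilon\to0$ surface-integral limit in step two, which is precisely where the normalization $1/((N-2)S_{N-1}(1))$ is forced; everything else is bookkeeping.
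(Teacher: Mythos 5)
Your proof is correct and follows essentially the same route as the paper's: verify harmonicity off the diagonal (you via the radial Laplacian, the paper by direct differentiation), pin down the unit mass at the singularity by a small-ball excision (you via Green's second identity with a test function, the paper via the divergence theorem on $S_\epsilon$), and then pass to general $\rho$ by convolution and check the decay at infinity. Your version is somewhat more careful about the distributional interpretation — phrasing the delta identity as a statement tested against $\phi\in\gC_c^\infty$ and invoking Fubini rather than formally interchanging $\nabla^2$ with the integral — and you correctly flag that the upgrade from a weak solution to a classical $\gC^2$ one requires more than $\rho\in\gC^0$ (e.g.\ local H\"older continuity), a regularity subtlety the paper's companion Lemma~\ref{lemma:ex-uni} glosses over.
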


\begin{proof}
It is convenient to denote $\mat{r}=\mat{x}-\mat{y}$, $r=||\mat{r}||$ and notice $\partial r/\partial \mat{x}=\mat{r}/r$. Firstly, we calculate $\nabla_{\mat{x}} G(\mat{x},\mat{y})$:
\begin{equation}
\begin{aligned}
    \nabla_{\mat{x}} G(\mat{x}, \mat{y}) & = \frac{1}{(N-2)S_{N-1}(1)}\nabla_{\mat{x}}(\frac{1}{r^{N-2}}) \\
    & = \frac{1}{(N-2)S_{N-1}(1)} \frac{\partial}{\partial r}(\frac{1}{r^{N-2}})\nabla_\mat{x}r \\
    & = -\frac{1}{S_{N-1}(1)}\frac{\mat{r}}{r^N}
\end{aligned}
\end{equation}
Then we calculate $\nabla_{\mat{x}}^2 G(\mat{x},\mat{y})$:
\begin{equation}
    \begin{aligned}
        \nabla_{\mat{x}}^2 G(\mat{x},\mat{y}) &\equiv \nabla_{\mat{x}}\cdot\nabla_{\mat{x}}G(\mat{x},\mat{y}) \\
        & = -\frac{1}{S_{N-1}(1)}\nabla_{\mat{x}}\cdot \frac{\mat{r}}{r^N} \\ 
        & = -\frac{1}{S_{N-1}(1)}(\nabla_{\mat{x}}(\frac{1}{r^N})\cdot\mat{r}+\frac{1}{r^N}\nabla_{\mat{r}}\cdot\mat{r}) \\
        & = -\frac{1}{S_{N-1}(1)}(-\frac{N}{r^N}+\frac{N}{r^N}) \\
        & = -\frac{0}{S_{N-1}(1)r^N}
    \end{aligned}
\end{equation}
which is 0 for $r>0$, but undermined for $r=0$. So we are left with proving
\begin{equation}
    \int_{S_\epsilon(\mat{y})}\nabla_{\mat{x}}^2 G(\mat{x},\mat{y}) d^N\mat{x} = -1,
\end{equation}
where $S_\epsilon(\mat{y})$ denotes a ball centered at $\mat{y}$ with a radius $\epsilon\to 0^+$. With the divergence theorem, we have
\begin{equation}
    \int_{S_\epsilon(\mat{y})}\nabla_{\mat{x}}^2 G(\mat{x},\mat{y}) d^N\mat{x} =  \oiint_{\partial S_\epsilon(\mat{y})} \nabla_{\mat{x}}G(\mat{x},\mat{y})\cdot d^{N-1}\mat{B}
\end{equation}
where the surface integral can be computed
\begin{equation}
    \oiint_{\partial S_\epsilon(\mat{y})} \nabla_{\mat{x}}G(\mat{x},\mat{y})\cdot d^{N-1}\mat{B}  = \oiint_{\partial S_\epsilon(\mat{y})} (-\frac{1}{S_{N-1}(1)}\frac{\mat{r}}{r^N})\cdot d^{N-1}\mat{B} = -\frac{1}{S_{N-1}(1)}\frac{S_{N-1}(\epsilon)}{\epsilon^{N-1}} = -1
\end{equation}
in which we used $\oiint_{\partial S_\epsilon(\mat{y})}\mat{r}\cdot d^{N-1}\mat{B}=\epsilon S_{N-1}(\epsilon)$. Together, we conclude that
\begin{align*}
    \nabla^2_{\mat{x}} G(\mat{x},\mat{y})=-\delta(\mat{x}-\mat{y}) \numberthis \label{eq:G-delta}
\end{align*}
Next we show that $\varphi(\mat{x}) = \int G(\mat{x},\mat{y})\rho(\mat{y})d\mat{y}$ solves $\nabla^2 \varphi(\mat{x}) = -\rho(\rvx)$. Taking the Laplacian operator of both sides gives:
\begin{align*}
    \nabla^2_{\mat{x}} \varphi(\mat{x}) &= \nabla^2_{\mat{x}} \int G(\mat{x},\mat{y})\rho(\mat{y})d\mat{y}\\
    &=\int \nabla^2_{\mat{x}} G(\mat{x},\mat{y})\rho(\mat{y})d\mat{y}\\
    &= \int  -\delta(\mat{x}-\mat{y})\rho(\mat{y})d\mat{y}\quad \textrm{(By \Eqref{eq:G-delta})}\\
    &= -\rho(\mat{x})
\end{align*}

In addition, we show that $\varphi(\rvx)$ is zero at infinity. Since $\rho(\rvx) \in \gC^0$ and has compact support, we know that $\rho(\rvx)$ is bounded, and let $|\rho(\rvx)|<B$.
\begin{align*}
    \lim_{\parallel \rvx \parallel_2 \to \infty}\varphi(\mat{x}) &= \lim_{\parallel \rvx \parallel_2 \to \infty}\int G(\mat{x},\mat{y})\rho(\mat{y})d\mat{y}\\
    &\le B\lim_{\parallel \rvx \parallel_2 \to \infty}\int_{\textrm{supp}(\rho)}\frac{1}{(N-2)S_{N-1}(1)}\frac{1}{||\mat{x}-\mat{y}||^{N-2}}d\mat{y}\\
    &=0
\end{align*}
The last equality holds since $\textrm{supp}(\rho)$ is a compact set.
\end{proof}

\subsection{Proof for the Prior Distribution on $z=\zmax$ Hyperplane}\label{app:prior_distribution}

\begin{figure}[htbp]
    \centering
    \includegraphics[width=0.5\linewidth]{./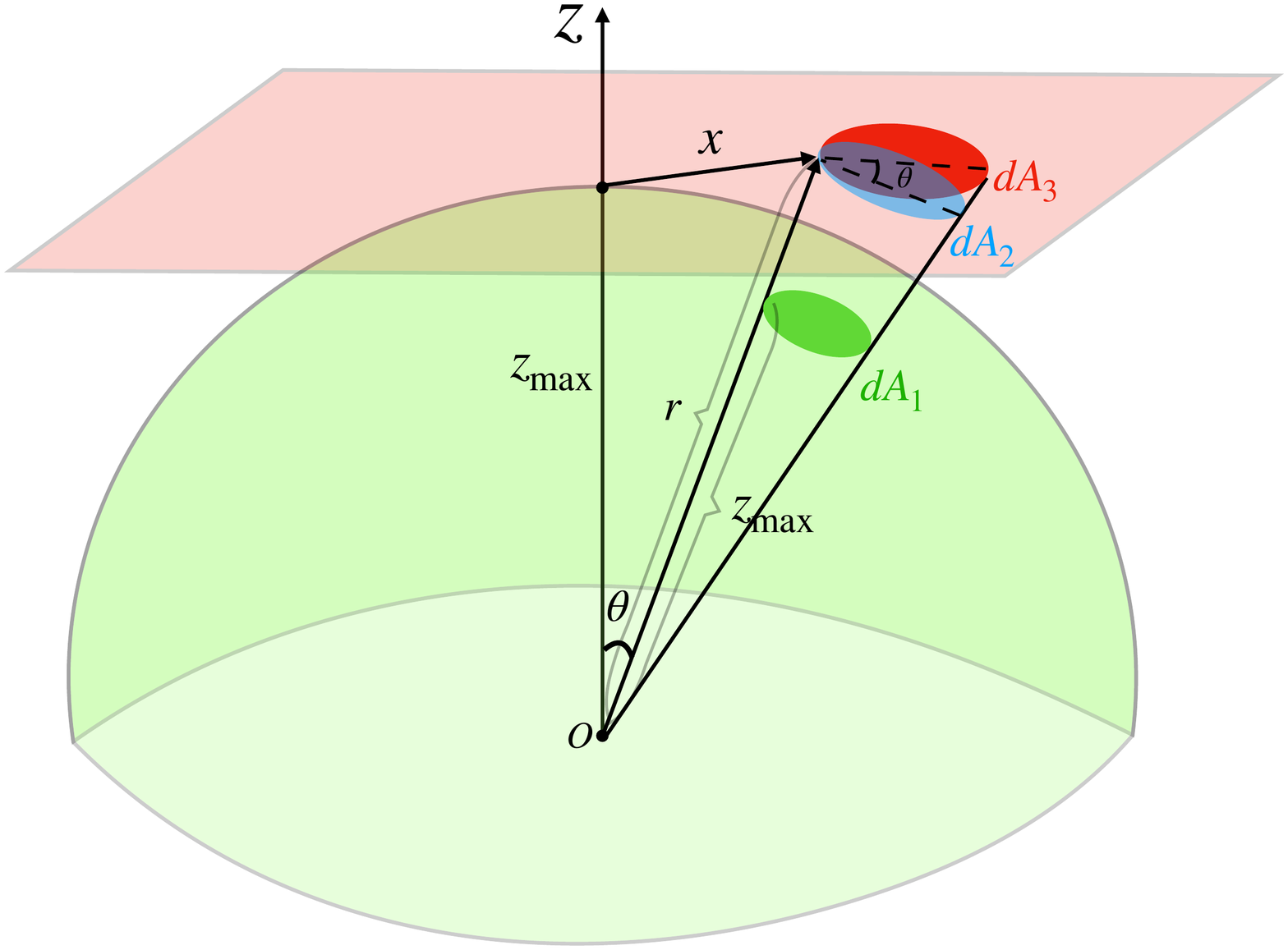}
    \caption{Diagram of the deviation in Proposition~\ref{prop:prior}}
    \label{fig:init_dist}
\end{figure}

We obtain the prior distribution $p_{\textrm{prior}}$ by projecting the uniform distribution $\gU(S_N^+(\zmax))$ on the hemisphere $S_N^+(\zmax)$ to the $z=\zmax$ hyperplane. In the following proposition, we show that the projected distribution is $p_{\textrm{prior}}(\mat{x})=\frac{2\zmax}{S_N(1)r^{N+1}}$. 
\begin{proposition}
\label{prop:prior}
The radial projection of $\gU(S_N^+(\zmax))$ on the hemisphere $S_N^+(\zmax)$ to the $z=\zmax$ hyperplane is $p_{\textrm{prior}}(\mat{x})=\frac{2\zmax}{S_N(1)r^{N+1}}$.
\end{proposition}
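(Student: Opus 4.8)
The plan is to compute the push-forward of the uniform density on $S_N^+(\zmax)$ under the radial projection by a solid-angle argument, reusing the flux bookkeeping from the proof of Theorem~\ref{theorem2}. First I would fix coordinates: identify the target hyperplane $\{z=\zmax\}$ with $\mathbb{R}^N$ through $\mat{x}\mapsto(\mat{x},\zmax)$, write $r\equiv(\|\mat{x}\|_2^2+\zmax^2)^{1/2}$ for the Euclidean distance from the origin to $(\mat{x},\zmax)$, and observe that the radial projection carries a hemisphere point $(\mat{y},z)$ with $\|\mat{y}\|_2^2+z^2=\zmax^2$ and $z>0$ to $(\zmax\mat{y}/z,\zmax)$, with inverse $\mat{x}\mapsto\tfrac{\zmax}{r}(\mat{x},\zmax)\in S_N^+(\zmax)$. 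This is a bijection between the open hemisphere and all of $\mathbb{R}^N$, the equator (a null set) escaping to infinity, so no covering or orientation subtleties arise.

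The heart of the argument is to count the probability mass carried by an infinitesimal cone of directions of solid angle $d\Omega$ at the origin in two ways. On $S_N^+(\zmax)$ the surface normal is radial, so this cone subtends area $\zmax^N\,d\Omega$; since the uniform density on the hemisphere equals $2/S_N(\zmax)=2/(S_N(1)\zmax^N)$, the cone carries mass $\tfrac{2}{S_N(1)}\,d\Omega$. On the hyperplane, the same cone meets $\{z=\zmax\}$ in an area element $dA$ at $(\mat{x},\zmax)$; writing $\theta$ for the angle between the hyperplane normal $\hat z$ and the radial direction $(\mat{x},\zmax)/r$, one has $\cos\theta=\zmax/r$, and the standard relation between a solid angle and a tilted planar patch in $\mathbb{R}^{N+1}$ gives $d\Omega=\cos\theta\,dA/r^{N}=\zmax\,dA/r^{N+1}$. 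Equating the two expressions, $p_{\textrm{prior}}(\mat{x})\,dA=\tfrac{2}{S_N(1)}\,d\Omega=\tfrac{2\zmax}{S_N(1)r^{N+1}}\,dA$, which is the claimed formula.

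The only genuinely delicate point is getting the dimension-dependent solid-angle/area correspondence right --- in particular the tilt factor $\cos\theta$ and the exponent $r^{N}$ rather than $r^{N-1}$ or $r^{N+1}$; I would pin this down by noting that $\mathbf{F}(\mat{r})=\mat{r}/\|\mat{r}\|_2^{N+1}$ is divergence-free on $\mathbb{R}^{N+1}\setminus\{\mathbf{0}\}$ with total flux $S_N(1)$, whence $d\Omega=\mathbf{F}\cdot\hat n\,dA=(\cos\theta/r^{N})\,dA$ by direct evaluation. As a consistency check I would verify $\int_{\mathbb{R}^N}p_{\textrm{prior}}(\mat{x})\,d\mat{x}=1$ using $\int_{\mathbb{R}^N}(\|\mat{x}\|_2^2+\zmax^2)^{-(N+1)/2}\,d\mat{x}=S_N(1)/(2\zmax)$, obtained from polar coordinates, the substitution $\|\mat{x}\|_2=\zmax\tan\phi$, and $S_N(1)=S_{N-1}(1)\int_0^\pi\sin^{N-1}\phi\,d\phi$. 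A purely computational alternative, which I would mention but not carry out, is to parametrize $S_N^+(\zmax)$ by $\Psi(\mat{x})=\tfrac{\zmax}{r}(\mat{x},\zmax)$, evaluate the Gram determinant $\det(D\Psi^\top D\Psi)$ to extract the surface element, and multiply by the uniform density; this yields the same result but the determinant is the one tedious step.
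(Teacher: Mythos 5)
Your proof is correct and follows essentially the same approach as the paper's: both are infinitesimal solid-angle bookkeeping arguments built from the same two ingredients, the radial dilation factor $(r/\zmax)^N$ between the hemisphere patch and the cone cross-section, and the tilt factor $\cos\theta=\zmax/r$ between that cross-section and the hyperplane patch. The paper works directly with area elements $dA_1,dA_2,dA_3$ while you route through the solid angle $d\Omega$ and add a flux-based justification plus a normalization check, but the computation is identical.
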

\begin{proof}
We calculate the change-of-variable ratio by comparing two associate areas. As illustrated in \Figref{fig:init_dist}, an area $dA_1$ on $S_N^+(\zmax)$ is projected to an area $dA_3$ on the hyperplane in the $(\rvx,\zmax)$ direction, and we have $$\gU(S_N^+(\zmax))dA_1=p_{\textrm{prior}}(\mat{x})dA_3$$ We aim to calculate the ratio $dA_1/dA_3$ below. We define the angle between $(\mat{0}, \zmax)$ and $\tilde{\mat{x}}=(\mat{x},\zmax)$ to be $\theta$. We project $dA_3$ to the hyperplane orthogonal to $\tilde{\mat{x}}$ to get $dA_2=dA_3{\rm cos}\theta=dA_3\zmax/r$ where $r\equiv ||\mat{\tilde{x}}||_2=\sqrt{||\mat{x}||_2^2+\zmax^2}$. Since $dA_1$ is parallel to $dA_2$ and they lie in the same cone from the origin $O$, we have $dA_2/dA_1=(r/\zmax)^N$. Combining all the results gives
\begin{align*}
    p_{\textrm{prior}}(\mat{x})=\gU(S_N^+(\zmax))\frac{dA_1}{dA_3} = \gU(S_N^+(\zmax))\frac{dA_1}{dA_2}\frac{dA_2}{dA_3}=\frac{2}{S_N(1)\zmax^N}(\frac{\zmax}{r})^N\frac{\zmax}{r}=\frac{2\zmax}{S_N(1)r^{N+1}}.
\end{align*}
\end{proof}

In order to sample from $p_{\textrm{prior}}(\rvx)$, we first sample the norm~(radius) $R=||\rvx||_2$ from the distribution:
\begin{align*}
        p_{\textrm{radius}}(R)&\propto R^{N-1}p_{\textrm{prior}}(\mat{x})\qquad \textrm{($p_{\textrm{prior}}$ is isotropic)}\\
        &\propto R^{N-1}/{(||\rvx||_2^2+\zmax^2)^{\frac{N+1}{2}}}\\
        &= R^{N-1}/{(R^2+\zmax^2)^{\frac{N+1}{2}}} \numberthis \label{eq:propto}
\end{align*}
and then uniformly sample its angle. Sampling from $p_{\textrm{prior}}$ encompasses three steps. We first sample a real number $r_1$ with parameters $\alpha=\frac{N}{2}, \beta=\frac12$, \ie
\begin{align*}
    R_1 \sim \textrm{Beta}(\alpha, \beta)
\end{align*}
Next, we set $R_2=\frac{R_1}{1-R_1}$ such that $R_2$ is effectively sampled from the inverse beta distribution~a(also known as beta prime distribution) with parameters $\alpha=\frac{N}{2}, \beta=\frac12$. Finally, we set $R_3 = \sqrt{\zmax^2 R_2}$. To verify the pdf of $R_3$ is $p_{\textrm{radius}}$, note that the pdf of inverse beta distribution is
\begin{align*}
    p(R_2) \propto R_2^{\frac{N}{2}-1}(1+R_2)^{-\frac{N}{2}-\frac12}
\end{align*}
Next, by change-of-variable, the pdf of $R_3=\sqrt{\zmax^2 R_2}$ is
\begin{align*}
    p(R_3) &\propto R_2^{\frac{N}{2}-1}(1+R_2)^{-\frac{N}{2}-\frac12}*\frac{2R_3}{\zmax^2}\\
    &\propto\frac{R_3R_2^{\frac{N}{2}-1}}{(1+R_2)^{\frac{N+1}{2}}}\\
    &= \frac{(R_3/\zmax)^{N-1}}{(1+(R_3^2/\zmax^2))^{\frac{N+1}{2}}}\\
    &\propto  \frac{R_3^{N-1}}{(1+(R_3^2/\zmax^2))^{\frac{N+1}{2}}}\\
    &\propto  \frac{R_3^{N-1}}{(\zmax^2+R_3^2)^{\frac{N+1}{2}}} \propto p_{\textrm{radius}}(R_3)\qquad \textrm{(By \Eqref{eq:propto})}
\end{align*}

Hence we conclude that $p(R_3) = p_{\textrm{radius}}(R_3)$.

\section{Experimental Details}

\subsection{Training}
\label{app:training}

In this section we include more details about the training of PFGM and other baselines. We show the hyper-parameters settings for all the baselines~(Appendix~\ref{app:hyper-train}). All the experiments are run on a single NVIDIA A100 GPU.

\subsubsection{Additional Settings}
\label{app:hyper-train}

\paragraph{PFGM} We set the hyper-parameters $\gamma=5$, the larger batch size for calculating normalize field $|\gB_L|=2048~\textrm{(CIFAR-10)}, 256~\textrm{(CelebA)}, 64~\textrm{(LSUN bedroom)}$ in Algorithm~\ref{alg:pf}, and $M=291~\textrm{(CIFAR-10, CelebA)}/356~\textrm{ (LSUN bedroom)}$, $\sigma=0.01$ and $\tau=0.03$ in Algorithm~\ref{alg:ode}. We use the a batch size of $|\gB|=128~\textrm{(CIFAR-10, CelebA)}/32~\textrm{ (LSUN bedroom)}$, the same Adam optimizer and exponential moving average in \cite{Song2021ScoreBasedGM}. We center the data around the origin. The initial $z$ components in the normalized field are approximately zero with small initial $|\epsilon_z|$ values in Algorithm~\ref{alg:ode}. In this case, the trajectories of the forward ODE terminate at points that are unlikely traversed by the backward ODE, \ie points with large $\parallel \rvx\parallel_2$ and small $z$. In light of this, we heuristically confine the maximum sampling step to $M=200~\textrm{ (CIFAR-10, CelebA)}/250~\textrm{ (LSUN bedroom)}$ for points with the initial $|\epsilon_z|$ smaller than $0.005$. More principal solutions are left for future works.

For \textit{selecting $M$ in more general settings}, we recommend the following rule-of-thumb. According to analysis in \Secref{sec:mul}, given a perturbation point $(\rvy,z)$ when setting the exponent $m=M$ in Algorithm~\ref{alg:ode}, we can ensure the point source approximation by
\begin{align*}
    {||\mat{y}||^2}\gg {\sqrt{N}{\mathbb{E}}_{p(\rvx)}||\mat{x}||^2}/2 \numberthis \label{eq:rule-M}
\end{align*}
where $N$ is the data dimension and $p(\rvx)$ is the data distribution. By WLLN, we have $||\epsilon_\rvx|| = \sqrt{N}\sigma$, and recall that $
    \rvy = \rvx+\parallel \epsilon_\rvx \parallel(1+\tau)^M\rvu
$
where $\epsilon = (\epsilon_\rvx, \epsilon_z) \sim \gN(0, \sigma^2I_{N+1\times N+1})$, $\rvu\sim \gU(S_N(1))$. Together, we conclude $||\rvy|| \approx \sqrt{N}\sigma(1+\tau)^M$. Substituting in \Eqref{eq:rule-M}, we have 
\begin{align*}
    M > \frac12 \log_{1+\tau}{\frac{\mathbb{E}_{p(\rvx)}||\rvx||^2}{2\sqrt{N}\sigma^2}} = \frac12 \frac{\ln{\frac{\mathbb{E}_{p(\rvx)}||\rvx||^2}{2\sqrt{N}\sigma^2}}}{\ln{1+\tau}}
\end{align*}
We empirically observe that setting $M=  \frac34 \frac{\ln{\frac{\mathbb{E}_{p(\rvx)}||\rvx||^2}{2\sqrt{N}\sigma^2}}}{\ln{1+\tau}}$ already gives good results, and the corresponding $||\rvy||\approx 3000$. For example, on CIFAR-10 datasets, $N=3072, \tau=0.03, \sigma=0.01, \mathbb{E}_{p(\rvx)}||\rvx||^2\approx 900$, we have $M=\frac34 \frac{\ln{\frac{\mathbb{E}_{p(\rvx)}||\rvx||^2}{2\sqrt{N}\sigma^2}}}{\ln{1+\tau}} \approx 291$. 

Since we are operating in the augmented space, we add minor modifications to the DDPM++/DDPM++ deep architectures to accommodate the extra dimension. More specifically, we replace the conditioning time variable in VP/sub-VP with the additional dimension $z$ in PFGM as the input to the positional embedding. We also need to add an extra scalar output representing the $z$ direction. To this end, we add an additional output channel to the final convolution layer and take the global average pooling of this channel to obtain the scalar. {For LSUN bedroom dataset, we both experiments with the channel configurations suggested in NSCN++~\cite{Song2021ScoreBasedGM} and DDPM~\cite{Ho2020DenoisingDP}.}

\paragraph{VE/VP/sub-VP} We use the same set of hyper-parameters and the NCSN++/DDPM++ (deep) backbone and the continuous-time training objectives for forward SDEs in \cite{Song2021ScoreBasedGM}.

\subsection{Sampling}
\label{app:sampling}

We provide more details of PFGM and VE/VP sampling implementations in Appendix~\ref{app:sample-add}. We further discuss two techniques used in PFGM ODE sampler: change-of-variable formula~(Appendix~\ref{app:exp}) and the substitution of ground-truth Poisson field direction on $z$~(Appendix~\ref{app:sub}).

\subsubsection{Additional settings}
\label{app:sample-add}
\paragraph{PFGM} For RK-45 sampler, we use the function implemented in \texttt{scipy.integrate.solve\_ivp} with \texttt{atol}=$1e-4$, \texttt{rtol}=$1e-4$. For forward Euler method, we discretize the ODE with constant step size determined by the number of steps, \ie step size = $(\log \zmax - \log \zmin)$/number of steps for the backward ODE~(\Eqref{eq:backode}). {As in [1], we set the terminal value of $z_{min}=1e-3$. We choose $\zmax = 40 \textrm{ (CIFAR-10)},60 \textrm{ (CelebA $64^2$)}, 100\textrm{ (LSUN bedroom)}$ to satisfy the condition $\kappa \gg 1$ by the multipole expansion analysis in Appendix~\ref{sec:mul}. The condition ensures that the data distribution can be viewed roughly as a point source at origin. For example, we set $z_{max}=40$ on CIFAR-10, and the corresponding $\kappa$ is greater than $50$ with high probability. The hyperparameters work well without further fine tuning. Hence, we hypothesize that PFGM is insensitive to the choice of hyperparameters in a reasonable range, as shown in Table~\ref{table:fid-zmax}.} We clip the norms of initial samples into $(0, 3000)$ for CIFAR-10, $(0,6000)$ for CelebA and $(0, 30000)$ for LSUN bedroom.

For selecting $\zmax$ and clipping upper bound of norms for general datasets, we recommend the following rule-of-thumb. Recall that during the training perturbations~(\Eqref{eq:geo-ode}), given a random initial value $\epsilon_z \sim \gN(0, \sigma^2)$, maximum $z$ is
\begin{align*}
     z=|\epsilon_z| (1+\tau)^M
\end{align*}
Hence we set $\zmax = \mathbb{E}[|\epsilon_z| (1+\tau)^M]=\sqrt{\frac{2}{\pi}}\sigma(1+\tau)^M$. For example, on CIFAR-10, $\tau=0.03, M=291$, and $\zmax\approx 43$. The clipping upper value is similarity derived, by setting it to $\mathbb{E}[||\epsilon_\rvx|| (1+\tau)^M] = \sqrt{N}\sigma (1+\tau)^M\approx 3000$, where $\epsilon_\rvx\sim \gN(0, \sigma^2I_{N\times N})$. By combining \Eqref{eq:rule-M}, we further have
\begin{align*}
    &\zmax = \sqrt{\frac{2}{\pi}}\sigma(1+\tau)^M = \sqrt{\frac{2}{\sigma\pi}}\left(\frac{\mathbb{E}_{p(\rvx)}||\mat{x}||^2}{2\sqrt{N}}\right)^{\frac34}\\
    &\textrm{clipping upper value}=\sqrt{N}\sigma (1+\tau)^M= \sqrt{\frac{N}{\sigma}}\left(\frac{\mathbb{E}_{p(\rvx)}||\mat{x}||^2}{2\sqrt{N}}\right)^{\frac34}
\end{align*}
where $N$ is the data dimension and $p(\rvx)$ is the data distribution. These formulas are easier for practitioner to apply PFGM on new datasets.

\paragraph{VE/VP/sub-VP} For the PC sampler in VE, we follow \cite{Song2021ScoreBasedGM} to set the reverse diffusion process as the predictor and the Langevin dynamics (MCMC) as the corrector. For VP/sub-VP, we drop the corrector in PC sampler since it only gives slightly better results~\cite{Song2021ScoreBasedGM}.
\begin{table*}[htb]
\begin{center}
\caption{FID scores versus $z_{max}$ on PFGM w/ DDPM++}
\label{table:fid-zmax}
\begin{tabular}{c c c c c c c c}
		\toprule
		\textbf{$\bm{z_{max}}$} &  $30$ &$40$ & $50$\\
		\midrule
        \textbf{FID score} & {2.49} & {2.48} &{2.48}\\
        \bottomrule
\end{tabular}
\end{center}
\end{table*}
\subsubsection{Exponential Decay on $z$ Dimension}
\label{app:exp}
Recall that in Section~\ref{sec:sampling}, we replace the vanilla backward ODE with a new ODE anchored by $z$:
\begin{align*}
    d(\rvx,z) = (\frac{d \rvx}{dt}\frac{d t}{dz}dz,dz) = (\rvv(\tilde{\rvx})_\rvx\rvv(\tilde{\rvx})_z^{-1}, 1) dz
\end{align*}
We further use the change-of-variable formula, \ie $t'=-\log z$, to achieve exponential decay on the $z$ dimension:
\begin{align*}
{d(\rvx,z)} &= (\rvv(\tilde{\rvx})_\rvx\rvv(\tilde{\rvx})_z^{-1}z, z){dt'} 
\end{align*}

The trajectories of the two ODEs above are the same when $dt, dt' \to 0$. We compare the NFE and the sample quality of different ODEs in Table~\ref{table:exp}. We measure the NFE/FID of generating 50000 CIFAR-10 samples with the RK45 method in Scipy package~\cite{Virtanen2020SciPy1F}. The batch size is set to $1000$. All the numbers are produced on a single NVIDIA A100 GPU. We observe that the ODE with the anchor variable $t'$ not only accelerates the vanilla by 2 times, but has almost no harm to the sample quality measured by FID score.
\begin{table*}[htb]
\begin{center}
\caption{NFE and FID scores of different backward ODEs in PFGM}
\label{table:exp}
\begin{tabular}{c c c c c c c c}
		\toprule
		\textbf{Algorithm} &  $d(\rvx,z)/dz$ &$d(\rvx,z)/dt'$\\
		\midrule
        \textbf{NFE} &  $242$ &$104$ \\
        \textbf{FID score} & 2.53 & 2.48 \\
        \bottomrule
\end{tabular}
\end{center}
\end{table*}

\subsubsection{Substitute the Predicted $z$ Direction with the Ground-truth}
\label{app:sub}
Since the neural network cannot perfectly learn the ground-truth $z$ direction, we replace the predicted $f_\theta(x)_z$ with the ground-truth direction when $z$ is small. More specifically, given $\tilde{\rvx} = (\rvx, z) \in \mathbb{R}^{N+1}$, recall that the empirical field is $    \hat{\mat{E}}(\tilde{\mat{x}}) = c(\tilde{\mat{x}})\sum_{i=1}^n \frac{\tilde{\mat{x}}-\tilde{\mat{x}}_i}{||\tilde{\mat{x}}-\tilde{\mat{x}}_i||^{N+1}}$ where $c(\tilde{\mat{x}})=1/\sum_{i=1}^n \frac{1}{||\tilde{\mat{x}}-\tilde{\mat{x}}_i||^{N+1}} $. Hence we can rewrite the empirical field as 
\begin{align*}
    \hat{\mat{E}}(\tilde{\mat{x}}) =\sum_{i=1}^n w(\tilde{\mat{x}},\tilde{\mat{x}}_i) ({\tilde{\mat{x}}-\tilde{\mat{x}}_i})
\end{align*}
where $\sum_{i=1}^n w(\tilde{\mat{x}},\tilde{\mat{x}}_i) = \sum_{i=1}^n \frac{\frac{1}{||\tilde{\mat{x}}-\tilde{\mat{x}}_i||^{N+1}}}{\sum_{j=1}^n\frac{1}{||\tilde{\mat{x}}-\tilde{\mat{x}}_j||^{N+1}}}=1$. Furthermore we have $\forall i, ({\tilde{\mat{x}}-\tilde{\mat{x}}_i})_z = z - 0 = z$. Together, the $z$ component in the empirical field is $\hat{\mat{E}}(\tilde{\mat{x}})_z =\sum_{i=1}^n w(\tilde{\mat{x}},\tilde{\mat{x}}_i) ({\tilde{\mat{x}}-\tilde{\mat{x}}_i})_z = z$.
The predicted normalized field~(on $\rvx$) is trained to approximate the normalized field~(on $\rvx$), \ie 
\begin{align*}
f_\theta(\tilde{\rvx})_\rvx &\approx -\sqrt{N}\hat{\mat{E}}(\tilde{\rvx})_\rvx/(\sqrt{\parallel \hat{\mat{E}}(\tilde{\rvx})_\rvx \parallel_2^2+ z^2}+\gamma)\\&\approx -\sqrt{N}\hat{\mat{E}}(\tilde{\rvx})_\rvx/(\sqrt{\parallel \hat{\mat{E}}(\tilde{\rvx})_\rvx \parallel_2^2}+\gamma)    
\end{align*}
The last approximation is due to $\parallel \hat{\mat{E}}(\tilde{\rvx})_\rvx \parallel_2 \gg z$. Solving for $\parallel \hat{\mat{E}}(\tilde{\rvx})_\rvx\parallel_2$, we get $
    \parallel \hat{\mat{E}}(\tilde{\rvx})_\rvx\parallel_2 \approx \frac{\gamma \parallel f_\theta(\tilde{\rvx})_\rvx \parallel_2/\sqrt{N}}{1 - \parallel f_\theta(\tilde{\rvx})_\rvx \parallel_2/\sqrt{N}}
$. Hence the $z$ component in the normalized field after substituting the ground-truth is $\hat{\mat{E}}(\tilde{\rvx})_z/(\sqrt{\parallel \hat{\mat{E}}(\tilde{\rvx})_\rvx\parallel_2^2+z^2}+\gamma)=z/(\sqrt{(\frac{\gamma \parallel f_\theta(\tilde{\rvx})_\rvx \parallel_2/\sqrt{N}}{1 - \parallel f_\theta(\tilde{\rvx})_\rvx \parallel_2/\sqrt{N}})^2+z^2} + \gamma)$. In our experiments, we therefore replace the original prediction $f_\theta(\tilde{\rvx})_z$ with $-\sqrt{N}z/(\sqrt{(\frac{\gamma \parallel f_\theta(\tilde{\rvx})_\rvx \parallel_2/\sqrt{N}}{1 - \parallel f_\theta(\tilde{\rvx})_\rvx \parallel_2/\sqrt{N}})^2+z^2} + \gamma)$ when $z< 5/1/0.1$ during the backward ODE sampling for CIFAR-10/CelebA $64^2$/LSUN bedroom $256^2$. 

Table~\ref{table:subz} reports the NFE and FID score w/o and w/ the above substitution. We observe that the usage of ground-truth $z$ direction in the near field accelerates the sampling speed.

\begin{table*}[htb]
\begin{center}
\caption{NFE and FID scores of w/ and w/o substitution}
\label{table:subz}
\begin{tabular}{c c c c}
		\toprule
		\textbf{Algorithm} &  w/o substitution &w/ substitution\\
		\midrule
        \textbf{NFE} &  $134$ &$104$ \\
        \textbf{FID score} & $2.48$ & $2.48$ \\
        \bottomrule
\end{tabular}
\end{center}
\end{table*}

\subsection{Evaluation}

We use FID~\cite{Heusel2017GANsTB} and Inception scores~\cite{Salimans2016ImprovedTF} to quantitatively measure the sample quality, and NFE~(number of evaluation steps) for the inference speed. {FID (Fréchet Inception Distance) score is the Fréchet distance between two multivariate Gaussians, whose means and covariances are estimated from the 2048-dimensional activations of the Inception-v3~\citep{Szegedy2016RethinkingTI} network for real and generated samples respectively.} Inception score is the exponential mutual information between the predicted labels of the Inception network and the images. We also report bits/dim for likelihood evaluation. It is computed by dividing the negative log-likelihood by the data dimension, \ie $\textrm{bits/dim} = -\log p_{\textrm{prior}}(\rvx)/N$.

For CIFAR-10, we compute the Fréchet distance between 50000 samples and the pre-computed statistics of CIFAR-10 dataset in \cite{Heusel2017GANsTB}. For CelebA $64 \times 64$, we follow the setting in \cite{Song2020ImprovedTF} where the distance is computed between 10000 samples and the test set. For model selection, we follow \cite{Song2020ImprovedTF} and pick the checkpoint with smallest FID every 50k iterations on 10k samples for computing all the scores.

\subsection{Effects of Step Size: FID versus NFE}

For preciseness, Table~\ref{table:fig5c} reports the exact numbers in \Figref{fig:adapt}. 
\begin{table*}[htb]
\begin{center}
\caption{The FID scores in \Figref{fig:adapt} of different methods and NFE.}
\label{table:fig5c}
\begin{tabular}{c c c c c}
		\toprule
		\textbf{Method / NFE} &  10 & 20 & 50 & 100\\
		\midrule
			\textbf{VP-ODE} & $192.36$& $72.25$ &$38.18$& $19.73$\\
			\textbf{DDIM} &$13.36$& $6.48$& $4.67$& $4.16$\\
        	\textbf{PFGM} &  $14.98$ & $6.46$& $3.48$ & $2.89$ \\
        \bottomrule
\end{tabular}
\end{center}
\end{table*}

Since in the ODE $ {d(\rvx,z)} = -(\rvv(\tilde{\rvx})_\rvx\rvv(\tilde{\rvx})_z^{-1}z, z){dt'}$ of PFGM, the $z$ variable is a function of $t'$~($z=e^{t'}$), we integrate the $z$ in the Euler method to reduce the discretization error. The vanilla update from time $t'_i$ to time $t'_{i+1}$ is ${(\rvx_{i+1},z_{i+1})} = (\rvx_{i},z_{i})-(\rvv(\tilde{\rvx}_i)_\rvx\rvv(\tilde{\rvx}_i)_{z_i}^{-1}z_i, z_i)(t'_{i+1}-t'_i)$, and the new update is ${(\rvx_{i+1},z_{i+1})} = (\rvx_{i},z_{i})-(\rvv(\tilde{\rvx}_i)_\rvx\rvv(\tilde{\rvx}_i)_{z_i}^{-1}\int_{t'_i}^{t'_{i+1}}z(t')dt', \int_{t'_i}^{t'_{i+1}} z(t')dt')$. We empirically observe that the new update scheme significantly improve the FID score.

\section{Failure of VE/VP-ODE on NCSNv2 backbone}
\label{app:failure}

In \Figref{fig:ve}, we demonstrate the trajectories of cleaner samples/noisier samples/noisier samples w/ corrector. We visualize these three groups in \Figref{fig:failure-a} and \Figref{fig:failure-b}. The noisier samples are marked with red boxes in \Figref{fig:failure-a} and the remaining images in \Figref{fig:failure-a} are cleaner samples. The samples within green boxes in \Figref{fig:failure-b} are noisier samples w/ corrector. Samples on the same spatial locations in the two figures are generated by identical initial latents. 

The Gaussian kernels in score-based models are $\gN(\rvx, \sigma(t)^2)$~(VE) and $\gN(\sqrt{1-\sigma(t)^2}\rvx,\sigma(t)^2)$~(VP) \cite{Song2021ScoreBasedGM}. When $\sigma(t)$ is large, the norms of perturbed samples are approximately $\sqrt{N}\sigma(t)$. The backward ODE could break down if the trajectories diverge from the norm-$\sigma(t)$ relation, as shown by the noisier samples' trajectories in \Figref{fig:ve}. In contrast, the norm distributions of PFGM is approximately $p(\parallel\rvx\parallel)\propto {\parallel \rvx \parallel_2^{N-2}}/{(\parallel \rvx \parallel_2^2+z^2)^{\frac{N}{2}}}$ when $z$ is large~(see deviation for $p_{\textrm{prior}}$ in Appendix~\ref{app:prior_distribution}), which have a wider span for high density region~(see \Figref{fig:compare_prior}). The weak correlation between norm and $z$ makes PFGM more robust on the lighter NCSNv2 backbone.

\begin{figure*}
    \centering
    \subfigure[Samples from VE-ODE (Euler)]{    \label{fig:failure-a}\includegraphics[width=0.4\textwidth]{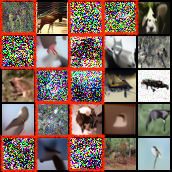}}\hspace{10pt}
    \subfigure[Samples from VE-ODE (Euler w/ corrector)]{\label{fig:failure-b}\includegraphics[width=0.4\textwidth]{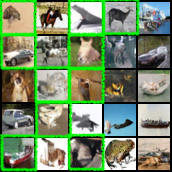}}
    \caption{\textbf{(a)} Samples from VE-ODE (Euler w/o corrector). We highlight the noisier images with red boxes. The rest are cleaner images. \textbf{(b)} Samples from VE-ODE (Euler w/ corrector). We mark the noisier samples after correction with green boxes.}
\end{figure*}

\section{Extra Experiments}
\label{app:extra-exp}
\subsection{LSUN Bedroom $256\times 256$}
\label{app:exp-lsun}
We report the FID scores and NFEs for LSUN bedroom dataset in Table~\ref{tab:lsun}. We adopt the code base of \cite{Song2021ScoreBasedGM} in our experiments. In \cite{Song2021ScoreBasedGM}, they experimented on the LSUN bedroom 256$\times$ 256 dataset only on VE-SDE using a deeper NCSN++ backbone. In our DDPM++ architecture, we directly borrow the configuration of channels from the NCSN++ architecture~\cite{Song2021ScoreBasedGM} in each residual block (PFGM w/ NCSN++ channel). We further change $z_{max}$ to $100$, as it empirically gives better sample quality.

We also evaluate the performance when using the configuration of channels in the DDPM~\cite{Ho2020DenoisingDP} architecture (PFGM w/ DDPM channel). We use the RK45~\citep{Dormand1980AFO} solver in the Scipy library~\citep{Virtanen2020SciPy1F} for PFGM sampling. We report the FID score using the evaluation protocol in \cite{dhariwal2021diffusion}.
\begin{table}[htbp]
\begin{center}
\caption{FID/NFE on LSUN bedroom $256\times 256$}\label{tab:lsun}
\begin{tabular}{l c c}
\toprule
      &FID $\downarrow$ & NFE $\downarrow$\\
    \midrule
    StyleGAN~\cite{karras2019style} & $\bm{2.65}$ & $\bm{1}$ \\
    DDPM~\cite{Ho2020DenoisingDP} &$6.86$ & $1000$\\
    VE-SDE~\cite{Song2021ScoreBasedGM} & $11.75$  &  $2000$\\
    \midrule
    PFGM w/ NCSN++ channel &  ${17.01}$  &  $134$\\
    PFGM w/ DDPM channel&   ${13.66}$ & ${122}$ \\
     \bottomrule
\end{tabular}
\end{center}
\end{table}

Table~\ref{tab:lsun} shows that PFGM has comparable performance with VE-SDE when using DDPM channel, while achieving around 15$\times$ acceleration. We observe that PFGM achieves a better FID score using the similar configuration in the DDPM model, and converges faster — 150k over the total 2.4M training iterations suggested in \cite{Song2021ScoreBasedGM}. Remarkably, the VE-ODE baseline — the method most comparable to ours — only produces noisy samples on this dataset. It suggests that PFGM is able to scale up to high resolution images when using advanced architectures. 
We also compare with the number reported in \cite{Ho2020DenoisingDP} using similar architecture. Note that DDPM requires 1000 NFE during sampling, and doesn’t possess invertibility compared to flow models.

\subsection{Results on NCSNv2 Architecture}

In this section, we demonstrate the image generation on CIFAR-10 and CelebA $64 \times 64$, using NCSNv2 architecture~\cite{Song2020ImprovedTF}, which is the predecessor of NCSN++ and DDPM++~\cite{Song2021ScoreBasedGM} and has smaller capacity. Since the VE/VP-ODE has poor performance~(FID greater than 90), with the RK45 solver, we also apply the forward Euler method~(\textbf{Euler}) with fixed number of steps. We explicitly name the sampler, with forward Euler method as predictor and Langevin dynamics as corrector, as \textbf{Euler w/ corrector}. For Euler w/ corrector in VE/VP-ODE, we use the probability flow ODE (reverse-time ODE) as the predictor and the Langevin dynamics (MCMC) as the corrector. We borrow all the hyper-parameters from \cite{Song2021ScoreBasedGM} except for the signal-to-noise ratio. We empirically observe the new configurations in Table~\ref{table:s2n} give better results on the NCSNv2 architecture.

To accommodate the extra dimension $z$ on NCSNv2, we concatenate the image with an additional constant channel with value $z$ and thus the first convolution layer takes in four input channels. We also add an additional output channel to the final convolution layer and take the global average pooling of this channel to obtain the direction on $z$.

\begin{table}[htbp]
\begin{center}
\caption{Signal-to-noise ratio of different dataset-method pairs}
\label{table:s2n}
\begin{tabular}{c c c c c c c c}
		\toprule
		\textbf{Dataset-Method} &  CIFAR-10 - VE &CIFAR-10 - VP & CelebA - VE & CelebA - VP\\
		\midrule
        \textbf{signal-to-noise ratio} &  $0.16$ &$0.27$  &$0.12$ & $0.27$ \\
        \bottomrule
\end{tabular}
\end{center}
\end{table}

\label{app:ncsnv2}
\subsubsection{CIFAR-10}

Table~\ref{tab:cifar-ncsnv2} reports the image quality measured by Inception/FID scores and the inference speed measured by NFE on CIFAR-10, using a weaker architecture NCSNv2~\cite{Song2020ImprovedTF}. We show that PFGM with the RK45 solver has competitive FID/Inception scores with the Langevin dynamics, which was the best model on the NCSNv2 architecture before, and requires $10\times$ less NFE. In addition, PFGM performs better than all the other ODE samplers. Our method is more tolerant of sampling error. Among the compared ODEs, our backward ODE~(\Eqref{eq:backode}) is the only one that successfully generates high quality samples while the VE/VP-ODE fail w/o the Langevin dynamics corrector. The backward ODE still beats the baselines w/ corrector. 

\begin{table}[htbp]
    \small
    \centering
    \caption{CIFAR-10 sample quality~(FID, Inception) and number of function evaluation~(NFE). All the methods below the \textit{NCSNv2 backbone} separator use the NCSNv2~\citep{Song2020ImprovedTF} network architecture as the backbone.}
    \begin{tabular}{l c c c}
    \toprule
         & Inception $\uparrow$  &FID $\downarrow$ & NFE $\downarrow$\\
         \midrule
         PixelCNN~\citep{Oord2016ConditionalIG} & $4.60$ & $65.93$ & $1024$\\
        IGEBM~\citep{Du2019ImplicitGA} & $6.02$ & $40.58$ & $60$\\
        WGAN-GP~\citep{Gulrajani2017ImprovedTO} & $7.86 \pm .07$ & $36.4$& $1$\\
        SNGAN~\citep{Miyato2018SpectralNF} & $8.22\pm .05$ & $21.7$ & $1$\\
        NCSN~\citep{Song2019GenerativeMB} & $\bm{8.87 \pm .12}$ & $25.32$ & $1001$\\
        \midrule
        \textit{\textbf{NCSNv2 backbone}}\\
        \midrule
        Langevin dynamics~\citep{Song2020ImprovedTF} & $8.40 \pm .07$ & $\bm{10.87}$ & $1161$\\
        VE-SDE~\citep{Song2021ScoreBasedGM} & $8.23 \pm .02$& $10.94$&$1000$\\
        VP-SDE~\citep{Song2021ScoreBasedGM} & $6.85 \pm .01$& $44.05$&$1000$\\
        \midrule
        VE-ODE~(Euler w/ corrector) & $8.05 \pm .03$ &  $11.33$ &  $1000$\\
        VP-ODE~(Euler w/ corrector) & $7.33\pm .07$ &  $37.74$ & $1000$ \\
        PFGM~(Euler)& $8.00\pm .09$  &  $11.78$ & $200$ \\
        PFGM~(RK45)& $8.30 \pm .05 $ &  $11.22$ & $\bm{118}$ \\
         \bottomrule
    \end{tabular}
    \label{tab:cifar-ncsnv2}
\end{table}

\subsubsection{CelebA}

In Table~\ref{tab:celeba}, we report the quality of images generated by models trained on CelebA $64 \times 64$, as measured by the FID scores, and the sampling speed, as measured
by NFE. We use this dataset as our preliminary experiments hence we only apply NCSNv2~\cite{Song2020ImprovedTF} for different baselines. As shown in Table~\ref{tab:celeba}, PFGM achieves best FID scores than all the baselines on CelebA dataset, while accelerating the inference speed around $20\times$. Remarkably, PFGM outperforms the Langevin dynamics and reverse-time SDE samplers, which are usually considered better than their deterministic counterparts. 

\paragraph{Remark: On the FID scores on CelebA $64 \times 64$} {One interesting observation is that the samples of PFGM (RK45)~(\Figref{pfgm_ncsnv2_celeba}) contain more obvious artifacts than Langevin dynamics~(\Figref{fig:ncsnv2_celeba}), although PFGM has a lower FID score on the same architecture. We hypothesize that the diversity of samples has larger effects on the FID scores than the artifacts. As shown in \Figref{fig:ncsnv2_celeba} and \Figref{pfgm_ncsnv2_celeba}, samples generated by PFGM have more diverse background colors and hair colors than samples of Langevin dynamics. In addition, we evaluate the performance of PFGM on the DDPM++ architecture. We show that the FID score can be further reduced to $3.68$ using the more advanced DDPM++ architecture. By examining the generated samples of PFGM on DDPM++~(\Figref{fig:extend-celeba}), we observe that the samples are diverse and exhibit fewer artifacts than PFGM on NCSNv2. It suggests that by using a more powerful architecture like DDPM++, we can remove the artifacts while retaining the diversity in PFGM.}

\begin{figure}[htbp]
    \centering
    \subfigure[Langevin dynamics~\cite{Song2019GenerativeMB}]{\label{fig:ncsnv2_celeba}\includegraphics[width=0.49\textwidth]{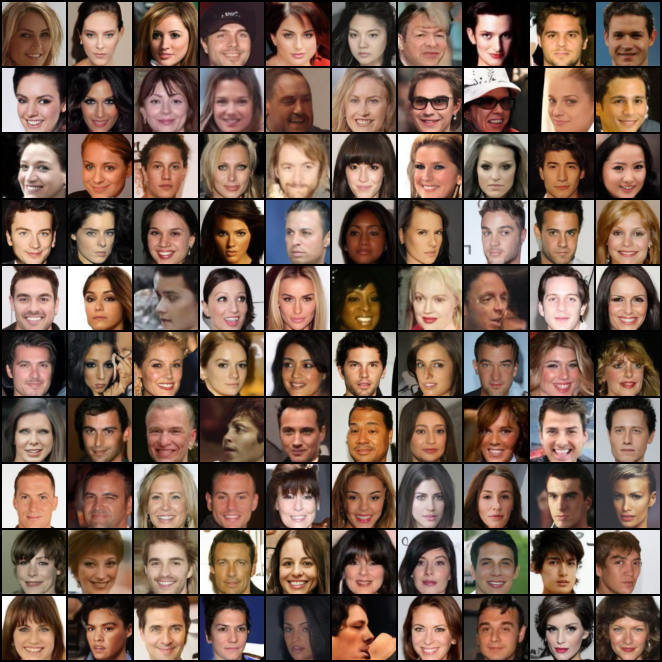}}\hfill
    \subfigure[PFGM~(RK45)]{\label{pfgm_ncsnv2_celeba}\includegraphics[width=0.49\textwidth]{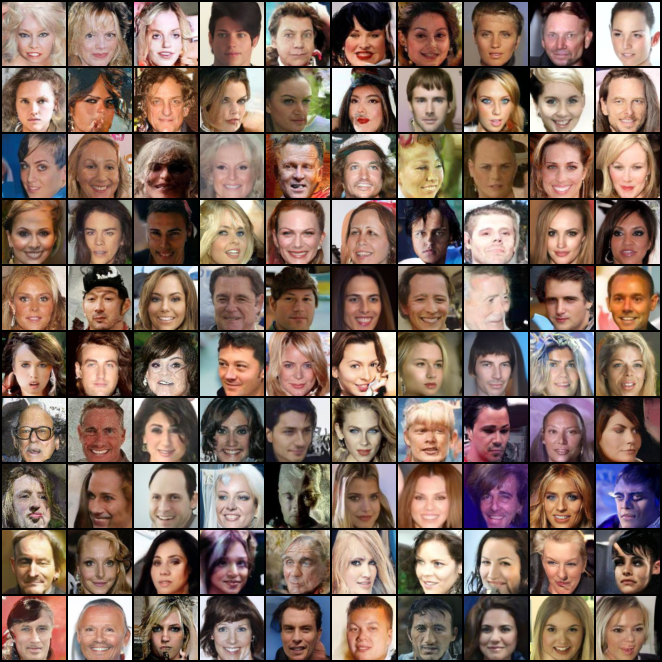}}\hfill
    \caption{Uncurated samples from Langevin dynamics~\cite{Song2019GenerativeMB} and PFGM~(RK45), both using the NCSNv2 architecture.}
    \label{fig:celeba_examine}
\end{figure}

\begin{table}[htbp]
\begin{center}
\caption{FID/NFE on CelebA $64 \times 64$}\label{tab:celeba}
\begin{tabular}{l c c}
\toprule
      &FID $\downarrow$ & NFE $\downarrow$\\
     \midrule
     NCSN~\citep{Song2019GenerativeMB} & $26.89$  & $1001$\\
     \midrule
    \textit{\textbf{NCSNv2 backbone}}\\
    \midrule
    Langevin dynamics~\citep{Song2020ImprovedTF} & $10
    .23$  & $2501$\\
    VE-SDE~\citep{Song2021ScoreBasedGM} & $8.15$ & $1000$\\
    VP-SDE~\citep{Song2021ScoreBasedGM} &$34.52$ & $1000$\\
    \midrule
    VE-ODE~(Euler w/ corrector)  & $8.30$  &  $200$\\
    VP-ODE~(Euler w/ corrector)  &  $41.81$ &  $200$\\
    PFGM~(Euler) &  ${7.85}$  &  $\bm{100}$\\
    PFGM~(RK45)&   ${7.93}$ & ${110}$ \\
    \midrule
    \textit{\textbf{DDPM++ backbone}}\\
    \midrule
     PFGM~(RK45) & $\bm{3.68}$ & $110$\\
     \bottomrule
\end{tabular}
\end{center}
\end{table}

\subsection{{Wall-clock Sampling Time}}

The main bottleneck of sampling time in each ODE step is the function evaluation of the neural network. Hence, for different ODE equations using similar neural network architectures, their inference times per ODE step are approximately the same.

We implement PFGM on the NCSNv2~\cite{Song2020ImprovedTF}, DDPM++~\cite{Song2021ScoreBasedGM}, and DDPM++ deep~\cite{Song2021ScoreBasedGM} architectures, with sight modifications to account for the extra dimension $z$. In Table~\ref{table:wall-clock}, we report the sampling time per ODE step method with the DDPM++ backbone, as well as the total sampling time. We measure the sampling time of generating a batch of 1000 images on CIFAR-10. We compare PFGM, VP/sub-VP ODEs using the RK45 solver. As a reference, we also report the results of VP-SDE using the predictor-corrector sampler~\cite{Song2021ScoreBasedGM}. All the numbers are produced on a single NVIDIA A100 GPU.

\begin{table*}[htb]
\begin{center}
\caption{Wall-clock sampling time~(second)}
\label{table:wall-clock}
\begin{tabular}{c c c c c c c c}
		\toprule
		\textbf{Method} &  PFGM &VP-ODE& sub-VP-ODE& VP-SDE (PC)\\
		\midrule
        \textbf{NFE} & 110 & 134 &146 & 1000\\
		\midrule
        \textbf{Wall-clock time per step} & 0.526 & 0.522 &0.520 & 0.491\\
		\midrule
        \textbf{Total wall-clock time} & 57.81 & 69.97 &75.92&490.65\\
        \bottomrule
\end{tabular}
\end{center}
\end{table*}

As expected, ODEs using similar architectures and the same solver have nearly the same wall-clock time per ODE step. The table also shows that PFGM achieves the smallest total wall-clock sampling time. 

\subsection{Image Interpolations}
\label{app:interpolate}

The invertibility of the ODE in PFGM enables the interpolations between pairs of images. As shown in \Figref{fig:interpolation}, we adopt the spherical interpolations between the latent representations of the images in the first and last column.

\begin{figure}[htbp]
    \centering
    \includegraphics[width=0.8\textwidth]{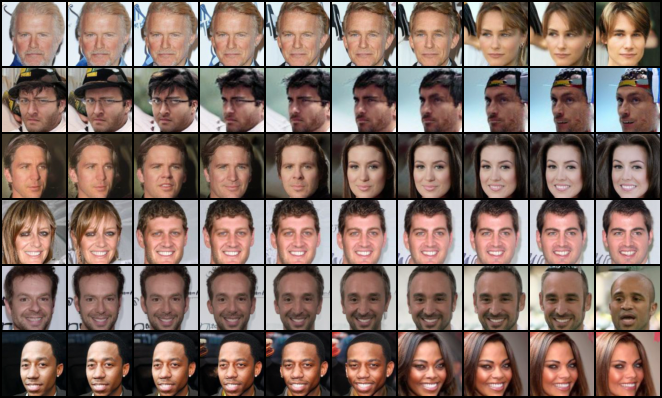}
     \caption{Interpolation on CelebA $64\times 64$ by PFGM}\label{fig:interpolation}
\end{figure}

\subsection{Temperature Scaling}
\label{app:temp}

To demonstrate more utilities of the meaningful latent space of PFGM, we include the experiments of temperature scaling on CelebA $64 \times 64$ dataset. We linearly increase the norm of latent codes from $1000$ to $6000$ to get the samples in \Figref{fig:temperature}.
\begin{figure}[htbp]
    \centering
    \includegraphics[width=0.8\textwidth]{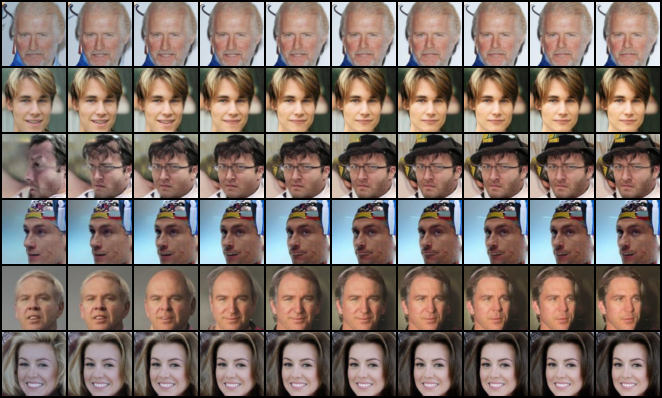}
    \caption{Temperature scaling on CelebA $64 \times 64$ by PFGM}
    \label{fig:temperature}
\end{figure}

\section{Extended Examples}
\label{app:samples}
We provide extended samples from PFGM on CIFAR-10~(\Figref{fig:extend-cifar}), CelebA $64 \times 64$~(\Figref{fig:extend-celeba}) and {LSUN bedroom $256 \times 256$~(\Figref{fig:extend-bedroom-ddpm}) datasets.}

\section{Physical Interpretation of the ODEs in PFGM}\label{app:honey}

{In Section~\ref{section:bg}, in order to move the particles along the electric lines, we set the time derivative of $x$ to the Poisson field $\mat{E}(x)$:}
\begin{equation}\label{eq:ode2}
    [q=1, {\rm forward\  ODE}]\quad \frac{d\mat{x}}{dt} = \mat{E}(\mat{x}), \quad [q=-1, {\rm backward\  ODE}]\quad \frac{d\mat{x}}{dt} = -\mat{E}(\mat{x})
\end{equation}
{{We give the interpretation of the ODEs from a physical perspective. Newton's law implies that the external force is proportional to the acceleration of the particle. In the overdamped limit, e.g., when the particle is moving in honey, the external force is instead proportional to the velocity of the particle, making the equation of motion a first-order ODE.} 
Denoting the viscosity of the fluid as $\gamma$, the dynamics of the particle under the influence of the electric field of the source $\rho(\mat{x})$ is}
$$
    m\frac{d^2\mat{x}}{dt^2}=-\gamma\frac{d\mat{x}}{dt} + q\mat{E}(\mat{x}),
$$
{which has an overdamped limit  $\frac{d\mat{x}}{dt}=q\mat{E}(\mat{x})$ when we set $t\to\gamma t$ and $\gamma\to\infty$. In this case, a particle with mass $m=1$ and charge $q=1$ would follow the electric field with velocity equal to $\mat{E}$, justifying Eq.~(\ref{eq:ode2}).}

\section{Limitations and Future Directions}
\label{app:limit}

In Section~\ref{sec:learning} we discuss the training paradigm of PFGM, including the normalized Poisson field and the discretized forward ODE. There are several potential improvements. First, the normalized field on mini-batch is biased. In this paper, we directly alleviate the bias by using a larger training batch. However, it does not solve the problem fundamentally. Some potential directions are incorporating more physical tools: we can exploit renormalization to make the Poisson field well-behaved in near fields. Another possibility is to replace a point charge with a quantum particle, whose position uncertainty fills the empty space among nearest neighbor data samples and makes the data manifold smoother.

\section{Potential Social Impact}
\label{app:impact}

Generative models is a rapidly growing field of study with far-reaching implications for science and society. 
Our work proposes a new generative model that allows for high-quality samples, quick inference and adaptivity. Many downstream applications benefit from our PFGM models' powerful expressive capabilities, particularly those that need fast inference speed and good sample quality at the same time. The usage of these models might have both positive and negative outcomes depending on the downstream use case. 
For example, PFGM can be incorporated in producing good image/audio samples by the fast backward ODE. This, on the other hand, promotes \textit{deepfake} technology and leads to social scams. Generative models are also brittle and susceptible to backdoor adversarial attacks on publicly available training data, causing unanticipated failure.
Addressing the above concerns requires further research in providing robustness guarantees for generative models as well as close collaborations with researchers in socio-technical disciplines.

\begin{figure*}
    \centering
    \includegraphics[width=0.9\textwidth]{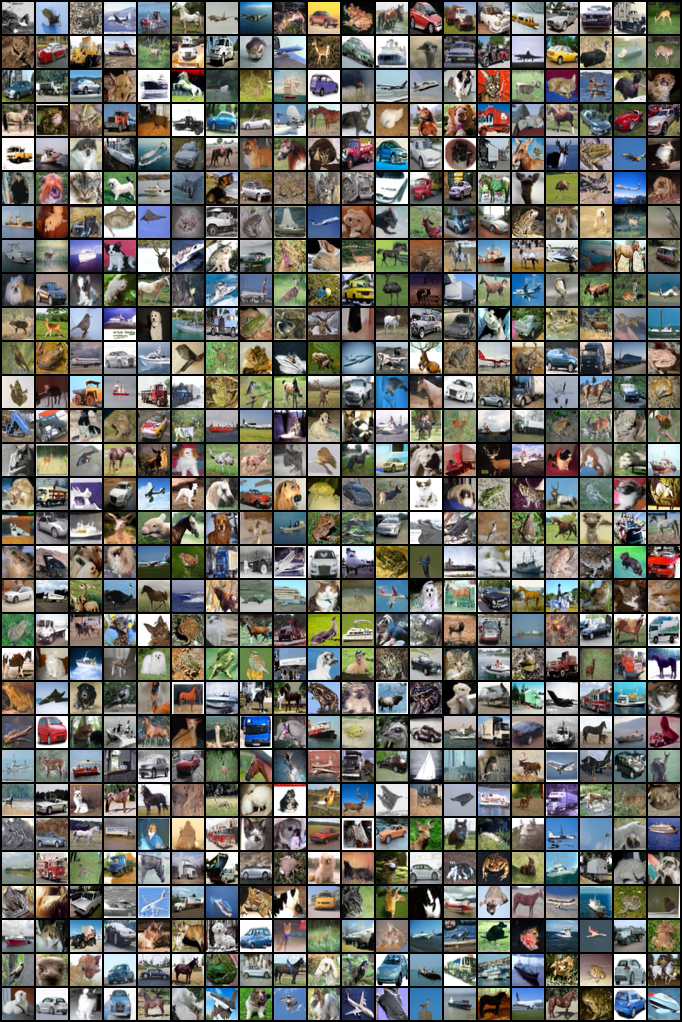}
    \caption{CIFAR-10 samples from PFGM (RK45)}
    \label{fig:extend-cifar}
\end{figure*}

\begin{figure*}
    \centering
    \includegraphics[width=0.9\textwidth]{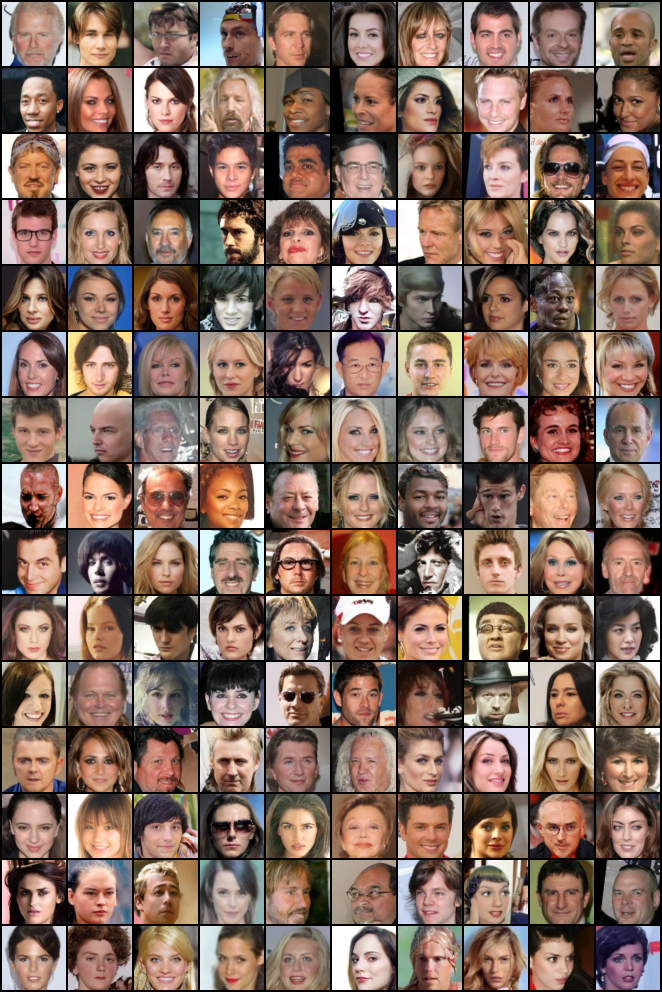}
    \caption{CelebA $64 \times 64$ samples from PFGM (RK45, NCSNv2 architecture)}
    \label{fig:extend-celeba}
\end{figure*}


\begin{figure*}
    \centering
    \includegraphics[width=0.9\textwidth]{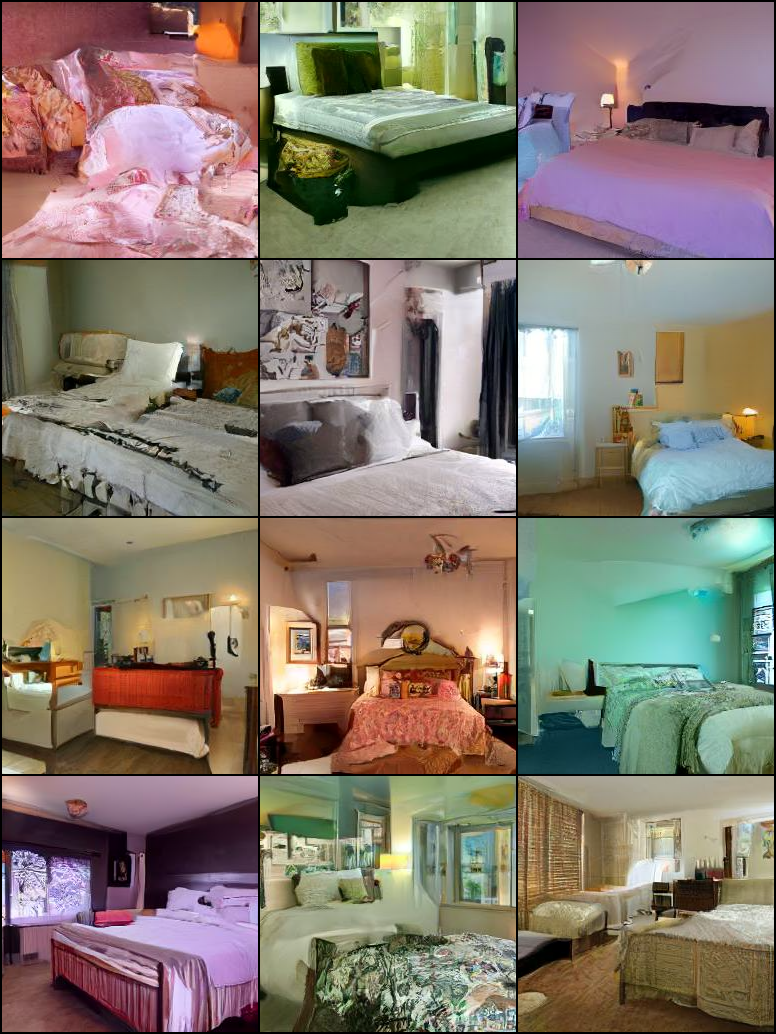}
    \caption{LSUN bedroom $256 \times 256$ samples from PFGM (RK45) using DDPM channel configuration.}
    \label{fig:extend-bedroom-ddpm}
\end{figure*}
\clearpage


\end{document}